\newcommand{\captionfonts}{\normalsize}
\long\def\@makecaption#1#2{%
  \vskip\abovecaptionskip
  \sbox\@tempboxa{{\captionfonts #1: #2}}%
  \ifdim \wd\@tempboxa >\hsize
    {\captionfonts #1: #2\par}
  \else
    \hbox to\hsize{\hfil\box\@tempboxa\hfil}%
  \fi
  \vskip\belowcaptionskip}
\renewcommand{\thefootnote}{\normalsize \arabic{footnote}}      
\newcommand{\tsigma}{\tilde \sigma}
\newcommand{\hLcal}{\hat \Lcal}
\newcommand{\hL}{\hat L}
\newcommand{\hf}{\hat f}
\newcommand{\htheta}{\hat \theta}
\newcommand{\bg}{\bar g}
\newcommand{\hg}{\hat g}
\newcommand{\baf}{\bar f}
\newcommand{\bW}{\bar W}
\newcommand{\uc}{\textit{\underbar c}}
\newcommand{\bc}{\bar c}
\newcommand{\bepsilon}{\bar \epsilon}
\newcommand{\tGcal}{\tilde \Gcal}
\newcommand{\tightoverset}[2]{%
  \mathop{#2}\limits^{\vbox to -.5ex{\kern-1.45ex\hbox{{\scriptsize$#1$}}\vss}}}
\newenvironment{myquote}%
  {\list{}{\leftmargin=0.25in\rightmargin=0in}\item[]}%
  {\endlist}
\newcommand{\rev}[1]{#1}
\newcommand\blfootnote[1]{%
  \begingroup
  \renewcommand\thefootnote{}\footnote{#1}%
  \addtocounter{footnote}{-1}%
  \endgroup
}
\begin{document}

\ \\

{\bf \large \noindent Understanding Dynamics of Nonlinear Representation  Learning and Its Application}

\vspace{10pt}

\ \\
{\bf  Kenji Kawaguchi} \\ 
kkawaguchi@fas.harvard.edu \\
Harvard University\\
\ \\
{\bf  Linjun Zhang} \\ 
linjun.zhang@rutgers.edu  \\
Rutgers University \\
\ \\
{\bf   Zhun Deng} \\ 
zhundeng@g.harvard.edu \\
Harvard University\\
%

\thispagestyle{empty}
\markboth{}{NC instructions}
\blfootnote{\hspace{-18pt}\noindent Neural computation, volume 34, pages 991-1018 (2022) \\  \url{https://doi.org/10.1162/neco_a_01483}}

\ \vspace{-20mm}\\
%
\begin{center} {\bf Abstract} \end{center}
Representations of the world environment play a crucial role in artificial intelligence. It is often inefficient to conduct reasoning and inference directly in the space of raw sensory representations, such as pixel values of images. Representation learning allows us to automatically discover suitable representations from raw sensory data. For example, given raw sensory data, a deep neural network learns nonlinear representations at its hidden layers, which are subsequently used for classification (or regression) at its output layer. This happens implicitly during training through minimizing a supervised or unsupervised loss in  common practical regimes of deep learning, unlike the neural tangent kernel (NTK) regime. In this paper, we study the dynamics of such implicit nonlinear representation learning, which is beyond the NTK\ regime. We identify a pair of a new assumption and a novel condition, called the \textit{common model structure assumption} and the \textit{data-architecture alignment condition}. Under the common model structure assumption, the data-architecture alignment condition is shown to be sufficient for the global convergence and necessary for the global optimality. \rev{Moreover, our theory explains  how and when increasing the network size does and does not improve the training behaviors in the practical regime}. Our results provide practical guidance for designing a model structure: e.g., the common model structure assumption can be used as a justification for using a particular model structure instead of others. As an application, we then derive a new training framework, which satisfies the data-architecture alignment condition without assuming it by automatically modifying any given training algorithm dependently on each data and architecture. Given a standard training algorithm, the framework running its modified version is empirically shown to maintain competitive (practical) test performances while providing global convergence guarantees for deep residual neural networks with convolutions, skip connections, and batch normalization with standard benchmark datasets, including MNIST, CIFAR-10, CIFAR-100, Semeion, KMNIST and SVHN.

\section{Introduction}
\rev{LeCun, Bengio, and Hinton  \citep{lecun2015deep} described deep learning  as  one of hierarchical nonlinear representation learning approaches:\vspace{-8pt}
\begin{myquote}
Deep-learning methods are representation-learning methods with multiple levels of representation, obtained by composing simple but non-linear modules that each transform the representation at one level (starting with the raw input) into a representation at a higher, slightly more abstract level. (p. 436)
\end{myquote}}

\rev{In applications such as computer vision and natural language processing, the success of deep learning can be attributed to  its ability to learn hierarchical nonlinear representations  by automatically changing nonlinear features and  kernels during training based on the given data. This is in contrast to the neural tangent kernel (NTK) regime, extremely wide neural networks, and  classical machine-learning methods, where representations or equivalently  nonlinear features and  kernels  are (approximately) fixed during training.}

Deep learning in practical regimes, which has  the ability to learn nonlinear representation  \citep{bengio2013representation}, has had a profound impact in many areas, including  object recognition in computer vision \citep{rifai2011manifold,hinton2006fast,bengio2007greedy,ciregan2012multi,krizhevsky2012imagenet}, style transfer \citep{gatys2016image,luan2017deep}, image super-resolution \citep{dong2014learning}, speech recognition \citep{dahl2010phone,deng2010binary,seide2011conversational,mohamed2011acoustic,dahl2011context,hinton2012deep}, machine translation \citep{schwenk2012large,le2012structured}, paraphrase detection \citep{socher2011dynamic}, word sense disambiguation \citep{bordes2012joint},  and sentiment analysis \citep{glorot2011domain,socher2011semi}. However, we do not yet know what is the precise condition that  makes  deep learning   tractable  in the practical  regime of representation learning. 

To initiate  a study towards such a condition, we consider the following problem setup that covers deep learning in the practical regime and other  nonlinear  representation learning methods in general. We are given a training dataset $((x_i,y_i))_{i=1}^n$ of  $n$ samples where $x_i \in \Xcal \subseteq  \RR^{m_x}$ and $y_i \in\Ycal\subseteq  \RR^{m_y}$ are the $i$-th input and the $i$-th target respectively. We would like to learn a predictor (or hypothesis) from a parametric family $\Hcal=\{f(\cdot,\theta): \RR^{m_x}  \rightarrow \RR^{ m_y} \mid \theta \in \RR^d\}$  by minimizing the  objective $\Lcal$: 
\begin{align}
\Lcal(\theta)= \frac{1}{n} \sum_{i=1}^n \ell(f(x_{i},\theta),y_{i}), 
\end{align} 
where $\ell: \RR^{m_{y}} \times\Ycal \rightarrow \RR_{\ge 0}$ is the loss function  that measures the discrepancy between the prediction $f(x_{i},\theta)$ and the target $y_{i}$ for each sample. In this paper, it is allowed to let $y_{i}=x_i$ (for $i=1,\dots,n$) to include the setting of unsupervised learning. The function $f$ is also allowed to represent a wide range of machine learning models. \rev{For a (deep) neural network, $f(x_{i},\theta)$ represents the \textit{pre-activation} output of the last layer of the network, and the parameter vector $\theta \in \RR^d$ contains all the trainable parameters, including  weights and bias terms of all layers.} 

For example, one of the simplest models is the linear model in the form of  $f(x,\theta)=\phi(x)\T \theta$ where $\phi:\Xcal \rightarrow\RR^d$ is a fixed function and $\phi(x) $ is a nonlinear \textit{representation} of input data $x$. This is a classical machine learning model where much of the effort goes into the design of the hand-crafted feature map $\phi$ via feature engineering \citep{turner1999conceptual,zheng2018feature}. In this linear model, we do not learn the representation $\phi(x) $ because the feature map $\phi$ is fixed without dependence on  the model parameter  $\theta$ that is optimized with  the dataset  $((x_i,y_i))_{i=1}^n$.

 Similarly to many definitions in  mathematics, where an intuitive notion in a special case  is formalized to a definition for a more general case, we now abstract and generalize this intuitive notion  of the \textit{representation} $\phi(x)$ of the linear model to that of all differentiable models as follows:       
\begin{definition} \label{def:1}
Given any  $x\in\Xcal $ and  differentiable function $f $, we define $\frac{\partial f(x,\theta)}{\partial\theta}$ to be  the \textit{gradient representation of the data $x$ under the model $f$ at $\theta$}. 
\end{definition}
This definition  recovers  
the  standard representation $\phi(x) $ in the linear model as $\frac{\partial f(x,\theta)}{\partial\theta}=\frac{\partial\phi(x)\T \theta}{\partial\theta}=\phi(x)$ and is applicable to all differentiable nonlinear models in representation learning. Moreover, this definition captures the key challenge of understanding the dynamics of nonlinear representation learning well, as illustrated below. Using the notation of $\frac{d\theta^{t}}{d t} =\Delta^{t}$, the dynamics of the model $f(x,\theta^{t} )$ over the time $t$ can be written by  
\begin{align}
\quad \frac{d}{d t} f(x,\theta^{t} )=\frac{\partial f(x,\theta^{t})}{\partial\theta^{t}} \frac{d\theta^{t}}{d t}=\frac{\partial f(x,\theta^{t})}{\partial\theta^{t}} \Delta^{t}.
\end{align}
Here, we  can see that the dynamics are \textit{linear} in $\Delta^{t}$ if  there is no  gradient representation learning  as $\frac{\partial f(x,\theta^{t})}{\partial\theta^{t}}\approx\frac{\partial f(x,\theta^{0})}{\partial\theta^{0}}$, which  is indeed the case for  the neural tangent kernel (NTK) regime and  extremely wide neural networks. However,  with   representation learning in the practical  regime, the gradient representation  $\frac{\partial f(x,\theta^{t})}{\partial\theta^{t}}$   changes depending on $t$ (and $\Delta^t$), resulting in the dynamics that are \textit{nonlinear} in $\Delta^{t}$. Therefore,  the definition of the gradient representation
can distinguish fundamentally different dynamics in machine learning.

In this paper, we initiate the  study of the dynamics of learning   gradient representation that are nonlinear in $\Delta^{t}$.  That is, we focus on the regime where  the gradient representation $\frac{\partial f(x,\theta^{t})}{\partial\theta^{t}}$ at the end of training time $t$ differs  greatly from the initial representation $\frac{\partial f(x,\theta^{0})}{\partial\theta^{0}}$, unlike  the NTK regime where  we have  $\frac{\partial f(x,\theta^{t})}{\partial\theta^{t}}\approx\frac{\partial f(x,\theta^{0})}{\partial\theta^{0}}$. This practical regime of representation learning was studied in the past   for the case where the function $\phi(x) \mapsto f(x,\theta)$ is affine for some fixed feature map $\phi$ \citep{saxe2013exact, kawaguchi2016deep,  laurent2018deep,bartlett2019gradient,zou2020global,kawaguchi2021on,Xu2021gnnopt}. Unlike any previous studies, we focus on the problem setting where the function $\phi(x) \mapsto f(x,\theta)$ is  nonlinear and non-affine, with the effect of  nonlinear (gradient) representation learning having  $\frac{\partial f(x,\theta^{t})}{\partial\theta^{t}} \not\approx \frac{\partial f(x,\theta^{0})}{\partial\theta^{0}}$. The results of this paper avoid the curse of dimensionality by studying the global convergence of the gradient-based dynamics instead of the dynamics of global optimization \citep{kawaguchi2016global} and Bayesian optimization \citep{kawaguchiNIPS2015}. Importantly, we do not require any wide layer or large input dimension throughout this paper. Our main contributions are summarized as follows: 
\begin{enumerate}
\item 
In Section \ref{sec:13}, we  identify a pair of a novel assumption and a new condition, called the \textit{common model structure assumption} and the \textit{data-architecture alignment condition}. Under  the common model structure assumption, the data-architecture alignment condition  is shown to be a necessary condition for the globally optimal model and a sufficient condition for  the global convergence. The condition is dependent on both data and architecture. Moreover, we empirically verify and deepen this new understanding. When we apply representation learning in practice, we often have overwhelming options regarding which model structure to be used. Our results provide a practical guidance for choosing or designing model structure via the common model structure assumption, which is indeed satisfied by many representation learning models used in practice.

\item  In Section \ref{sec:14},
we discard the assumption of the data-architecture alignment condition. Instead, we  derive a novel training framework, called the Exploration-Exploitation Wrapper (EE Wrapper), which  satisfies the data-architecture alignment condition time-independently a priori. The EE Wrapper is then proved to have global convergence guarantees under the \textit{safe-exploration condition}. The safe-exploration condition is what allows us to explore various gradient representations safely  without getting stuck in the states where we cannot provably satisfy the data-architecture alignment condition. The safe-exploration  condition is shown to hold true for ResNet-18 with standard benchmark datasets, including MNIST, CIFAR-10, CIFAR-100, Semeion, KMNIST and SVHN time-independently.      

\item 
 In Subsection \ref{sec:9}, the Exploration-Exploitation Wrapper is shown to not  degrade practical performances of ResNet-18 for  the standard datasets, MNIST, CIFAR-10, CIFAR-100, Semeion, KMNIST and SVHN. To our knowledge, the present paper  provides the first practical algorithm with  global convergence guarantees  without degrading practical performances of ResNet-18 on these standard  datasets, using convolutions, skip connections, and batch normalization without any extremely wide  layer of the width larger  than the number of data points. To the best of our knowledge, we are not aware of any similar  algorithms with global convergence guarantees in the regime of learning nonlinear representations without degrading practical performances.         

\end{enumerate}

\rev{It is   empirically known  that increasing the network size tends to improve the training behaviors. Indeed, the size of networks correlates well with the training error  in many cases
 in our experiments (e.g., Figure \ref{fig:1} b). However, the size and the training error do not correlate well   in some experiments (e.g., Figure \ref{fig:1} c). Our new theoretical results explain that the training behaviors  correlate more directly with the data-architecture alignment condition instead. The seeming correlation with the network size is caused by another correlation between the network size and the data-architecture alignment condition. This is explained  in more details in Section \ref{sec:new:3}.}

\section{Understanding Dynamics via Common Model Structure and Data-Architecture Alignment  } \label{sec:13}

In this section, we identify the common model structure assumption  and study the data-architecture alignment condition for  the global convergence in nonlinear representation learning. We begin by presenting an overview of our results   in Subsection \ref{sec:1}, deepen our   understandings via experiments in   Subsection \ref{sec:3}, discuss implications of our results in Subsection \ref{sec:new:3}, and establish mathematical theories  in Subsection \ref{sec:2}.

\subsection{Overview} \label{sec:1}

We introduce the common model structure assumption in Subsection \ref{sec:new:new:1} and define the data-architecture alignment condition in Subsection \ref{sec:new:new:2}. Using the assumption and the condition, we present the global convergence result in Subsection \ref{sec:new:new:3}. 
 
\subsubsection{Common Model Structure Assumption} \label{sec:new:new:1}

Through examinations of representation learning models used in applications, we identified and formalized one of their common properties as follows:   
   
\begin{assumption} \label{assump:7}
\emph{(Common Model Structure Assumption)} There exists a subset $S\subseteq \{1,2,\dots,d\}$ such that $f(x_i,\theta)=\sum_{k=1}^d \one\{k \in S\}\theta_k(\frac{\partial f(x_{i},\theta)}{\partial\theta_{k}})$ for any $i\in \{1,\dots,n\}$ and $\theta \in \RR^d$. 
\end{assumption}        

Assumption \ref{assump:7} is satisfied by common machine learning models, such as kernel models and multilayer neural networks, with or without convolutions, batch normalization, pooling, and skip connections. For example, consider a multilayer neural network of the form $f(x,\theta)=W h(x,u)+b$, where $h(x,u)$ is an output of its last hidden layer and the parameter vector $\theta$ consists of the parameters $(W,b)$ at the last layer and the parameters $u$ in all other layers as $\theta=\vect([W,b,u])$. \rev{Here, for any   matrix $M\in \RR^{m \times \bar m}$, we let $\vect(M)\in \RR^{ m \bar m}$ be  the standard vectorization of the matrix $M$ by stacking columns.} Then, Assumption \ref{assump:7} holds because $f(x,\theta)=\sum_{k=1}^d \one\{k \in S\}\theta_k(\frac{\partial f(x_{i},\theta)}{\partial\theta_{k}})$ where $S$ is defined by $\{\theta_k:k\in S\}=\{\vect([W,b])_{k} : k =1,2,\dots,\xi\}$  with $\vect([W,b])\in \RR^{\xi}$. Since $h$ is arbitrary in this example, the common model structure assumption holds, for example, for any multilayer neural networks with a fully-connected last layer. In general, because the nonlinearity at the output layer can be treated as a part of the loss function $\ell$ while preserving convexity of $q \mapsto \ell(q,y)$ (e.g., cross-entropy loss with softmax), this assumption is satisfied by many machine learning models, including ResNet-18 and all models used in the experiments in this paper (as well as all linear models). Moreover, Assumption \ref{assump:7} is automatically satisfied in the next section by using the EE Wrapper.

\subsubsection{Data-Architecture Alignment Condition} \label{sec:new:new:2}

Given a target matrix $Y=(y_{1},y_{2},\dots,y_{n})\T \in \RR^{n \times m_y}$ and a loss function $\ell$, we define the modified target matrix 
  $ Y_{\ell}= ( y_{1}^{\ell}, y_{2}^{\ell},\dots,  y_{n}^{\ell})\T\in \RR^{n \times m_y}$    by $Y_{\ell}=Y$  for the squared loss $\ell$ , and  by $(Y_{\ell})_{ij}=2Y_{ij}-1$  for the  (binary and multi-class) cross-entropy losses $\ell$ with $Y_{ij} \in \{0,1\}$. Given input matrix $X=(x_1,x_2,\dots,x_n)\T \in \RR^{n \times m_x}$, the  output matrix $f_{X}(\theta)\in \RR^{n \times m_y}$ is defined  by $f_{X}(\theta)_{ij}=f(x_{i},\theta)_{j} \in \RR$. For any   matrix $M\in \RR^{m \times \bar m}$, we let  $\Col(M) \subseteq \RR^{m}$ be its column space. With these notations, we are now ready  to introduce the data-architecture alignment condition: 
\begin{definition} \label{def:2}
(Data-Architecture Alignment Condition) Given any dataset $(X,Y)$, differentiable function $f$, and loss function $\ell$, the \textit{data-architecture alignment condition} is said to be satisfied at $\theta$ if $\vect(Y_{\ell}) \in \Col(\frac{\partial \vect(f_{X}(\theta))}{\partial \theta})$.
\end{definition}
The data-architecture alignment condition depends on both data (through  the target  $Y$ and the input  $X$) and architecture (through the model $f$). 
It is  satisfied only when the data and architecture align well to each other. For example, in the case of linear model $f(x,\theta)=\phi(x)\T \theta \in \RR$, the condition can be written as $\vect(Y_{\ell}) \in \Col(\Phi(X))$ where $\Phi(X)\in \RR^{n \times d}$ and  $\Phi(X)_{ij}=\phi(x_i)_j$. \rev{In Definition \ref{def:2}, $f_{X}(\theta)$ is  a matrix of the \textit{pre-activation} outputs of the last layer. Thus, in the case of classification tasks with a nonlinear activation at the output layer, $f_{X}(\theta)$ and $Y$ are not in the same space, which is the reason why we use $Y_\ell$ here instead of $Y$.}          

Importantly, the data-architecture alignment condition does \textit{not} make any requirements on the the rank of the Jacobian matrix $\frac{\partial \vect(f_{X}(\theta))}{\partial \theta}\in \RR^{n m_y \times d}$: i.e., the rank of $\frac{\partial \vect(f_{X}(\theta))}{\partial \theta}$ is allowed to  be smaller than $n m_y$ and $d$. Thus, for example, the data-architecture alignment condition can be satisfied depending on the given data and architecture even if the minimum eigenvalue of the matrix  $\frac{\partial \vect(f_{X}(\theta))}{\partial \theta} (\frac{\partial \vect(f_{X}(\theta))}{\partial\theta})\T$ is zero, in both cases of over-parameterization (e.g., $d\gg n$) and under-parameterization (e.g., $d \ll n$). This is further illustrated in Subsection \ref{sec:3} and discussed in Subsection \ref{sec:new:3}. We  note that we further discard the assumption of the data-architecture alignment condition in Section \ref{sec:14} as it is automatically satisfied by using the EE Wrapper.

\subsubsection{Global Convergence } \label{sec:new:new:3}

Under the common model structure assumption, the data-architecture alignment condition is shown to be what lets us avoid the failure of  the global convergence and sub-optimal local minima. More concretely, we prove a global convergence guarantee under the data-architecture alignment condition as well as the necessity of the condition for the global optimality:      
\begin{theorem} \label{thm:new:1}
\emph{(Informal Version)} Let Assumption \ref{assump:7} hold. Then, the following two statements hold for  gradient-based dynamics:
 \begin{enumerate}[label=(\roman*)]
\item \vspace{-3pt}
 The global optimality gap bound decreases per iteration towards zero at the rate of $O(1/\sqrt{|\Tcal|})$ for any $\Tcal$ such that the data-architecture alignment condition is satisfied at $\theta^{t}$ for $t \in \Tcal$.    
\item \vspace{-3pt}
 For any $\theta \in \RR^d$, the data-architecture alignment condition  at $\theta$ is necessary to have the globally optimal model  $f_X(\theta) = \eta Y_{\ell}$ at $\theta$ for any $\eta \in \RR$.
\end{enumerate}
\end{theorem}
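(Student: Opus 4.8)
My plan treats the two statements separately, but both rest on a single algebraic consequence of Assumption \ref{assump:7}. Applying the assumption coordinate-wise to each output index $j$ and stacking the resulting scalar identities over all pairs $(i,j)$ into $\vect(f_X(\theta))$, I would first record the Euler-type identity
\[
\vect(f_X(\theta)) = \frac{\partial \vect(f_X(\theta))}{\partial \theta}\,\bar\theta, \qquad \bar\theta_k \deq \one\{k\in S\}\,\theta_k .
\]
This says that for \emph{every} $\theta$ the output vector already lies in $\Col\big(\frac{\partial \vect(f_X(\theta))}{\partial \theta}\big)$. This identity is the workhorse for both parts and is the only place Assumption \ref{assump:7} enters.

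For statement (ii), I would argue directly. If $f_X(\theta)=\eta Y_\ell$ is the globally optimal model with $\eta\neq 0$, then $\vect(Y_\ell)=\eta^{-1}\vect(f_X(\theta))$, and by the identity above this lies in $\Col\big(\frac{\partial \vect(f_X(\theta))}{\partial \theta}\big)$, which is exactly the data-architecture alignment condition at $\theta$. The only case requiring separate comment is $\eta=0$, i.e.\ $f_X(\theta)=0$; I would dispatch it by noting that for the squared and cross-entropy losses a vanishing output is not globally optimal unless $Y_\ell$ is itself degenerate, so the optimal scaling is nonzero whenever $Y_\ell\neq 0$.

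For statement (i), the strategy is to combine convexity of $q\mapsto \ell(q,y)$ with the identity and the alignment hypothesis. Convexity yields the standard bound $\Lcal(\theta^t)-\Lcal^{*}\le \frac1n\langle r^t,\ \vect(f_X(\theta^t))-\eta\,\vect(Y_\ell)\rangle$, where $r^t$ stacks the output-space gradients $\nabla_q\ell(f(x_i,\theta^t),y_i)$ and $\eta Y_\ell$ is the reference optimum (for cross-entropy taken as a limit $\eta\to\infty$). The crucial geometric step is that, writing $J^t=\frac{\partial \vect(f_X(\theta^t))}{\partial \theta}$, the Euler identity gives $\vect(f_X(\theta^t))\in\Col(J^t)$ while alignment at $\theta^t$ gives $\eta\,\vect(Y_\ell)\in\Col(J^t)$; hence their difference equals $J^t w^t$ for some $w^t$. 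Substituting and using $\frac1n (J^t)\T r^t=\nabla_\theta\Lcal(\theta^t)$ collapses the bound to
\[
\Lcal(\theta^t)-\Lcal^{*}\ \le\ \langle \nabla\Lcal(\theta^t),\ w^t\rangle\ \le\ \|\nabla\Lcal(\theta^t)\|\,\|w^t\| .
\]
A descent lemma under smoothness of $\Lcal$ then gives $\sum_t\|\nabla\Lcal(\theta^t)\|^2=O(1)$, so $\min_{t\in\Tcal}\|\nabla\Lcal(\theta^t)\|=O(1/\sqrt{|\Tcal|})$ for any subset $\Tcal$ on which alignment holds, yielding the stated rate.

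The main obstacle I anticipate is controlling $\|w^t\|$: column-space membership only guarantees the existence of $w^t$, not a uniform norm bound, so I would take $w^t$ to be the minimum-norm solution $(J^t)^{+}\big(\vect(f_X(\theta^t))-\eta\,\vect(Y_\ell)\big)$ and either fold $\|w^t\|$ into the definition of the optimality-gap bound or bound it along the trajectory. A secondary technical point is the cross-entropy case, where the reference optimum sits at infinity: the comparison point $\eta Y_\ell$ must be handled through a careful $\eta\to\infty$ limit while keeping the column-space decomposition $\vect(f_X(\theta^t))-\eta\,\vect(Y_\ell)=J^t w^t$ valid.
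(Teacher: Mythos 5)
Your proposal is correct and follows essentially the same route as the paper's own proof: your Euler-type identity $\vect(f_X(\theta)) = \frac{\partial \vect(f_X(\theta))}{\partial \theta}\,\bar\theta$ is exactly the paper's use of Assumption \ref{assump:7} (your $\bar\theta$ is the paper's $\nu(\theta)$), part (ii) is the same column-space contrapositive as Proposition \ref{prop:1}, and part (i) follows the identical chain of the paper — the convexity comparison bound (Lemma \ref{lemma:pgd_approx}), the decomposition $\vect(f_X(\theta^t))-\eta\vect(Y_\ell)\in\Col(J^t)$ with a pseudoinverse coefficient $\hat\beta(\theta^t,\eta)$, the descent-lemma estimate $\min_{t\in\Tcal}\|\nabla\Lcal(\theta^t)\| = O(1/\sqrt{|\Tcal|})$ (Lemma \ref{lemma:known+alpha_3}), and the combination of these into Theorem \ref{thm:p:1}. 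Your two anticipated obstacles are resolved in the paper exactly as you propose: the coefficient norms are folded into the constant $\zeta_\eta$ of the stated bound, and the cross-entropy case is handled by keeping $\eta$ finite inside the bound and only afterwards letting $\eta$ (and $|\Tcal|$) grow, as in equation \eqref{eq:p:5}, so no limit inside the decomposition is ever needed.
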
 
 Theorem \ref{thm:new:1} (i) guarantees the global convergence \textit{without} the need to satisfy the data-architecture alignment condition at every iteration or at the limit point. Instead, it shows that the bound on the global optimality gap decreases towards zero per iteration whenever the data-architecture alignment condition holds. \rev{Theorem \ref{thm:new:1} (ii)  shows that the data-architecture alignment condition is  necessary for the global optimality. Intuitively, this is  because the expressivity of a model class satisfying the common model structure assumption is restricted such that it is required to align the architecture to the data in order to contain the globally optimal model  $f_X(\theta) = \eta Y_{\ell}$ (for any $\eta \in \RR$).  } 
 
 To better understand the statement of Theorem \ref{thm:new:1} (i), consider a counter example with a dataset consisting of the single point $(x,y)=(1,0)$, the model $f(x,\theta) = \theta^4 - 10 \theta^2 + 6 \theta + 100$, and the squared loss $\ell(q,y)=(q-y)^2$. In this example, we have $\Lcal(\theta)=f(x,\theta)^{2}$, which has multiple suboptimal local minima of different values. Then, via gradient descent,  the model converges to the closest  local minimum and, in particular, does not necessarily converge to a global minimum. Indeed, this example violates the common model structure assumption (Assumption \ref{assump:7}) (although it satisfies the data-architecture alignment condition), showing the importance of the common model structure assumption along with the data-architecture alignment. This also illustrates  the non-triviality of Theorem \ref{thm:new:1} (i) in that the data-architecture alignment is not sufficient, and we needed to understand  what types of model structures are commonly used in practice and formalize the understanding as the common model structure assumption.

To further understand the importance of the common model structure assumption in Theorem \ref{thm:new:1}, let us now consider the case where we do not require the assumption. Indeed, we can guarantee the global convergence without the common model structure assumption if we  ensure that the minimum eigenvalue of the matrix $\frac{\partial \vect(f_{X}(\theta))}{\partial \theta} (\frac{\partial \vect(f_{X}(\theta))}{\partial\theta})\T$ is nonzero. This can be proved by the  following derivation. Let $\theta$ be an arbitrary stationary point of $\Lcal$. Then, we have $0 = \frac{\partial \Lcal(\theta)}{\partial \theta} = \frac{1}{n} \sum_{i=1}^n (\frac{\partial \ell(q,y_i)}{\partial q}\vert_{q=f(x_i,\theta)}) \frac{\partial f(x_i,\theta)}{\partial \theta}$, which implies that
\begin{align}
\frac{\partial \vect(f_{X}(\theta))}{\partial \theta}v=0,
\end{align}  
where $v=\vect((\frac{\partial \ell(q,u_{1})}{\partial q} \vert_{q=f(x_1,\theta)} )\T,\dots,(\frac{\partial \ell(q,u_{N})}{\partial q} \vert_{q=f(x_n,\theta)} )\T )\in \RR^{nm_{y}}$. Therefore, if the minimum eigenvalue of the matrix $\frac{\partial \vect(f_{X}(\theta))}{\partial \theta} (\frac{\partial \vect(f_{X}(\theta))}{\partial\theta})\T$ is nonzero, then we have $v=0$: i.e., $\frac{\partial \ell(q,u_{i})}{\partial q} \vert_{q=f(x_i,\theta)} \allowbreak =0$ for all $i\in\{1,2,\dots,n\}$. Using the convexity of  the map $q\mapsto \ell(q,y)$ (which is satisfied by the squared loss and cross-entropy loss), this implies that for any $ q_1,q_2,\dots,q_n \in \RR^{m_{y}}$,
\begin{align}
&\Lcal(\theta)= \frac{1}{n} \sum_{i=1}^n \ell(f(x_{i},\theta),y_{i})
\\ &\le\frac{1}{n} \sum_{i=1}^n \left(\ell(q_{i},y_{i})- \left(\frac{\partial \ell(q,u_{i})}{\partial q} \Big\vert_{q=f(x_i,\theta)}\right) (q_i-f(x_{i},\theta)) \right)
\\ & \le\frac{1}{n} \sum_{i=1}^n \ell(q_{i},y_{i}).
\end{align}  
Since  $f(x_1),f(x_2),\dots,f(x_n) \in \RR^{m_{y}}$,
this implies that any stationary point $\theta$ is a global minimum if the minimum eigenvalue of the matrix $\frac{\partial \vect(f_{X}(\theta))}{\partial \theta} (\frac{\partial \vect(f_{X}(\theta))}{\partial\theta})\T$ is nonzero, without the common model structure assumption (Assumption \ref{assump:7}). Indeed, in the above  example with the model $f(x,\theta) = \theta^4 - 10 \theta^2 + 6 \theta + 100$, the common model structure assumption is violated but we still have the global convergence if the minimum eigenvalue is nonzero: e.g., $f(x,\theta)=y=0$ at any stationary point $\theta$ such that the minimum eigenvalue of the matrix $\frac{\partial \vect(f_{X}(\theta))}{\partial \theta} (\frac{\partial \vect(f_{X}(\theta))}{\partial\theta})\T$ is nonzero. In contrast, Theorem \ref{thm:new:1} allows the global convergence even when  the minimum eigenvalue of the matrix $\frac{\partial \vect(f_{X}(\theta))}{\partial \theta} (\frac{\partial \vect(f_{X}(\theta))}{\partial\theta})\T$ is zero, by utilizing the common model structure assumption.

 The formal version of Theorem \ref{thm:new:1} is presented in Subsection \ref{sec:2} and is proved in  Appendix \ref{sec:app:1} in the Supplementary Information. Before proving the  statement, we first examine the meaning and implications of our results through illustrative examples in the two next subsections. ~

 \subsection{Illustrative Examples in Experiments } \label{sec:3}

Theorem \ref{thm:new:1}  suggests that    data-architecture alignment condition   $\vect(Y_{\ell}) \in \Col(\frac{\partial \vect(f_{X}(\theta^{t}))}{\partial \theta^{t}})$  has the ability to distinguish the success and failure cases, even when the minimum eigenvalue of the matrix  $\frac{\partial \vect(f_{X}(\theta^{t}))}{\partial \theta^{t}} \allowbreak (\frac{\partial \vect(f_{X}(\theta^{t}))}{\partial\theta^{t}})\T$ is zero for all $t\ge 0$. In this subsection, we conduct experiments to further verify and deepen this theoretical understanding.  

\begin{figure*}[b!] 
\center
\begin{subfigure}[b]{0.45\textwidth}
  \includegraphics[width=\textwidth, height=0.9\textwidth]{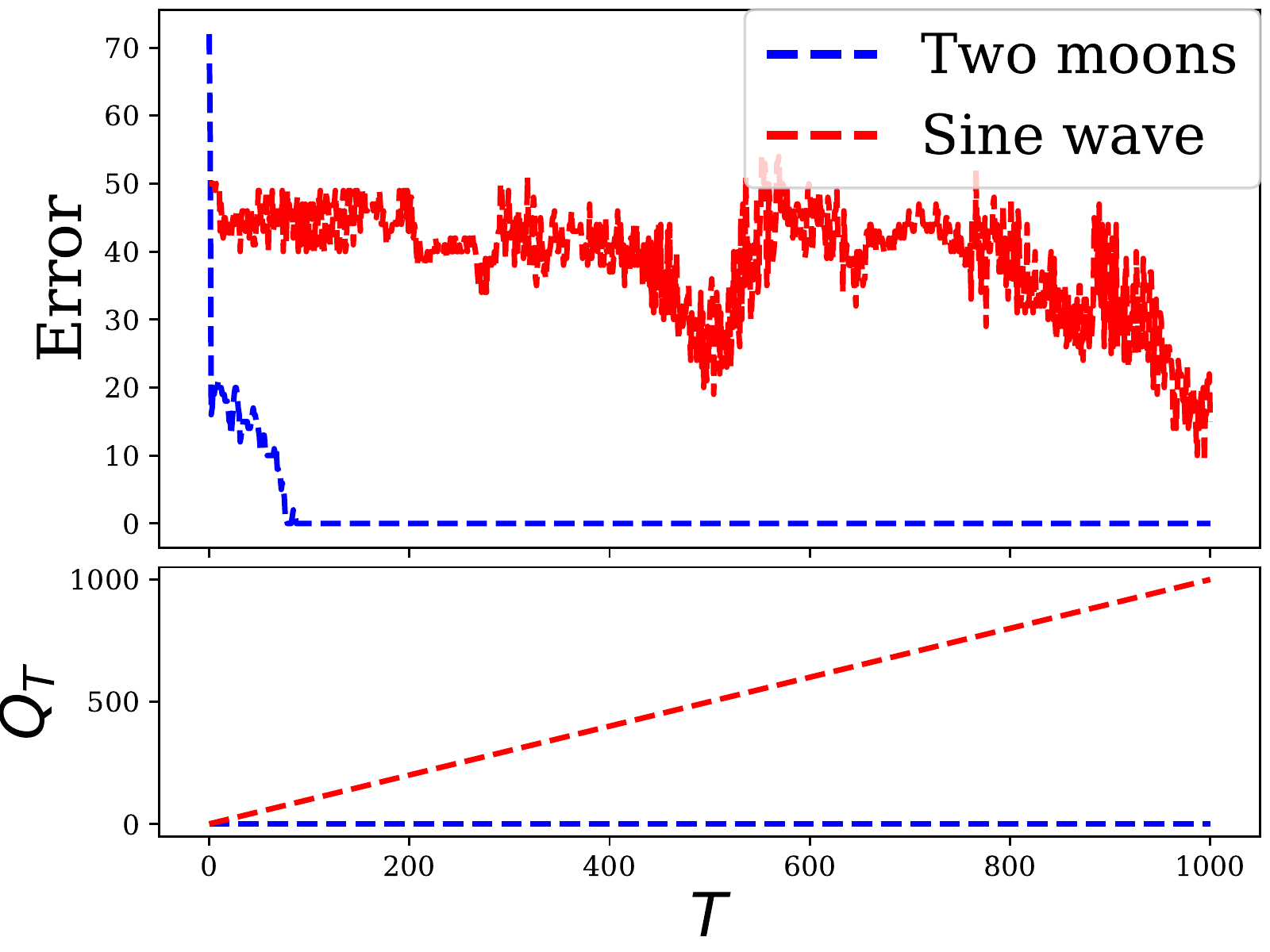}
   \caption{Fully-connected network}
\end{subfigure} \\
\begin{subfigure}[b]{0.45\textwidth}
  \includegraphics[width=\textwidth, height=0.9\textwidth]{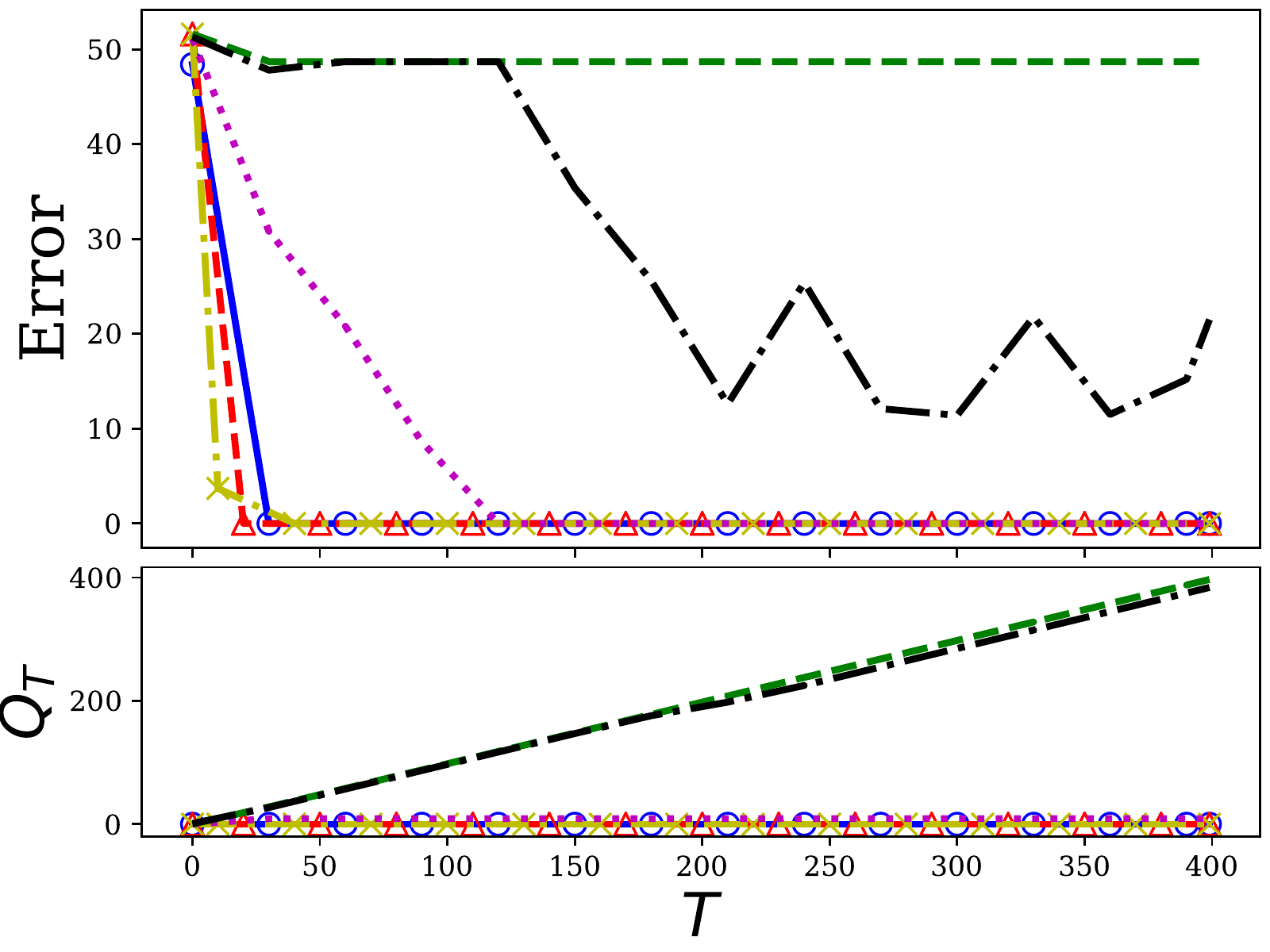}
  \caption{Convolutional network: seed \#1  } 
\end{subfigure}
\begin{subfigure}[b]{0.45\textwidth}
  \includegraphics[width=\textwidth, height=0.9\textwidth]{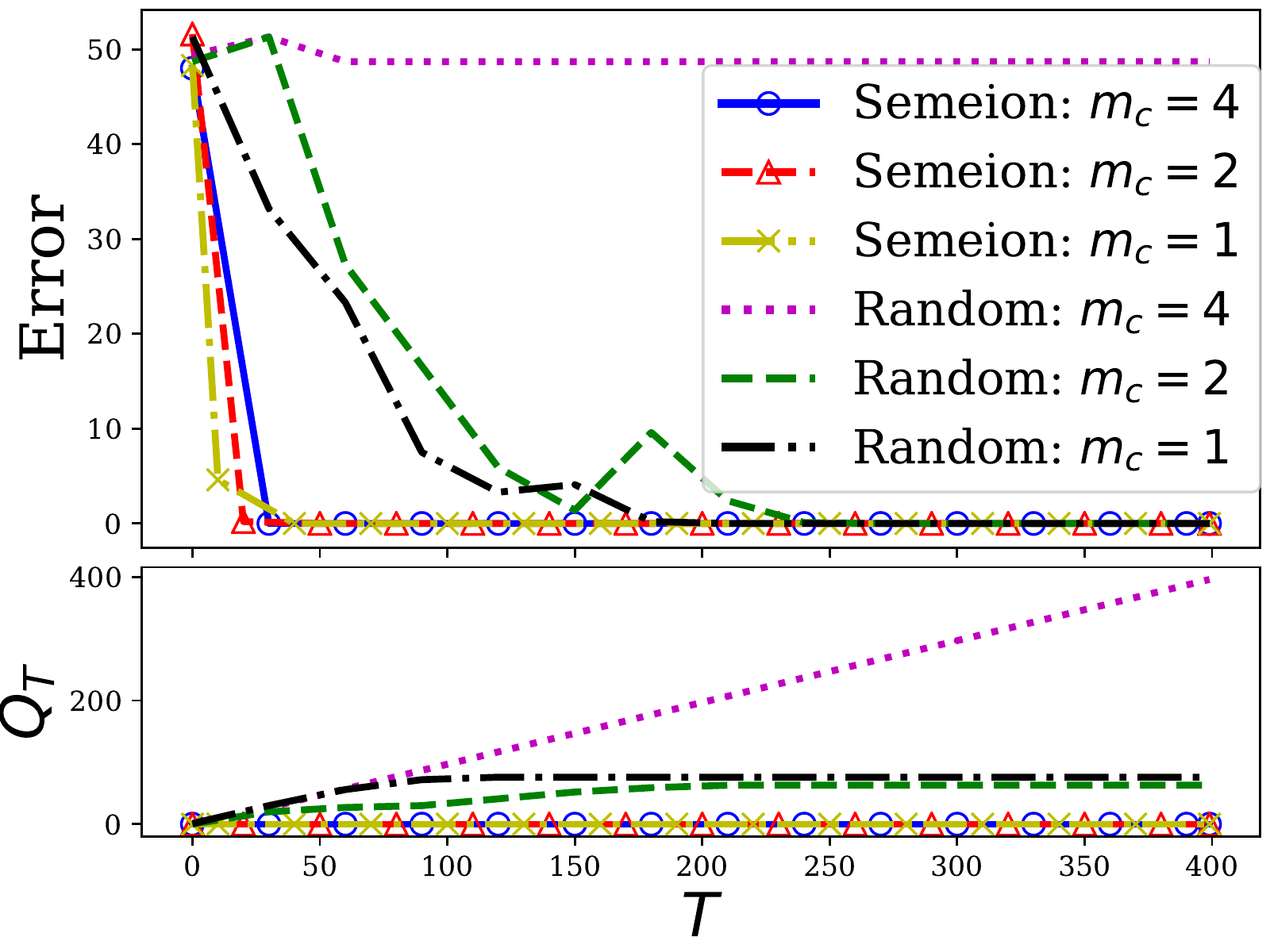}
   \caption{Convolutional network: seed \#2}  
\end{subfigure}   
\caption{Training error and the value of $Q_T$ over time steps $T$. The legend  of  (b) is shown in (c). The value of $Q_T$ measures the number of  $Y_{\ell}$ not satisfying the condition of $\vect(Y_{\ell}) \in \Col(\frac{\partial \vect(f_{X}(\theta^{t}))}{\partial \theta^{t}})$. This figure validates our theoretical understanding that the bound on the global optimality gap decreases at any iteration when  $Q_T$ does not increase\ at the iteration:\ i.e., when the $Q_T$ value is flat. The minimum eigenvalue of the matrix  $\mathbf{M}(\theta^t)=\frac{\partial \vect(f_{X}(\theta))}{\partial \theta} (\frac{\partial \vect(f_{X}(\theta))}{\partial\theta})\T$ is zero at all iterations in Figure \ref{fig:1} (b) and (c) for all cases with $m_c=1$.} 
\label{fig:1}
\end{figure*}

\begin{table*}[t!]
\centering \renewcommand{\arraystretch}{0.8} \fontsize{11.pt}{11.pt}\selectfont
\caption{The change of the gradient representation during training, $\|\mathbf{M}(\theta^{\hat  T})-\mathbf{M}(\theta^{0})\|_{F}^2$, where $\mathbf{M}(\theta):=\frac{\partial \vect(f_{X}(\theta))}{\partial \theta} (\frac{\partial \vect(f_{X}(\theta))}{\partial\theta})\T$ and $\hat T$  is the last time step.  } \label{tbl:3}  \vspace{-0pt}
\begin{subtable}[b]{1.0\textwidth}
\centering 
\caption{Fully-connected network} 
\begin{tabular}{lcc}
\toprule
Dataset &      \\
\midrule
 Two moons    & $2.09 \hspace{-2pt} \times \hspace{-2pt} 10^{11}$  \\
\midrule
 Sine wave  & $3.95 \hspace{-2pt} \times \hspace{-2pt} 10^{9}$   \\
\bottomrule
\end{tabular} 
\end{subtable}
\\ \vspace{10pt}
\begin{subtable}[b]{1.0\textwidth}
\centering 
\caption{Convolutional network} 
\begin{tabular}{lcccccc}
\toprule
Dataset &  \multicolumn{2}{c}{$m_c=4$} & \multicolumn{2}{c}{$m_c=2$} & \multicolumn{2}{c}{$m_c=1$}   \\  \cline{2-7} 
 & seed\#1 & seed\#2  & seed\#1 & seed\#2  & seed\#1 & seed\#2   \\
\midrule
 Semeion    & $8.09 \hspace{-2pt} \times \hspace{-2pt}  10^{12}$     & $5.19 \hspace{-2pt}\times\hspace{-2pt} 10^{12}$  & $9.82 \hspace{-2pt}\times\hspace{-2pt} 10^{12}$     & $3.97 \hspace{-2pt}\times\hspace{-2pt} 10^{12}$   & $2.97 \hspace{-2pt}\times\hspace{-2pt} 10^{12}$     & $5.41\hspace{-2pt}\times\hspace{-2pt} 10^{12}$\\
\midrule
 Random   & $3.73 \hspace{-2pt}\times\hspace{-2pt} 10^{12} $ & $1.64 \hspace{-2pt} \times\hspace{-2pt}  10^{12}$   & $3.43 \hspace{-2pt} \times\hspace{-2pt}  10^{7}$     & $4.86 \hspace{-2pt} \times\hspace{-2pt}  10^{12}$   & $1.40 \hspace{-2pt}\times\hspace{-2pt} 10^{7}$     & $8.57 \hspace{-2pt}\times\hspace{-2pt} 10^{11}$ \\
\bottomrule
\end{tabular} 
\end{subtable}
\end{table*}  

We employ a fully-connected   network having four layers with 300 neurons per hidden layer, and a convolutional network, LeNet \citep{lecun1998gradient}, with five layers. For the fully-connected   network, we use the  two-moons dataset \citep{scikitlearn} and a sine wave dataset. To create the  sine wave dataset, we  randomly generated the input $x_i $  from the uniform distribution on  the interval $[-1,1]$ and set $y_i=\one\{\sin(20x_i)<0\}\in \RR$ for all $i \in [n]$ with $n=100$. For the convolutional   network, we use the Semeion dataset \citep{srl1994semeion} and a random dataset. The random dataset was created  by randomly generating each pixel of the input image  $x_i \in \RR^{16\times 16 \times 1}$  from the standard normal distribution  and by  sampling $y_i$ uniformly from $\{0,1\}$ for all $i \in [n]$ with $n=1000$.  We set the  activation functions of all layers  to be softplus $\bar \sigma(z)=\ln(1+\exp(\varsigma z))/\varsigma$ with $\varsigma=100$, which  approximately behaves as the ReLU activation   as shown in Appendix \ref{sec:app:3} in the Supplementary Information. See Appendix \ref{sec:app:2} in the Supplementary Information for more details of the experimental settings.

The results of the experiments are presented in Figure \ref{fig:1}.
In each subfigure, the training errors and the values of $Q_T$  are plotted  over time $T$. Here, $Q_T$ counts the number of $Y_{\ell}$ \textit{not} satisfying the condition $\vect(Y_{\ell}) \in \Col(\frac{\partial \vect(f_{X}(\theta^{t}))}{\partial \theta^{t}})$ during $t \in \{0,1,\dots,T\}$ and is defined by     
\begin{align}
 Q_T=\sum_{t=0}^T \one\left\{\vect(Y_{\ell}) \notin \Col\left(\frac{\partial \vect(f_{X}(\theta^{t}))}{\partial \theta^{t}}\right)\right\}.
\end{align}
Figure \ref{fig:1} (a) shows the results for  the fully-connected   network. For the two-moons dataset, the network  achieved the zero training error with $Q_T=0$ for all $T$ (i.e., $\vect(Y_{\ell}) \in \Col(\frac{\partial \vect(f_{X}(\theta^{T}))}{\partial \theta^{T}})$ for all $T$). For  the sine wave dataset, it obtained high training errors with $Q_T=T$ for all $T$ (i.e., $\vect(Y_{\ell}) \notin \Col(\frac{\partial \vect(f_{X}(\theta^{T}))}{\partial \theta^{T}})$ for all $T$). This is consistent with our theory. Our theory  explains that what makes a dataset easy to be fitted or not    is  whether the condition  $\vect(Y_{\ell}) \in \Col(\frac{\partial \vect(f_{X}(\theta^{t}))}{\partial \theta^{t}})$ is satisfied or not.

Figure \ref{fig:1} (b) and (c) shows the results for the  convolutional   networks with two  random initial points using two different random seeds. In the subfigures, we report the training behaviors with different network sizes $m_c=1,2$ and $4$: i.e., the number of convolutional filters per  convolutional layer is  $8 \times m_c$ and the number of neurons per fully-connected hidden layer is $128 \times m_c$. 
As can be seen, with the Semeion dataset, the networks of all sizes   achieved the zero  error with $Q_T=0$ for all $T$. With the random dataset,  the deep networks yielded the zero training error whenever $Q_T$ is not linearly increasing over the time, or equivalently  whenever the condition of $\vect(Y_{\ell}) \in \Col(\frac{\partial \vect(f_{X}(\theta^{T}))}{\partial \theta^{T}})$  holds sufficiently many steps $T$.    This is  consistent with our theory.

Finally, we  also confirmed that gradient representation $\frac{\partial f(x,\theta^{t})}{\partial\theta^{t}}$ changed significantly from the initial one $\frac{\partial f(x,\theta^{0})}{\partial\theta^{0}}$   in our experiments. That is,  the values of $\|\mathbf{M}(\theta^{T})-\mathbf{M}(\theta^{0})\|_{F}^2$ were significantly large and tended to increase as $T$ increases, where   the matrix $\mathbf{M}(\theta) \in \RR^{nm_y\times nm_y}$  is defined by
$
 \mathbf{M}(\theta)=\frac{\partial \vect(f_{X}(\theta))}{\partial \theta} (\frac{\partial \vect(f_{X}(\theta))}{\partial\theta})\T. 
$
Table \ref{tbl:3} summarizes the values of  $\|\mathbf{M}(\theta^{T})-\mathbf{M}(\theta^{0})\|_{F}^2$ at the end of the training.

\subsection{Implications} \label{sec:new:3}
In Section \ref{sec:new:new:3}, we showed that  an uncommon model structure  $f(x,\theta) = \theta^4 - 10 \theta^2 + 6 \theta + 100$ does not satisfy Assumption \ref{assump:7} and Assumption \ref{assump:7} is not required for global convergence if  the minimum eigenvalue is  nonzero. However, in practice, we typically use machine learning models that satisfy Assumption \ref{assump:7} instead of the model   $f(x,\theta) = \theta^4 - 10 \theta^2 + 6 \theta + 100$, and the minimum eigenvalue is  zero in many cases. In this context,  Theorem \ref{thm:new:1} provides the justification for common practice in nonlinear representation learning. Furthermore,  Theorem \ref{thm:new:1} (i) contributes to the  literature by identifying the common model structure assumption (Assumption \ref{assump:7}) and the data-architecture alignment condition (Definition \ref{def:2}) as the novel and practical conditions to ensure the global convergence even when the minimum eigenvalue  becomes  zero. Moreover, Theorem \ref{thm:new:1} (ii) shows that this condition is not  arbitrary in the sense that it is also necessary to obtain the globally optimal models.  Furthermore, the data-architecture alignment condition is strictly more general than the condition of the minimum eigenvalue being nonzero, in the sense that the latter implies the former but not vice versa.

Our new theoretical understanding based on the data-architecture alignment condition can explain and deepen the previously known empirical observation that increasing the network size tends to improve the training behaviors. Indeed, the size of networks seems to correlate well with the training error  to a certain degree in Figure \ref{fig:1} (b). However, the size and the training error do not correlate well   in  Figure \ref{fig:1} (c). Our new theoretical understanding explains that the training behaviors  correlate more directly with the data-architecture alignment condition of $\vect(Y_{\ell}) \in \Col(\frac{\partial \vect(f_{X}(\theta^{t}))}{\partial \theta^{t}})$ instead. The seeming correlation with the network size is indirect and caused by another correlation between the network size and the condition of $\vect(Y_{\ell}) \in \Col(\frac{\partial \vect(f_{X}(\theta^{t}))}{\partial \theta^{t}})$. That is, the condition of $\vect(Y_{\ell}) \in \Col(\frac{\partial \vect(f_{X}(\theta^{t}))}{\partial \theta^{t}})$ more likely tends to  hold  when the network size is larger because the matrix $\frac{\partial \vect(f_{X}(\theta^{t}))}{\partial \theta^{t}}$ is of size $n m_y \times d$ where $d$ is the number of parameters: i.e., by increasing $d$, we can increase the column space $\Col(\frac{\partial \vect(f_{X}(\theta^{t}))}{\partial \theta^{t}})$ to increase the chance of satisfying the condition of $\vect(Y_{\ell}) \in \Col(\frac{\partial \vect(f_{X}(\theta^{t}))}{\partial \theta^{t}})$.  

Note that the minimum eigenvalue of the matrix  $\mathbf{M}(\theta^t)=\frac{\partial \vect(f_{X}(\theta))}{\partial \theta} (\frac{\partial \vect(f_{X}(\theta))}{\partial\theta})\T$ is zero at all iterations in Figure \ref{fig:1} (b) and (c) for all cases of $m_{c}=1$. Thus, Figure \ref{fig:1} (b) and (c) also illustrates the fact that while having the zero  minimum eigenvalue  of the matrix  $\mathbf{M}(\theta^t)$, the dynamics can achieve the global convergence under the data-architecture alignment condition.  Moreover, because the multilayer neural network in the lazy training regime \citep{kawaguchi2021recipe} achieve zero training errors for \textit{all} datasets, Figure \ref{fig:1} additionally illustrates the fact that our theoretical and empirical results apply to the models outside of the lazy training regime and can distinguishing `good' datasets from 'bad' datasets given a learning algorithm.

In sum, our new theoretical understanding has the ability to explain and distinguish the successful case and failure case based on  the data-architecture alignment condition for the common machine learning models. Because the data-architecture alignment condition is dependent on data and architecture, Theorem \ref{thm:new:1} along with our experimental results show why and when the global convergence in nonlinear representation learning is achieved based on the relationship between the data $(X,Y)$ and architecture $f$. This new understanding is used in Section \ref{sec:14} to derive a practical algorithm, and is expected to be a basis for many future algorithms.

\subsection{Details and Formalization of Theorem \ref{thm:new:1}} \label{sec:2}
This subsection presents the precise mathematical statements that formalize the informal description of Theorem \ref{thm:new:1}. In the following subsections, we devide the formal version of Theorem \ref{thm:new:1} into Theorem \ref{thm:3}, Theorem \ref{thm:p:1}, and Proposition \ref{prop:1}.

\subsubsection{Preliminaries} 

Let $(\theta^{t})_{t=0}^\infty$ be  the sequence defined by 
$
\theta^{t+1} = \theta^{t}-\alpha^t \bg^{t}
$ with an initial parameter vector $\theta^{0}$, a learning rate $\alpha^t$, and an update vector $\bg^t$. The analysis in this section relies on   the following assumption on the update vector $\bg^{t}$:   

\begin{assumption} \label{assump:5}
There exist $\bc,\uc>0$ such that 
$
\uc \|\nabla\Lcal(\theta_{}^{t})\|^2  \allowbreak \le\nabla\Lcal(\theta_{}^{t})\T \bg^t 
$
and
$
 \|\bg^t\|^2 \le \bc \|\nabla\Lcal(\theta_{}^{t})\|^2
$ for any $t \ge 0$.
\end{assumption}

Assumption \ref{assump:5} is satisfied by using  
$
\bg^t =  D^t\nabla\Lcal(\theta^{t}), 
$
where $D^{t}$ is any positive definite symmetric  matrix with eigenvalues in the interval  $[\uc, \sqrt{\bc}]$. If we set $D^{t}=I$,  we have gradient descent and Assumption \ref{assump:5} is satisfied with $\uc=\bc =1$.  This section also uses the standard assumption of  differentiability and Lipschitz continuity:  

\begin{assumption} \label{assump:6}

 For every $i \in [n]$, the function $\ell_i:q\mapsto\ell(q,y_{i}) $  is differentiable and convex, the map    $f_{i}:\theta \mapsto f(x_i, \theta)$ is differentiable, and  $\|\nabla\Lcal(\theta)-\nabla\Lcal(\theta')\|\le L \|\theta-\theta'\|$ for all $\theta,\theta'$ in the domain of $\Lcal$ for some $L \ge 0$. 
\end{assumption}

The assumptions on the loss function in Assumption \ref{assump:6} are satisfied by using standard loss functions, including the squared loss, logistic loss, and  cross entropy loss. Although the objective function $\Lcal$  is non-convex and non-invex, the  function $q  \mapsto\ell(q, y_{i})$ is typically convex.

For any matrix $Y^*= (y_{1}^{*}, y_{2}^*,\dots, y_{n}^*)\T \in \RR^{n \times m_y}$, we  define 
\begin{align}
\Lcal^*(Y^*) =\frac{1}{n}\sum_{i=1}^n  \ell\left(y_{i}^*, y_{i}\right).
\end{align}
For example, for the squared loss $\ell$, the value of  $\Lcal^*(Y_{\ell})$  is at most the global minimum value of $\Lcal$ as  
\begin{align} \label{eq:p:3}
\Lcal^*(Y_{\ell}) \le  \Lcal(\theta), \ \forall \theta\in\RR^d,   
\end{align}
since $\Lcal^*(Y_{\ell}) =\frac{1}{n}\sum_{i=1}^n  \|y_{i}-y_{i}\|^2_2=0\le \Lcal(\theta)\ \forall \theta \in \RR^d$. This paper also uses the notation of    $[k]=\{1,2,\dots,k\}$ for any  $k \in \NN^+$ and $\|\cdot\|=\|\cdot\|_2$  (Euclidean norm). Finally, we note that for any $\eta \in\RR$, the condition of $\vect(Y_{\ell}) \in \Col(\frac{\partial \vect(f_{X}(\theta))}{\partial \theta})$ is necessary to learn a near global optimal model  $f_X(\theta) = \eta Y_{\ell}$: 

\begin{proposition} \label{prop:1}
Suppose Assumption \ref{assump:7} holds. If $\vect(Y_{\ell}) \notin \Col(\frac{\partial \vect(f_{X}(\theta))}{\partial \theta})$, then $f_X(\theta) \neq \eta Y_{\ell}$ for any $\eta\in\RR$.
\end{proposition}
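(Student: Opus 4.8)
The plan is to prove the contrapositive: assuming $f_X(\theta) = \eta Y_\ell$ for some $\eta \in \RR$, I will show that $\vect(Y_\ell) \in \Col(\frac{\partial \vect(f_X(\theta))}{\partial\theta})$, contradicting the hypothesis. Throughout, write $J \deq \frac{\partial\vect(f_X(\theta))}{\partial\theta} \in \RR^{nm_y\times d}$, so that $\Col(J)$ is a linear subspace of $\RR^{nm_y}$, and recall that the whole statement is linear-algebraic once the geometry is set up correctly.

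The key step, and the only place where Assumption \ref{assump:7} enters, is to observe that $\vect(f_X(\theta))$ itself always lies in $\Col(J)$, independently of any alignment hypothesis. To see this I would exhibit an explicit coefficient vector $w\in\RR^d$ that realizes it, namely $w_k = \one\{k\in S\}\theta_k$. Reading off the $(i,j)$ entry of $Jw$ in the column-stacking convention gives $(Jw)_{(i,j)} = \sum_{k=1}^d \frac{\partial f(x_i,\theta)_j}{\partial\theta_k} w_k = \sum_{k\in S}\theta_k \frac{\partial f(x_i,\theta)_j}{\partial\theta_k}$, which is exactly the $j$-th coordinate of the vector identity asserted by Assumption \ref{assump:7}. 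Hence $(Jw)_{(i,j)} = f(x_i,\theta)_j = \vect(f_X(\theta))_{(i,j)}$ for every index pair $(i,j)$, so $Jw = \vect(f_X(\theta))$ and therefore $\vect(f_X(\theta)) \in \Col(J)$.

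With this in hand the conclusion follows from closure of $\Col(J)$ under scalar multiplication. If $f_X(\theta) = \eta Y_\ell$, then $\eta\,\vect(Y_\ell) = \vect(f_X(\theta)) \in \Col(J)$. Since the hypothesis $\vect(Y_\ell)\notin\Col(J)$ forces $\vect(Y_\ell)\neq 0$ (the zero vector lies in every subspace), any $\eta\neq 0$ would give $\vect(Y_\ell) = \eta^{-1}\vect(f_X(\theta)) \in \Col(J)$, a contradiction; thus no nonzero $\eta$ can satisfy $f_X(\theta)=\eta Y_\ell$, which is the claim.

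The argument is short, so the main care is bookkeeping: the crux is verifying the identity $Jw = \vect(f_X(\theta))$, i.e. matching the per-coordinate form of Assumption \ref{assump:7} to the vectorized column-space statement under the column-stacking convention for $\vect(\cdot)$. The one genuine wrinkle is the degenerate value $\eta = 0$, which yields $f_X(\theta)=0=\eta Y_\ell$ with no contradiction; strictly speaking the argument pins down that $0$ is the only scalar for which $f_X(\theta)=\eta Y_\ell$ could possibly hold, which is harmless for the necessity direction in Theorem \ref{thm:new:1} (ii) since the global optima of interest correspond to $\eta\neq 0$ (e.g. $\eta=1$ for the squared loss).
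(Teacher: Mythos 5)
Your proposal is correct and takes essentially the same route as the paper's own proof: both use Assumption \ref{assump:7} to exhibit the explicit coefficient vector ($\nu(\theta)$ in the paper, your $w$) certifying $\vect(f_X(\theta)) \in \Col\left(\frac{\partial \vect(f_{X}(\theta))}{\partial \theta}\right)$, and then conclude by closure of the column space under nonzero scalar multiplication. If anything, your handling of the degenerate scalar is the more careful one: the paper's proof asserts that $\vect(Y_{\ell}) \notin \Col\left(\frac{\partial \vect(f_{X}(\theta))}{\partial \theta}\right)$ implies $\vect(\eta Y_{\ell}) \notin \Col\left(\frac{\partial \vect(f_{X}(\theta))}{\partial \theta}\right)$ for \emph{any} $\eta \in \RR$, which fails at $\eta = 0$ since the zero vector lies in every column space, whereas you explicitly flag that $\eta = 0$ (possible only when $f_X(\theta)=0$) is the one value the scalar-closure argument cannot exclude---a harmless caveat for the necessity claim in Theorem \ref{thm:new:1} (ii).
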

\begin{proof}
All  proofs of this paper are   presented  in Appendix \ref{sec:app:1} in the Supplementary Information.
\end{proof}

\subsubsection{Global Optimality at the Limit Point} \label{sec:4}

The following theorem shows that  every limit point $\htheta$ of the sequence $(\theta^t)_{t}$ achieves a    loss value $\Lcal(\htheta)$ no worse than $ \inf_{\eta \in \RR}\Lcal^*(\eta Y^{*})$ for any $Y^*$ such that   $\vect(Y^{*}) \in \Col(\frac{\partial \vect(f_{X}(\theta^{t}))}{\partial \theta^{t}})$ for all $t \in[\tau,\infty)$ with some $\tau\ge 0$: 

\begin{theorem} \label{thm:3}
Suppose Assumptions \ref{assump:7}--\ref{assump:6} hold. Assume that the learning rate sequence $(\alpha^t)_{t}$ satisfies either (i) $\epsilon \le \alpha^t \le \frac{\uc (2-\epsilon)}{L\bc}$ for some $\epsilon>0$, 
or (ii) $\lim_{t \rightarrow \infty}\alpha^t =0$ and $\sum_{t=0}^\infty \alpha^t = \infty$. Then, for any $Y^* \in\RR^{n \times m_y}$, if there exists  $\tau\ge 0$ such that   $\vect(Y^{*}) \in \Col(\frac{\partial \vect(f_{X}(\theta^{t}))}{\partial \theta^{t}})$ for all $t \in[\tau,\infty)$,  every limit point $\htheta$ of the sequence $(\theta^t)_{t}$ satisfies
\begin{align}
\Lcal(\htheta)\le \Lcal^*(\eta Y^{*}), \ \ \ \forall \eta \in \RR.
\end{align}
\end{theorem}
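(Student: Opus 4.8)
The plan is to combine a routine descent analysis, which forces the gradient to vanish along the iterates, with the two structural ingredients of the setup: the convexity of each $q\mapsto\ell(q,y_i)$ and the common model structure assumption. First I would show that every limit point is stationary. Feeding the two inequalities of Assumption \ref{assump:5} into the descent lemma for the $L$-Lipschitz gradient (Assumption \ref{assump:6}) yields $\Lcal(\theta^{t+1})\le\Lcal(\theta^{t})-\alpha^t(\uc-\tfrac{L\bc\alpha^t}{2})\|\nabla\Lcal(\theta^{t})\|^2$. Under regime (i) the choice $\alpha^t\le\frac{\uc(2-\epsilon)}{L\bc}$ makes the bracket at least $\tfrac{\uc\epsilon}{2}>0$ while $\alpha^t\ge\epsilon$, so $\Lcal(\theta^t)$ is nonincreasing; since $\Lcal\ge 0$ is bounded below, telescoping gives $\sum_t\|\nabla\Lcal(\theta^t)\|^2<\infty$ and hence $\|\nabla\Lcal(\theta^t)\|\to 0$. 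Regime (ii) gives the same conclusion for limit points by the classical diminishing-step-size argument for a bounded-below $C^1$ objective with $\alpha^t\to 0$ and $\sum_t\alpha^t=\infty$. Thus any limit point $\htheta=\lim_j\theta^{t_j}$ is stationary: $\nabla\Lcal(\htheta)=0$.

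Next I would put stationarity in matrix form and extract two orthogonality relations. Writing $\hat J:=\frac{\partial\vect(f_X(\htheta))}{\partial\theta}$ and the residual $\hat v:=\vect\big((\tfrac{\partial\ell(q,y_1)}{\partial q}\vert_{q=f(x_1,\htheta)})\T,\dots,(\tfrac{\partial\ell(q,y_n)}{\partial q}\vert_{q=f(x_n,\htheta)})\T\big)$, the chain rule gives $n\nabla\Lcal(\htheta)=\hat J\T\hat v$, so $\hat J\T\hat v=0$, i.e. $\hat v\perp\Col(\hat J)$. Convexity of each $\ell_i$ at $q=f(x_i,\htheta)$, summed over $i$ and vectorized, then yields for every $\eta\in\RR$ the bound $\Lcal(\htheta)\le\Lcal^*(\eta Y^*)-\tfrac1n\hat v\T(\eta\vect(Y^*)-\vect(f_X(\htheta)))$. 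The common model structure assumption disposes of one term at no cost: Assumption \ref{assump:7} says $\vect(f_X(\htheta))=\hat J\tilde\theta$ with $\tilde\theta_k=\htheta_k\one\{k\in S\}$, so $\vect(f_X(\htheta))\in\Col(\hat J)$ and $\hat v\T\vect(f_X(\htheta))=(\hat J\T\hat v)\T\tilde\theta=0$. The bound therefore reduces to $\Lcal(\htheta)\le\Lcal^*(\eta Y^*)-\tfrac{\eta}{n}\hat v\T\vect(Y^*)$, and it remains only to prove $\hat v\T\vect(Y^*)=0$.

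This last orthogonality is where the alignment hypothesis must enter, and it is the main obstacle. Since $\hat v\perp\Col(\hat J)$, it would suffice to know $\vect(Y^*)\in\Col(\hat J)$; but the hypothesis supplies only $\vect(Y^*)\in\Col(J^{t})$ for $t\ge\tau$, and the column space is not continuous in $\theta$, so membership need not survive passage to the limit. My plan is to pass to the limit at the level of the residual rather than the subspace: by the regularity in Assumption \ref{assump:6} the map $\theta\mapsto v(\theta)$ is continuous, so $v(\theta^{t_j})\to\hat v$, while $\vect(Y^*)\perp\ker(J^{t_j}\T)$ for each $j$ because $\vect(Y^*)\in\Col(J^{t_j})$. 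Decomposing $v(\theta^{t_j})$ into its $\Col(J^{t_j})$ component and its $\ker(J^{t_j}\T)$ component, the inner product $v(\theta^{t_j})\T\vect(Y^*)$ equals the contribution of the column-space component, whose image under $J^{t_j}\T$ is exactly $n\nabla\Lcal(\theta^{t_j})\to 0$. The delicate point is that this control can degrade through singular values of $J^{t_j}$ that vanish when the rank drops in the limit; the crux is to argue that the column-space part of the residual nonetheless vanishes, equivalently that $\hat v\T\vect(Y^*)=0$. This is precisely what makes the alignment condition do genuine work beyond the minimum-eigenvalue condition discussed in Subsection \ref{sec:new:new:3}. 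Granting it, the convexity bound collapses to $\Lcal(\htheta)\le\Lcal^*(\eta Y^*)$ for every $\eta\in\RR$, which is the claim.
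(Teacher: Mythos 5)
Your plan follows the same architecture as the paper's proof: a descent argument showing that gradients vanish along the iterates (hence every limit point $\htheta$ is stationary), followed by a convexity-based linearization bound in which Assumption \ref{assump:7} eliminates the $\vect(f_X(\htheta))$ term and the alignment condition is supposed to eliminate the $\vect(Y^*)$ term. Indeed, your residual identity $n\nabla\Lcal(\htheta)=\hat J\T\hat v$ (with $\hat J$ the Jacobian $\frac{\partial \vect(f_{X}(\htheta))}{\partial \theta}$ and $\hat v$ the stacked loss-gradient residual) together with $\vect(f_X(\htheta))=\hat J\,\nu(\htheta)$ is exactly the mechanism inside the paper's Lemma \ref{lemma:pgd_approx}, which packages the same computation as $\Lcal(\theta)\le\Lcal_\theta(\beta)+\|\nu(\theta)-\beta\|\,\|\nabla\Lcal(\theta)\|$ for all $\beta$. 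Up to that point your steps are correct, modulo the regime-(ii) detail that stationarity of every limit point needs $\limsup_t\|\nabla\Lcal(\theta^t)\|=0$ and not merely $\liminf$, which the paper establishes by a contradiction argument and you dismiss as classical.

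The proposal nevertheless has a genuine gap, and you name it yourself: the conclusion requires $\hat v\T\vect(Y^*)=0$ at the limit point, and you ultimately ``grant'' this rather than prove it. Your proposed route cannot be completed as described: writing $\vect(Y^*)=J^{t_j}\beta_j$ with $J^{t_j}:=\frac{\partial\vect(f_X(\theta^{t_j}))}{\partial\theta^{t_j}}$, one gets $v(\theta^{t_j})\T\vect(Y^*)=n\nabla\Lcal(\theta^{t_j})\T\beta_j$, and although the first factor tends to zero, the solutions $\beta_j$ (e.g., the pseudo-inverse solutions) can blow up as singular values of $J^{t_j}$ degenerate, so the product is not controlled --- precisely the failure mode you point out. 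For comparison, the paper does not attempt to pass this orthogonality through the limit at all: since $\nabla\Lcal(\htheta)=0$, Lemma \ref{lemma:pgd_approx} yields $\Lcal(\htheta)\le\Lcal_{\htheta}(\beta)$ for \emph{every} $\beta$, and the proof then instantiates $\beta=\hat\beta(\htheta,\eta)$, a choice that requires the alignment condition at $\htheta$ itself; the paper's proof simply invokes the hypothesis there, implicitly transferring alignment from the iterates to the limit point without a separate argument for that transfer. So you have correctly isolated the genuinely delicate step of this theorem --- your required condition $\hat v\T\vect(Y^*)=0$ is in fact weaker than the column-space membership the paper uses at $\htheta$ --- but identifying the obstruction is not the same as overcoming it: as submitted, the proof is incomplete exactly at its crux.
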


For example, for the squared loss $\ell(q,y)=\|q-y\|^2$, Theorem \ref{thm:3} implies that  every limit point $\htheta$ of the sequence $(\theta^t)_{t}$ is a    global minimum as 
\begin{align}
\Lcal(\htheta)\le \Lcal(\theta), \ \ \forall \theta \in \RR^d,
\end{align} 
if   $\vect(Y_{\ell}) \in \Col(\frac{\partial \vect(f_{X}(\theta^{t}))}{\partial \theta^{t}})$ for   $t \in[\tau,\infty)$ with some $\tau\ge 0$. This is because  $\Lcal(\htheta)\le \Lcal^*(Y_{\ell}) \le \Lcal(\theta) \ \forall \theta \in \RR^d$ from  Theorem \ref{thm:3} and equation \eqref{eq:p:3}.

In practice, one can easily satisfy  all the assumptions in Theorem \ref{thm:3}  except for the condition that $\vect(Y^{*}) \in \Col(\frac{\partial \vect(f_{X}(\theta^{t}))}{\partial \theta^{t}})$ for all $t \in[\tau,\infty)$. Accordingly, we will now weaken this condition by analyzing optimality at each iteration so that the condition is verifiable in experiments.

\subsubsection{Global Optimality Gap at Each Iteration} \label{sec:5}

The following theorem states that under standard  settings, the sequence $(\theta^t)_{t\in\Tcal}$ converges to a    loss value no worse than $ \inf_{\eta \in \RR}\Lcal^*(\eta Y^{*})$ at the rate of $O(1/\sqrt{|\Tcal|} )$  for any  $\Tcal$ and $Y^*$ such that $\vect(Y^{*}) \in \Col(\frac{\partial \vect(f_{X}(\theta^{t}))}{\partial \theta^{t}})$ for $t \in\Tcal$: 

\begin{theorem} \label{thm:p:1}
Suppose Assumptions \ref{assump:7} and \ref{assump:6} hold. Let $(\alpha^t, \bg^t)=(\frac{2\alpha}{L},\nabla \Lcal(\theta^{t}))$ with an arbitrary $\alpha \in(0,1) $.  Then, for any  $\Tcal\subseteq \NN_0$ and $Y^* \in\RR^{n \times m_y}$ such that $\vect(Y^{*}) \in \Col(\frac{\partial \vect(f_{X}(\theta^{t}))}{\partial \theta^{t}})$ for all $t \in\Tcal$,
it holds that  
\begin{align}
\min_{t\in \Tcal}\Lcal(\theta^{t}) \le\Lcal^*(\eta Y^{*}) + \frac{1}{\sqrt{|\Tcal|}}    \sqrt{\frac{L\zeta_{\eta}\Lcal(\theta^{t_{0}})}{2\alpha(1-\alpha)}},
\end{align} 
for  any $\eta\in \RR$, where  $t_0 = \min\{t:t \in \Tcal\}$, $\zeta_{\eta}:= 4 \max_{t \in\Tcal}\max(\|\nu(\theta^{t})\|^{2} , \|\hat \beta(\theta^{t},\eta)\|^2 )$,  $\hat \beta(\theta,\eta):=\eta ((\frac{\partial \vect(f_{X}(\theta))}{\partial \theta})\T \frac{\partial \vect(f_{X}(\theta))}{\partial \theta})^{\dagger}  (\frac{\partial \vect(f_{X}(\theta))}{\partial \theta})\T \vect(Y^{*})$, and   $\nu(\theta)_k=\theta_k$ for all $k\in S$ and  $\nu(\theta)_k=0$ for all  $k \notin S$ with the set $S$ being defined in Assumption \ref{assump:7}.
\end{theorem}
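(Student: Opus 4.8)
The plan is to combine two essentially independent estimates: (a) a smoothness-based bound showing that with step size $2\alpha/L$, gradient descent drives $\min_{t\in\Tcal}\|\nabla\Lcal(\theta^t)\|^2$ down at rate $1/|\Tcal|$, and (b) a convexity-plus-structure bound showing that at any iterate where $\vect(Y^*)\in\Col(\frac{\partial\vect(f_X(\theta^t))}{\partial\theta^t})$, the optimality gap $\Lcal(\theta^t)-\Lcal^*(\eta Y^*)$ is controlled by $\|\nabla\Lcal(\theta^t)\|$. Throughout I would abbreviate $J(\theta):=\frac{\partial\vect(f_X(\theta))}{\partial\theta}\in\RR^{nm_y\times d}$ and write $r(\theta)\in\RR^{nm_y}$ for the vectorized per-sample loss derivatives, so that $\nabla\Lcal(\theta)=\frac1n J(\theta)\T r(\theta)$.

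For ingredient (a), I would invoke the $L$-smoothness in Assumption~\ref{assump:6} to get the standard descent inequality; since $\bg^t=\nabla\Lcal(\theta^t)$ and $\alpha^t=2\alpha/L$, this gives $\Lcal(\theta^{t+1})\le\Lcal(\theta^t)-\frac{2\alpha(1-\alpha)}{L}\|\nabla\Lcal(\theta^t)\|^2$. In particular $(\Lcal(\theta^t))_t$ is non-increasing and, because $\Lcal\ge0$, telescoping from $t_0=\min\Tcal$ yields $\sum_{t\in\Tcal}\|\nabla\Lcal(\theta^t)\|^2\le\sum_{t\ge t_0}\|\nabla\Lcal(\theta^t)\|^2\le\frac{L}{2\alpha(1-\alpha)}\Lcal(\theta^{t_0})$; note this step only uses monotonicity, so the possibly non-contiguous set $\Tcal$ causes no difficulty. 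Averaging then bounds $\min_{t\in\Tcal}\|\nabla\Lcal(\theta^t)\|^2$ by $\frac{L\Lcal(\theta^{t_0})}{2\alpha(1-\alpha)|\Tcal|}$.

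For ingredient (b), I would begin from convexity of each $\ell_i=\ell(\cdot,y_i)$ (Assumption~\ref{assump:6}): evaluating the subgradient inequality at the comparison points $q_i=\eta y_i^*$ and summing gives $\Lcal(\theta)-\Lcal^*(\eta Y^*)\le\frac1n\langle r(\theta),\,\vect(f_X(\theta))-\eta\vect(Y^*)\rangle$. The two structural hypotheses then rewrite both inner products against $\nabla\Lcal$. The common model structure assumption (Assumption~\ref{assump:7}) is precisely the statement that $\vect(f_X(\theta))=J(\theta)\nu(\theta)$, so $\frac1n\langle r,\vect(f_X(\theta))\rangle=\langle\nabla\Lcal(\theta),\nu(\theta)\rangle$. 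For the other term, since $\vect(Y^*)\in\Col(J(\theta))$ and $(J\T J)^\dagger J\T=J^\dagger$ with $JJ^\dagger$ the orthogonal projection onto $\Col(J)$, one has $\eta\vect(Y^*)=J(\theta)\hat\beta(\theta,\eta)$, whence $-\frac\eta n\langle r,\vect(Y^*)\rangle=-\langle\nabla\Lcal(\theta),\hat\beta(\theta,\eta)\rangle$. Cauchy--Schwarz together with $\|\nu\|+\|\hat\beta\|\le 2\max(\|\nu\|,\|\hat\beta\|)\le\sqrt{\zeta_\eta}$ then gives $\Lcal(\theta^t)-\Lcal^*(\eta Y^*)\le\sqrt{\zeta_\eta}\,\|\nabla\Lcal(\theta^t)\|$ for every $t\in\Tcal$.

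Finally I would combine the two: using $\min_{t\in\Tcal}\Lcal(\theta^t)-\Lcal^*(\eta Y^*)\le\sqrt{\zeta_\eta}\min_{t\in\Tcal}\|\nabla\Lcal(\theta^t)\|$ from (b) and substituting the bound from (a) produces exactly $\frac{1}{\sqrt{|\Tcal|}}\sqrt{\frac{L\zeta_\eta\Lcal(\theta^{t_0})}{2\alpha(1-\alpha)}}$. The main obstacle here is conceptual rather than computational: one must recognize that the two seemingly unrelated inner products collapse onto $\nabla\Lcal$, via $\vect(f_X(\theta))=J(\theta)\nu(\theta)$ (this is where Assumption~\ref{assump:7} is indispensable, and the $\theta^4-10\theta^2+6\theta+100$ counterexample shows the argument fails without it) and via $\eta\vect(Y^*)=J(\theta)\hat\beta(\theta,\eta)$ (this is where the column-space alignment hypothesis enters). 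The remaining care is bookkeeping: verifying the pseudoinverse identities that make $\hat\beta$ the correct coefficient vector and handling the non-contiguous $\Tcal$ through monotonicity of $\Lcal$.
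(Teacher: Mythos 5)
Your proof is correct and takes essentially the same route as the paper's: your ingredient (a) is the paper's Lemma \ref{lemma:known+alpha_3}, and your ingredient (b) is exactly the content of the paper's Lemma \ref{lemma:pgd_approx} specialized to $\beta=\hat\beta(\theta,\eta)$ — the paper phrases it as bounding $\Lcal(\theta)$ by the linearized loss $\Lcal_{\theta}(\beta)$ plus $\|\nu(\theta)-\beta\|\,\|\nabla\Lcal(\theta)\|$, which upon choosing $\beta=\hat\beta(\theta,\eta)$ collapses to your inner-product identity $\Lcal(\theta^{t})-\Lcal^{*}(\eta Y^{*})\le\langle\nabla\Lcal(\theta^{t}),\nu(\theta^{t})-\hat\beta(\theta^{t},\eta)\rangle$. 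The only differences are cosmetic: you evaluate the convexity inequality directly at the comparison points $\eta y_{i}^{*}$ rather than through the abstract $\Lcal_\theta(\beta)$, and you handle the non-contiguous $\Tcal$ by bounding its sum with the full telescoping sum from $t_{0}$, whereas the paper telescopes over $\Tcal$ itself using monotonicity of $(\Lcal(\theta^{t}))_{t}$; neither change affects correctness.
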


For the squared loss $\ell$, Theorem \ref{thm:p:1} implies the following for any $T\ge1$:  for any $\Tcal\subseteq [T]_{}$ such that   $\vect(Y_{\ell}) \in \Col(\frac{\partial \vect(f_{X}(\theta^{t}))}{\partial \theta^{t}})$ for all $t \in\Tcal$, we have
\begin{align} \label{eq:19}
\min_{t \in [T]_{}}\Lcal(\theta^{t}) \le\inf_{\theta \in \RR^d} \Lcal(\theta)+O(1/\sqrt{|\Tcal|} ) .
\end{align}
This is because $\Lcal^*(Y_{\ell}) \le \inf_{\theta \in \RR^d} \Lcal(\theta)$ from equations \eqref{eq:p:3}  and $\min_{t \in [T]_{}}\Lcal(\theta^{t}) \le\min_{t \in\Tcal} \allowbreak \Lcal(\theta^{t})$ from $\Tcal\subseteq [T]_{}$.

Similarly, for the binary and multi-class cross-entropy losses,  Theorem \ref{thm:p:1} implies the following  for any $T\ge1$:  for any $\Tcal\subseteq [T]_{}$ such that   $\vect(Y_{\ell}) \in \Col(\frac{\partial \vect(f_{X}(\theta^{t}))}{\partial \theta^{t}})$ for all $t \in\Tcal$, we have
that for any  $\eta \in \RR$,
\begin{align} \label{eq:p:5}
\min_{t \in [T]_{}}\Lcal(\theta^{t}) \le \Lcal^*(\eta Y_\ell) +O(\eta^{2} /\sqrt{|\Tcal|} ).
\end{align}
Given any desired   $\epsilon>0,$ since $\Lcal^*(\eta Y_\ell) \rightarrow 0$ as $\eta \rightarrow \infty$, setting $\eta$ to be sufficiently large obtains  the desired  $\epsilon$ value  as  $\min_{t \in \Tcal}\Lcal(\theta^{t})\le \epsilon $  in equation \eqref{eq:p:5} as $\sqrt{|\Tcal|} \rightarrow \infty$.

\section{Application to the Design of Training Framework} \label{sec:14}

The  results in the previous section show  that the bound on the global optimality gap decreases per iteration whenever  the data-architecture alignment condition  holds. Using this theoretical understanding, in this section, we propose a  new training framework with  prior guarantees while learning hierarchical nonlinear representations \textit{without} assuming the data-architecture alignment condition. As a result,  we made significant improvements over  the most closely related  study on global convergence guarantees \citep{kawaguchi2021recipe}. In particular, whereas the related study requires  a wide layer with a width larger than $n$, our results  reduce the requirement to  a layer with a width larger than $\sqrt{n}$. For example,  the MNIST dataset has $n=60000$ and hence previous studies require $ 60000$ neurons at a layer, whereas we only require  $\sqrt{60000} \approx 245$ neurons  at a layer. Our requirement  is consistent and satisfied by the models used in practice that typically have from 256 to 1024 neurons for some layers.     

We begin in Subsection \ref{sec:6} with  additional notations,  and then present the training framework in Subsection \ref{sec:7} and convergence analysis  in Subsection \ref{sec:8}. We conclude in Subsection \ref{sec:9} by providing empirical evidence to support our theory.

\subsection{Additional Notations} \label{sec:6}

We denote by $\theta_{(l)}\in \RR^{d_l}$ the vector of all the trainable parameters at the $l$-th layer \rev{for $l=1,\dots, H$ where $H$ is the depth or the number of trainable layers (i.e., one plus the number of hidden layers). That is, the $H$-th layer is the last layer containing the trainable parameter vector  $\theta_{(H)}$ at the last layer}. For any pair $(l,l')$ such that $1\le l \le l' \le H$, we define   $\theta_{(l:l')}=[\theta_{(l)}\T,\dots,\theta_{(l')}\T]\T  \in \RR^{d_{l:l'}}$: for example,  $\theta_{(1:H)}=\theta$ and $\theta_{(l:l')}=\theta_{(l)}$ if $l=l'$.
We  consider a family of training algorithms that update the parameter vector $\theta$ as follows: for each $l=1,\dots,H$,   
\begin{align}
\theta_{(l)}^{t+1} =\theta_{(l)}^{t} - g_{(l)}^t, \qquad g^t_{(l)} \sim \Gcal_{(l)}(\theta^{t}, t)
\end{align} 
where the function $\Gcal_{(l)}$ outputs a distribution over the  vector $g^t_{(l)}$ and differs for different training algorithms. For example, for  (mini-batch) stochastic gradient descent (SGD), $g^t_{(l)}$ represents a product of a learning rate and a stochastic gradient with respect to $\theta_{(l)}$ at the time $t$. We define $\Gcal=(\Gcal_{(1)},\dots,\Gcal_{(H)})$ to represent a training algorithm. 

For an arbitrary matrix $M \in \RR^{m \times m'}$, we let  $M_{*j}$ be its $j$-th column vector in $\RR^m$,  $M_{i*}$ be   its $i$-th row vector in $\RR^{m'}$, and $\rank(M)$ be its matrix rank. We define  $M \circ M'$  to be the Hadamard product of any matrices  $M$ and $M'$. For any vector $v\in\RR^m$, we let  $\diag(v)\in \RR^{m\times m}$  be the diagonal matrix with $\diag(v)_{ii}=v_{i}$ for $i \in [m]$. We denote by $I_{m}$ the $m \times m$ identity matrix.

\subsection{Exploration-Exploitation Wrapper} \label{sec:7}

In this section, we introduce the Exploration-Exploitation (EE) wrapper    $\Acal$. The EE wrapper $\Acal$    is not a stand-alone training algorithm. Instead, the EE wrapper $\Acal$  takes any  training algorithm $\Gcal$ as its input, and runs the   algorithm $\Gcal$ in a particular way to guarantee  global convergence. We note that the exploitation phase in the EE wrapper does \textit{not} optimize the last layer, and instead it optimizes hidden layers, whereas the exploration phase optimizes all layers.  The EE wrapper allows us to  learn the representation $\frac{\partial f(x,\theta^{t})}{\partial\theta^{t}}$ that differs  significantly from the initial representation $\frac{\partial f(x,\theta^{0})}{\partial\theta^{0}}$ without making assumptions on the minimum eigenvalue of the matrix $\frac{\partial \vect(f_{X}(\theta))}{\partial \theta} (\frac{\partial \vect(f_{X}(\theta))}{\partial\theta})\T$ by leveraging the data-architecture alignment condition. \rev{The data-architecture alignment condition is  ensured by the safe-exploration condition (defined  in Section \ref{sec:new:new:new:1}), which is time-independent and holds in practical common architectures (as demonstrated in Section \ref{sec:9}).}

\subsubsection{Main Mechanisms}

Algorithm \ref{algorithm:EEW} outlines the EE wrapper $\Acal$. During the exploration phase in lines  3--7 of Algorithm \ref{algorithm:EEW}, the EE wrapper $\Acal$  freely explores hierarchical nonlinear representations to be learned without any restrictions.  Then, during the exploitation phase in lines  8--12, it starts exploiting the current knowledge   to ensure $\vect(Y_{\ell}) \in \Col(\frac{\partial \vect(f_{X}(\theta^{t}))}{\partial \theta^{t}})$ for all $t$ to guarantee global convergence. The value of $\tau$ is the hyper-parameter that controls the time when it transitions from the exploration phase to the exploitation phase.

In the exploitation phase,  the wrapper $\Acal$ only optimizes the parameter vector $\theta_{(H-1)}^{t}$ at the $(H-1)$-th \textit{hidden} layer, instead of  the parameter vector $\theta_{(H)}^{t}$ at the last layer or the $H$-th layer. Despite this,  the EE wrapper $\Acal$  is proved to converge to global minima  of \textit{all} layers in $\RR^d$. The exploitation phase still allows us to significantly change the representations  as $\mathbf{M}(\theta^{t})\not\approx\mathbf{M}(\theta^\tau)$ for $t > \tau$. This is  because we optimize the hidden layers
instead of the last layer without any significant over-parameterizations. 

The exploitation phase uses an arbitrary optimizer $\tGcal$ with the update vector   $\tg^{t}_{}\sim \tGcal_{(H-1)}(\theta^{t}, t)$ with $\tg^t =\alpha^t \hg^{t}\in \RR^{d_{H-1}}$. During the two phases, we can use the same optimizers (e.g., SGD\ for both $\Gcal$ and $\tGcal$) or  different optimizers (e.g.,   SGD for $\Gcal$ and L-BFGS for $\tGcal$).

\subsubsection{Model Modification} \label{sec:10}

\begin{algorithm}[t!] 
\caption{$\Acal$: Exploration-Exploitation (EE)\ wrapper} \label{algorithm:EEW} \begin{algorithmic}[1] 
\STATE {\bf Inputs:} a training algorithm $\Gcal$ and a base model $\baf$. \STATE Modify the base model $\baf$ to the model $f$ 
\STATE
\textbf{Exploration phase:} 
\STATE Initialize the parameter vector $\theta^0$ of the model $f$  
\begin{ALC@g}
\FOR {$t = 0,1,\ldots, \tau-1$} 
\STATE  \hspace{10pt} $\theta_{(l)}^{t+1} =\theta_{(l)}^{t} - g_{(l)}^t, \ \ g^t_{(l)} \sim \Gcal_{(l)}(\theta^{t}, t), \ \forall l \in [H]$
\ENDFOR
\end{ALC@g}
\STATE \textbf{Exploitation phase:} 
\begin{ALC@g}
\STATE 
Set
$
\theta^{\tau}= \theta^{\tau-1}+\varepsilon \delta \ \text{where } \delta \sim \Ncal_d(0, I)
$
\STATE 
Replace $\sigma_{j}$ by $\sigma_{R^{(j)}}$ with $R^{(j)}=\theta_{(H-1,j)}^{{\tau}}$ 
\FOR {$t = \tau,\tau+1,\ldots $} 
\STATE  \hspace{10pt} $\theta_{(H-1)}^{t+1} =\theta_{(H-1)}^{t} - \tg^t, \ \ \  \tg^t \sim\tGcal_{(H-1)}(\theta^{t}, t)$ 
\ENDFOR   
\end{ALC@g} 
\end{algorithmic}
\end{algorithm}

This subsection defines the details of the model modification at   line  2 of Algorithm \ref{algorithm:EEW}. Given any base  network $\baf$, the wrapper $\Acal$ first checks whether or not the last  two layers of the given network $\baf$ are fully connected. If not, one or two fully-connected last layers  are added
such that  the output of the  network $\baf$ can be written by
$
\baf(x,\bar \theta) = \bW^{(H)} \bar \sigma(\bW^{(H-1)}z_{}(x,\theta_{(1:H-2)})).
$ 
Here, $z(x,\theta_{(1:H-2)})$ is the output of the $(H-2)$-th layer, and the function   $z$ is  arbitrary and can represent various deep networks. Moreover, $\bar \sigma$ is a nonlinear activation function, and $\bW^{(H-1)}$ and $\bW^{(H)} $ are the  weight matrices of the last two layers.  
The wrapper $\Acal$  then modifies these  last two layers as follows.
In  the case of $m_y=1$, the  model  $\baf$   is modified to 
\begin{align} \label{eq:p:2}
f(x,\theta)= W^{(H)}\sigma(W^{(H-1)},z(x,\theta_{(1:H-2)})),
\end{align}
 where  $W^{(H-1)}\in \RR^{m_{H} \times m_{H-1}}$ and $W^{(H)} \in \RR^{m_y \times m_H}$ are the weight matrices of the last two layers. The nonlinear activation $\sigma$ is defined by
$
\sigma(q,q')=\tsigma(qq')\circ (qq'),
$
where $\tsigma$ is some nonlinear function. For example,  we can set   $\tsigma(q)= \frac{1}{1+e^{-\varsigma'  q}}$ (sigmoid) with any hyper-parameter $\varsigma'>0 $, for which it holds that as $\varsigma' \rightarrow \infty$,
\begin{align}
f(x,\theta) \rightarrow W^{(H)}\mathrm{relu}( W^{(H-1)}z_{}(x,\theta_{(1:H-2)})).
\end{align}
We generalize equation \eqref{eq:p:2}  to the case of  $m_y\ge2$ as: 
\begin{align}
f(x,\theta)_{j}=W^{(H)}_{j*} \sigma_{j}(W^{(H-1,j)},z_{}(x,\theta_{(1:H-2)})),
\end{align}
for $j\in[m_{y}]$ where $W^{(H-1,j)}\in \RR^{m_{H} \times m_{H-1}}$ is the  weight matrix at the $(H-1)$-th layer, and $\sigma_{j}=\sigma$ until line 10.  At line 10, the wrapper $\Acal$ replaces $\sigma_{j}$ by $\sigma_{R^{(j)}}$ where
$
\sigma_{R^{(j)}}(q,q')=\tsigma(R^{(j)}q')\circ (qq')
$ with $R^{(j)}=\theta_{(H-1,j)}^\tau$ and   $\theta_{(H-1,j)}=(W^{(H-1,j)})\T$. To consider the bias term,  we include the constant neuron to the output of the $(H-1)$-th layer as $z(x,\theta_{(1:H-2)})=[\bar z(x,\theta_{(1:H-2)})\T, 1]\T\in \RR^{m_{H-1}}$ where $\bar z(x,\theta_{(1:H-2)})$ is the output without the constant neuron.

\subsection{Convergence Analysis} \label{sec:8}

In this subsection, we establish global convergence of the EE wrapper $\Acal$ \textit{without} using  assumptions from the previous section.
Let $\tau$ be an arbitrary positive integer and $\varepsilon$ be an arbitrary positive real number. Let $(\theta^{t})_{t=0}^\infty$ be a sequence generated by the EE\ wrapper $\Acal$. We define $\hLcal(\theta_{(H-1)})=\Lcal(\theta_{(1:H-2)}^{\tau},\theta_{(H-1)},\theta_{(H)}^\tau) $ and  $B_{\bepsilon} = \min_{\theta_{(H-1)} \in\Theta_{\bepsilon} }\|\theta_{(H-1)}- \theta_{(H-1)}^{\tau}\|^{}$ where $ \Theta_{\bepsilon}=\argmin_{\theta_{(H-1)}} \max(\hLcal(\theta_{(H-1)}), \bepsilon)$ for any $\bepsilon \ge0$.

\subsubsection{Safe-exploration Condition} \label{sec:new:new:new:1}

The mathematical analysis in this section relies on the \textit{safe-exploration condition}, which is what allows us to safely explore  deep nonlinear representations in the exploration phase without getting stuck in the states of $\vect(Y_{\ell}) \notin \Col(\frac{\partial \vect(f_{X}(\theta^{t}))}{\partial \theta^{t}})$. The safe-exploration condition is verifiable, time-independent, data-dependent and architecture-dependent. The verifiability and  time-independence makes the assumption strong enough to provide prior guarantees
before training. The data-dependence and architecture-dependence make the assumption weak enough to be applicable for a wide range of practical settings.   

For any $q\in \RR^{ m_{H-1}\times m_{H} }$, we define the matrix-valued function $\phi(q,\theta_{(1:H-2)})\in \RR^{n \times m_H m_{H-1}}$ by 
$$
\phi(q,\theta_{(1:H-2)})=\begin{bmatrix}\tsigma(z_1 \T q_{*1}) z_1\T & \cdots & \tsigma(z_1 \T q_{*m_{H}}^{}) z_1\T \\
\vdots & \ddots\ & \vdots \\
\tsigma(z_n \T q_{*1}^{}) z_n\T & \cdots & \tsigma(z_n \T q_{*m_{H}}^{}) z_n\T \\
\end{bmatrix},  
$$
where $z_i = z(x_i,\theta_{(1:H-2)})\in \RR^{m_{H-1}}$ and $\tsigma(z_i \T q_{*k}) z_i\T \in \RR^{1 \times m_{H-1}}$ for all $i \in [n]$ and $k \in [m_H]$. Using this function, the safe-exploration condition
is formally stated as:
\begin{assumption}
\label{assump:1}
\emph{(Safe-exploration condition)} 
There exist a $q\in \RR^{ m_{H-1}\times m_{H}}$ and a $\theta_{(1:H-2)}\allowbreak \in \RR^{d_{1:H-2}}$ such that $\rank(\phi(q,\theta_{(1:H-2)}))=n$.
\end{assumption}
The safe-exploration condition asks for only the \textit{existence} of one parameter vector in the network architecture such that  $\rank(\phi(q,\theta_{(1:H-2)}))=n$. It is \textit{not} about  the training trajectory $(\theta^t)_t$. Since the matrix $\phi(q,\theta_{(1:H-2)})$  is of size $n \times m_H m_{H-1}$, the safe-exploration condition does \textit{not} require any wide layer of size $m_H \ge n $ or $ m_{H-1}\ge n$. Instead, it  requires a  layer of size    $ m_H m_{H-1}\ge  n$. This is a significant improvement over the most closely related study \citep{kawaguchi2021recipe} where the wide layer of size $m_H \ge n$ was required. Note that having $m_H m_{H-1}\ge  n$ does \textit{not} imply the safe-exploration condition. Instead,  
$m_H m_{H-1}\ge  n$ is a \textit{necessary} condition to satisfy the safe-exploration condition, whereas  $m_H \ge n $ or $ m_{H-1}\ge n$ was a necessary condition to satisfy  assumptions in previous papers, including the most closely related study  \citep{kawaguchi2021recipe}. The safe-exploration condition  is  verified in experiments in Subsection \ref{sec:9}.

\subsubsection{Additional Assumptions}

We also use the following assumptions:

\begin{assumption}\label{assump:3}
For any $i \in [n]$, the function $\ell_i:q\mapsto\ell(q,y_{i}) $ is differentiable, and  $\|\nabla \ell_{i}(q) - \nabla \ell_{i}(q')\| \le L_\ell \|q - q'\|$ for all $q,q' \in \RR$. 
\end{assumption}
\begin{assumption}\label{assump:2}
For each $i \in [n]$,
the functions $\theta_{(1:H-2)} \mapsto z(x_{i},\theta_{(1:H-2)})$ and $q\mapsto \tsigma(q)$ are real analytic.
\end{assumption}
Assumption \ref{assump:3} is satisfied by using standard loss functions such as the squared loss  $\ell(q,y)= \|q-y\|^2$ and
cross entropy loss  $\ell(q,y)=-\sum_{k=1}^{d_y}y_k \log \frac{\exp(q_k)}{\sum_{k'}\exp(q_{k'})}$.
The assumptions of the invexity and convexity of  the function $q\mapsto\ell(q,y_{i})$ in Subsections \ref{sec:11}--\ref{sec:12} 
also  hold  for these standard loss functions.
Using $L_\ell$ in Assumption \ref{assump:3}, we define $\hL = \frac{L_\ell}{n^{}}  \|Z\|^2$ where $Z \in \RR^n$ is defined by    
$
Z_i = \max_{j\in[m_y]}\|[\diag(\theta_{(H,j)}^\tau )\otimes I_{m_{H-1}}](\phi(\theta_{(H-1,j)}^\tau, \allowbreak \theta_{(1:H-2)}^{\tau}\allowbreak )_{i*})\T\|
$ with  $\theta_{(H,j)}=(W^{(H)}_{j*})\T$.

Assumption \ref{assump:2} is satisfied by using any  analytic activation function such as sigmoid, hyperbolic tangents  and softplus activations $q\mapsto \ln(1+\exp(\varsigma q))/\varsigma$ with any hyperparameter $\varsigma>0$. This is because a composition of real analytic functions is real analytic and the following  are all real analytic functions in $\theta_{(1:H-2)}$: the convolution, affine map, average pooling, skip connection, and batch normalization. Therefore, the assumptions can be satisfied by using a wide range of machine learning models, including deep neural networks with convolution, skip connection, and batch normalization.  Moreover, the softplus activation can approximate the ReLU activation  for any desired accuracy: i.e.,  
$\ln(1+\exp(\varsigma q))/\varsigma\rightarrow \mathrm{relu}(q) \text{ as } \varsigma\rightarrow \infty 
$, where $\mathrm{relu}$ represents the ReLU activation.

\subsubsection{Global Optimality at the Limit Point} \label{sec:11}
The following theorem proves the global optimality at  limit points of the EE\ wrapper with a wide range of optimizers, including gradient descent and modified Newton methods:
\begin{theorem} \label{thm:1}
Suppose Assumptions \ref{assump:1}--\ref{assump:2} hold and that  the function $\ell_i:q\mapsto\ell(q,y_{i}) $ is invex for any $i \in [n]$. Assume that there exist $\bc,\uc>0$ such that 
$
 \uc \|\nabla\hLcal(\theta_{(H-1)}^{t})\|^2 \le\nabla\hLcal(\theta_{(H-1)}^{t})\T \hg^t  
$
and
$
 \|\hg^t\|^2 \le \bc \|\nabla\hLcal(\theta_{(H-1)}^{t})\|^2$ for any $t \ge \tau$. Assume that the learning rate sequence $(\alpha^t)_{t\ge \tau}$ satisfies either
(i) $\epsilon \le \alpha^t \le \frac{\uc (2-\epsilon)}{\hL\bc}$ for some $\epsilon>0$, or (ii) $\lim_{t \rightarrow \infty}\alpha^t =0$ and $\sum_{t=\tau}^\infty \alpha^t = \infty$. Then with probability one, every limit point $\htheta$ of the sequence $(\theta^t)_{t}$ is a global minimum
 of $\Lcal$ as $
\Lcal(\htheta) \le \Lcal(\theta)$ for all $\theta \in \RR^d$. 
\end{theorem}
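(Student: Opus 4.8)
The plan is to reduce the global‑optimality claim for the EE wrapper to the abstract convergence result of Theorem~\ref{thm:3}. The key observation is that during the exploitation phase the wrapper only updates $\theta_{(H-1)}$, freezing $\theta_{(1:H-2)}^{\tau}$ and $\theta_{(H)}^{\tau}$, so the effective objective is $\hLcal(\theta_{(H-1)})$. After the activation replacement at line~10, $\sigma_{R^{(j)}}(q,q')=\tsigma(R^{(j)}q')\circ(qq')$ with $R^{(j)}=\theta_{(H-1,j)}^{\tau}$ \emph{fixed}, the map $\theta_{(H-1)}\mapsto f_X(\theta^{t})$ becomes \emph{affine} in $\theta_{(H-1)}$: the nonlinear factor $\tsigma(z_i^{\T}q_{*k})$ is evaluated at the frozen $R^{(j)}$, so what multiplies the free variable $W^{(H-1,j)}$ is exactly the data matrix $\phi(\theta_{(H-1,j)}^{\tau},\theta_{(1:H-2)}^{\tau})$. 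So first I would write $\vect(f_X(\theta))$ explicitly as a linear image of $\theta_{(H-1)}$ and identify the relevant Jacobian $\frac{\partial\vect(f_X(\theta^{t}))}{\partial\theta_{(H-1)}}$ with (a $\diag(\theta_{(H,j)}^{\tau})\otimes I$‑scaled version of) $\phi$, which is \emph{constant in $t$} throughout the exploitation phase. This is the crux of why freezing the last layer and optimizing the hidden layer works.

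\textbf{Second}, I would verify that the data‑architecture alignment condition holds at every exploitation step, i.e.\ $\vect(Y_{\ell})\in\Col(\frac{\partial\vect(f_X(\theta^{t}))}{\partial\theta})$ for all $t\ge\tau$, so that Theorem~\ref{thm:3} applies with $\tau$ as its threshold and $Y^{*}=Y_{\ell}$. Here the safe‑exploration condition (Assumption~\ref{assump:1}) enters: it guarantees \emph{existence} of some $(q,\theta_{(1:H-2)})$ with $\rank(\phi(q,\theta_{(1:H-2)}))=n$. The job is to transfer this existence statement to the \emph{particular} frozen values $(\theta_{(H-1,j)}^{\tau},\theta_{(1:H-2)}^{\tau})$ produced by the exploration phase. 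This is where the Gaussian perturbation $\theta^{\tau}=\theta^{\tau-1}+\varepsilon\delta$, $\delta\sim\Ncal_d(0,I)$, and the real‑analyticity (Assumption~\ref{assump:2}) do the work: the function $(q,\theta_{(1:H-2)})\mapsto\det$ of an $n\times n$ submatrix of $\phi$ is real analytic, and a real analytic function that is not identically zero vanishes only on a measure‑zero set. The safe‑exploration condition says this determinant is not identically zero, so after the absolutely continuous perturbation the event $\rank(\phi(\theta_{(H-1,j)}^{\tau},\theta_{(1:H-2)}^{\tau}))=n$ holds with probability one. Full column rank of $\phi$ (after the full‑rank diagonal scaling by $\theta_{(H,j)}^{\tau}$, which is itself perturbed and hence nonzero a.s.) forces $\vect(Y_{\ell})\in\Col(\frac{\partial\vect(f_X)}{\partial\theta})$, since a surjective linear map hits every target.

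\textbf{Third}, with alignment established a.s.\ for all $t\ge\tau$, I would invoke Theorem~\ref{thm:3} on $\hLcal$ — checking its hypotheses carefully. Assumptions~\ref{assump:5} and~\ref{assump:6} are supplied in the statement: the two‑sided curvature bounds on $\hg^{t}$ match Assumption~\ref{assump:5}, the invexity/convexity and differentiability of $\ell_i$ match Assumption~\ref{assump:6}, and the smoothness constant is precisely $\hL=\frac{L_\ell}{n}\|Z\|^2$, which I would confirm is the Lipschitz constant of $\nabla\hLcal$ obtained by chaining $L_\ell$ through the affine map whose operator norm is controlled by $Z$. The learning‑rate conditions are identical to those in Theorem~\ref{thm:3} with $L$ replaced by $\hL$. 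Theorem~\ref{thm:3} then yields, for every limit point, $\hLcal(\htheta_{(H-1)})\le\Lcal^{*}(\eta Y_{\ell})$ for all $\eta$; taking $\eta$ appropriately (or using $\Lcal^{*}(Y_{\ell})\le\inf_\theta\Lcal(\theta)$ via equation~\eqref{eq:p:3} for the squared loss, and $\Lcal^{*}(\eta Y_\ell)\to 0$ for cross‑entropy) gives $\Lcal(\htheta)\le\Lcal(\theta)$ for all $\theta\in\RR^d$. One subtlety I would flag: the invexity hypothesis lets the argument cover both squared and cross‑entropy losses, and the ``for all $\eta$'' quantifier must be pushed to its limit to close the cross‑entropy case.

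\textbf{The main obstacle} I anticipate is the probability‑one full‑rank step: cleanly arguing that the exploration‑phase output $\theta^{\tau-1}$ is an \emph{arbitrary} (possibly adversarial) point, and that it is the independent Gaussian smoothing by $\varepsilon\delta$ together with real‑analyticity that guarantees the measure‑zero bad set is avoided a.s. I must ensure the relevant analytic function — the determinant of a maximal minor of the scaled $\phi$ — is genuinely not identically zero as a function of the perturbed coordinates, which is exactly what Assumption~\ref{assump:1} provides (existence of one good point implies non‑vanishing), rather than merely existing at some unrelated parameter. Tying the existence witness in Assumption~\ref{assump:1} to non‑identical‑vanishing along the perturbation directions is the delicate point that the rest of the proof rests on.
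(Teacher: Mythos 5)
Your first two steps are sound and essentially reproduce the paper's own groundwork: the linearity of $\hf_{X}$ in $\theta_{(H-1)}$ with the frozen, scaled $\phi$ as Jacobian is the paper's Lemma \ref{lemma:1}, and your measure-zero argument (real analyticity of the relevant determinant, non-identical vanishing supplied by Assumption \ref{assump:1}, absolute continuity of the Gaussian perturbation $\theta^{\tau}=\theta^{\tau-1}+\varepsilon\delta$) is exactly Lemmas \ref{lemma:2} and \ref{lemma:4}, including the delicate point you flag at the end.

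The genuine gap is in your third step, the reduction to Theorem \ref{thm:3}. Theorem \ref{thm:3} is proved under Assumption \ref{assump:6}, which requires each $\ell_i:q\mapsto\ell(q,y_{i})$ to be \emph{convex}; its engine, Lemma \ref{lemma:pgd_approx}, bounds $\Lcal(\theta)\le\Lcal_\theta(\beta)+\|\nu(\theta)-\beta\|_2\,\|\nabla\Lcal(\theta)\|_2$ by invoking the first-order convexity inequality $\ell_{i}(a)\ge \ell_{i}(b)+\partial\ell_{i}(b)(a-b)$. Theorem \ref{thm:1} assumes only that each $\ell_i$ is \emph{invex}, and for an invex function this inequality fails in general (invexity only gives $\ell_{i}(a)\ge \ell_{i}(b)+\partial\ell_{i}(b)\varphi_{i}(a,b)$ for some kernel $\varphi_i$), so Theorem \ref{thm:3} cannot be invoked: your reduction silently strengthens the hypothesis from invex to convex. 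Moreover, even granting convexity, Theorem \ref{thm:3} only yields $\Lcal(\htheta)\le \inf_{\eta\in\RR}\Lcal^*(\eta Y_{\ell})$, which coincides with $\inf_{\theta\in\RR^d}\Lcal(\theta)$ for the squared and cross-entropy losses (where both are $0$) but not for a general loss admitted by the theorem, so the stated conclusion $\Lcal(\htheta)\le\Lcal(\theta)$ for all $\theta\in\RR^d$ would not follow in full generality. The paper's proof avoids both problems by never passing through the alignment condition in this step: it proves $\nabla\hLcal(\theta_{(H-1)}^{t})\to 0$ directly (the same descent argument you cite), then uses Lemma \ref{lemma:3} to factor the gradient as $[\diag(\theta_{(H,j)}^\tau)\otimes I_{m_{H-1}}]\phi(\theta_{(H-1,j)}^\tau,\theta_{(1:H-2)}^{\tau})\T r_{j}(\theta_{(H-1)})$, and uses the probability-one rank-$n$ property (Lemma \ref{lemma:4}), i.e., injectivity of this matrix on $\RR^n$, to conclude that the residuals themselves vanish at every limit point: $\partial_{j}\ell_{i}(\hf_i(\htheta_{(H-1)}))=0$ for all $i,j$. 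Only then does invexity enter, in exactly the form it is designed for: a stationary point of an invex function is a global minimizer, so each $\ell_i$ attains its global minimum value at $\hf_i(\htheta_{(H-1)})$, giving $\Lcal(\htheta)\le\frac{1}{n}\sum_{i=1}^n\ell_{i}(q_{i})$ for arbitrary $q_1,\dots,q_n$ and hence $\Lcal(\htheta)\le\Lcal(\theta)$ for all $\theta\in\RR^d$. To repair your proof, either add convexity to the hypotheses (which weakens the theorem) or replace the appeal to Theorem \ref{thm:3} by this pointwise stationarity-plus-invexity argument.
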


\subsubsection{Global Optimality Gap at Each Iteration} \label{sec:12}
We  now present global convergence   guarantees of the EE wrapper $\Acal$ with gradient decent and SGD:

\begin{theorem} \label{thm:2}
Suppose Assumptions \ref{assump:1}--\ref{assump:2} hold  and that the function $\ell_i:q\mapsto\ell(q,y_{i}) $ is convex  for any $i \in [n]$. Then, with probability one, the following two statements hold:
\begin{enumerate}[leftmargin=0.5cm]
\item[(i)] (Gradient descent) if $\hg^t =\nabla\hLcal(\theta_{(H-1)}^{t})$ and $\alpha^t = \frac{1}{\hL}$ for  $t\ge \tau$,
then  for any $\bepsilon\ge0$ and $t> \tau$,
$$
\Lcal(\theta^{t})\le\inf_{\theta\in \RR^d}\max( \Lcal( \theta),\bepsilon)+  \frac{B^{2}_{\bepsilon }\hL_{} }{2(t-\tau)}. 
$$ 
 
\item[(ii)] 
(SGD) if $\EE [\hg^t |\theta^{t}]=\nabla\hLcal(\theta_{(H-1)}^{t})$ (almost surely) with $\EE[\|\hg^t \|^{2}]\le G^{2}$,  and if $\alpha^t \ge 0$, $\sum_{t=\tau}^{\infty} (\alpha^t)^2 < \infty $ and $\sum_{t=\tau}^{\infty} \alpha^t = \infty$ for $t\ge \tau$, then  for any $\bepsilon\ge0$ and $t> \tau$, 
\begin{align}
\EE[\Lcal(\theta^{t^*}) ] \le \inf_{\theta\in \RR^d}\max( \Lcal( \theta),\bepsilon)+    \frac{B^{2}_{ \bepsilon }+G^{2}  \sum_{k=\tau}^t \alpha_{k}^{2}}{ 2 \sum_{k=\tau}^t \alpha_{k}}
\end{align}
where $t^*\in \argmin_{k \in \{\tau,\tau+1,\dots, t\}}\Lcal(\theta^{k})$.  
\end{enumerate}
\end{theorem}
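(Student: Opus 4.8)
The plan is to exploit the special structure created by the model modification at line 10 of Algorithm \ref{algorithm:EEW}, which renders the exploitation subproblem convex and smooth, and then to invoke the textbook first-order convergence guarantees. First I would observe that once $\sigma_{j}$ is replaced by $\sigma_{R^{(j)}}$ with the \emph{frozen} matrix $R^{(j)}=\theta_{(H-1,j)}^{\tau}$, and once $\theta_{(1:H-2)}$ and $\theta_{(H)}$ are held at $\theta_{(1:H-2)}^{\tau},\theta_{(H)}^{\tau}$, the output is $f(x_i,\theta)_j=W^{(H)}_{j*}\bigl(\tsigma(R^{(j)}z_i)\circ(W^{(H-1,j)}z_i)\bigr)$ with $z_i=z(x_i,\theta_{(1:H-2)}^{\tau})$. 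Since $\tsigma(R^{(j)}z_i)$ no longer depends on the trainable $W^{(H-1,j)}$, each $f(x_i,\theta)_j$ is an \emph{affine} function of $\theta_{(H-1)}$, whose Jacobian rows are exactly the rows of $\phi(\theta_{(H-1,j)}^{\tau},\theta_{(1:H-2)}^{\tau})$ scaled by $\diag(\theta_{(H,j)}^{\tau})\otimes I_{m_{H-1}}$. Consequently $\hLcal$ is the composition of a convex loss $q\mapsto\ell(q,y_i)$ with an affine map, hence convex in $\theta_{(H-1)}$; a chain-rule computation with Assumption \ref{assump:3} then shows it is $\hL$-smooth with precisely $\hL=\frac{L_\ell}{n}\|Z\|^2$, since $\|Z\|^2$ aggregates the squared operator norms of these Jacobian blocks.

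Next I would pin down the reference value $\inf_{\theta\in\RR^d}\max(\Lcal(\theta),\bepsilon)$ by showing the affine map $\theta_{(H-1)}\mapsto f_X(\theta)$ is surjective onto $\RR^{n\times m_y}$ with probability one. This follows from Assumption \ref{assump:1} combined with the random perturbation $\theta^{\tau}=\theta^{\tau-1}+\varepsilon\delta$, $\delta\sim\Ncal_d(0,I)$, and the real-analyticity of Assumption \ref{assump:2}: any fixed $n\times n$ minor of $\phi$ is a real-analytic function of the parameters that, by the safe-exploration condition, is not identically zero, so its zero set has Lebesgue measure zero; since $\delta$ has a density, $\rank(\phi(\theta_{(H-1,j)}^{\tau},\theta_{(1:H-2)}^{\tau}))=n$ almost surely for each $j$, and since $\diag(\theta_{(H,j)}^{\tau})$ is a.s.\ invertible after the perturbation, each per-output Jacobian block has full row rank $n$. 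Surjectivity gives $\inf_{\theta_{(H-1)}}\hLcal=\inf_{Q\in\RR^{n\times m_y}}\frac1n\sum_i\ell(Q_{i*},y_i)=\inf_{\theta\in\RR^d}\Lcal$, whence $\inf_{\theta_{(H-1)}}\max(\hLcal,\bepsilon)=\max(\inf_\theta\Lcal,\bepsilon)=\inf_\theta\max(\Lcal,\bepsilon)$.

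With convexity, $\hL$-smoothness, and this matched infimum established, the remainder is standard. For (i), the descent lemma for gradient descent with step $1/\hL$ gives, for \emph{any} reference $x^*$, the bound $\hLcal(\theta_{(H-1)}^{k+1})-\hLcal(x^*)\le\frac{\hL}{2}\bigl(\|\theta_{(H-1)}^{k}-x^*\|^2-\|\theta_{(H-1)}^{k+1}-x^*\|^2\bigr)$; telescoping from $\tau$ to $t-1$ and using monotone decrease of the iterates' objective yields $\hLcal(\theta_{(H-1)}^{t})-\hLcal(x^*)\le\frac{\hL}{2(t-\tau)}\|\theta_{(H-1)}^{\tau}-x^*\|^2$. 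Taking $x^*\in\Theta_{\bepsilon}$, so that $\|\theta_{(H-1)}^{\tau}-x^*\|=B_{\bepsilon}$, and splitting into the cases $\inf\hLcal\ge\bepsilon$ and $\inf\hLcal<\bepsilon$ converts $\hLcal(x^*)$ into $\max(\inf\hLcal,\bepsilon)$, giving the stated estimate via $\Lcal(\theta^t)=\hLcal(\theta_{(H-1)}^t)$. For (ii), I would expand $\EE\|\theta_{(H-1)}^{k+1}-x^*\|^2$, use unbiasedness, the convexity inequality $\langle\nabla\hLcal(\theta_{(H-1)}^k),\theta_{(H-1)}^k-x^*\rangle\ge\hLcal(\theta_{(H-1)}^k)-\hLcal(x^*)$, and $\EE\|\hg^k\|^2\le G^2$, sum over $k\in\{\tau,\dots,t\}$, and bound the minimizing iterate by the $\alpha_k$-weighted average; the same casework on $\bepsilon$ then produces the SGD bound. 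The main obstacle is the second paragraph: rigorously securing the affine/surjective reduction — specifically the a.s.\ full-rank claim through the analyticity-plus-perturbation argument and the resulting identity $\inf_{\theta_{(H-1)}}\hLcal=\inf_\theta\Lcal$ — after which the convex-optimization estimates are entirely routine.
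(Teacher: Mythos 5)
Your proposal is correct and follows essentially the same route as the paper's proof: reduce the exploitation phase to a convex, $\hL$-smooth problem via the affine dependence of $\hf_X$ on $\theta_{(H-1)}$ (the paper's Lemmas \ref{lemma:1}, \ref{lemma:3}, \ref{lemma:5}), establish the almost-sure full-rank/surjectivity of the per-output Jacobian blocks from the safe-exploration condition, the Gaussian perturbation, and the measure-zero zero-set property of non-trivial real-analytic functions (Lemmas \ref{lemma:2} and \ref{lemma:4}), match the infimum of $\hLcal$ to $\inf_\theta \max(\Lcal(\theta),\bepsilon)$ (the paper's equation \eqref{eq:step_2_theorem_0_2}), and finish with the textbook convex gradient-descent and SGD bounds. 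Your minor variations (using a nonvanishing $n\times n$ minor instead of the Gram determinant, and handling the $\diag(\theta_{(H,j)}^\tau)$ factor by almost-sure invertibility rather than folding it into the analytic function) are cosmetic and do not change the argument.
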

In Theorem \ref{thm:2}  (ii),  with $\alpha^t \sim O(1/\sqrt{t})$,  the  optimality gap becomes  $$
\EE[\Lcal(\theta^{t^*}) ] -\inf_{\theta\in \RR^d}\max( \Lcal( \theta),\bepsilon)\allowbreak  = \tilde O(1/\sqrt{t}).
$$ 

\subsection{Experiments} \label{sec:9}

This section presents empirical evidence to support our theory and what is predicted by a well-known hypothesis. We note that  there is no related work or algorithm that can guarantee global convergence in the setting of our experiments where the model has   convolutions, skip connections, and batch normalizations without any wide layer (of the width larger than $n$). Moreover, unlike any previous studies that propose new methods, our training framework works by modifying any given method.

\subsubsection{Sine Wave Dataset}
\begin{figure}[!b] 
    \centering
    \begin{minipage}{.5\textwidth}
        \centering
        \includegraphics[width=1.0\textwidth, height=0.6\textwidth]{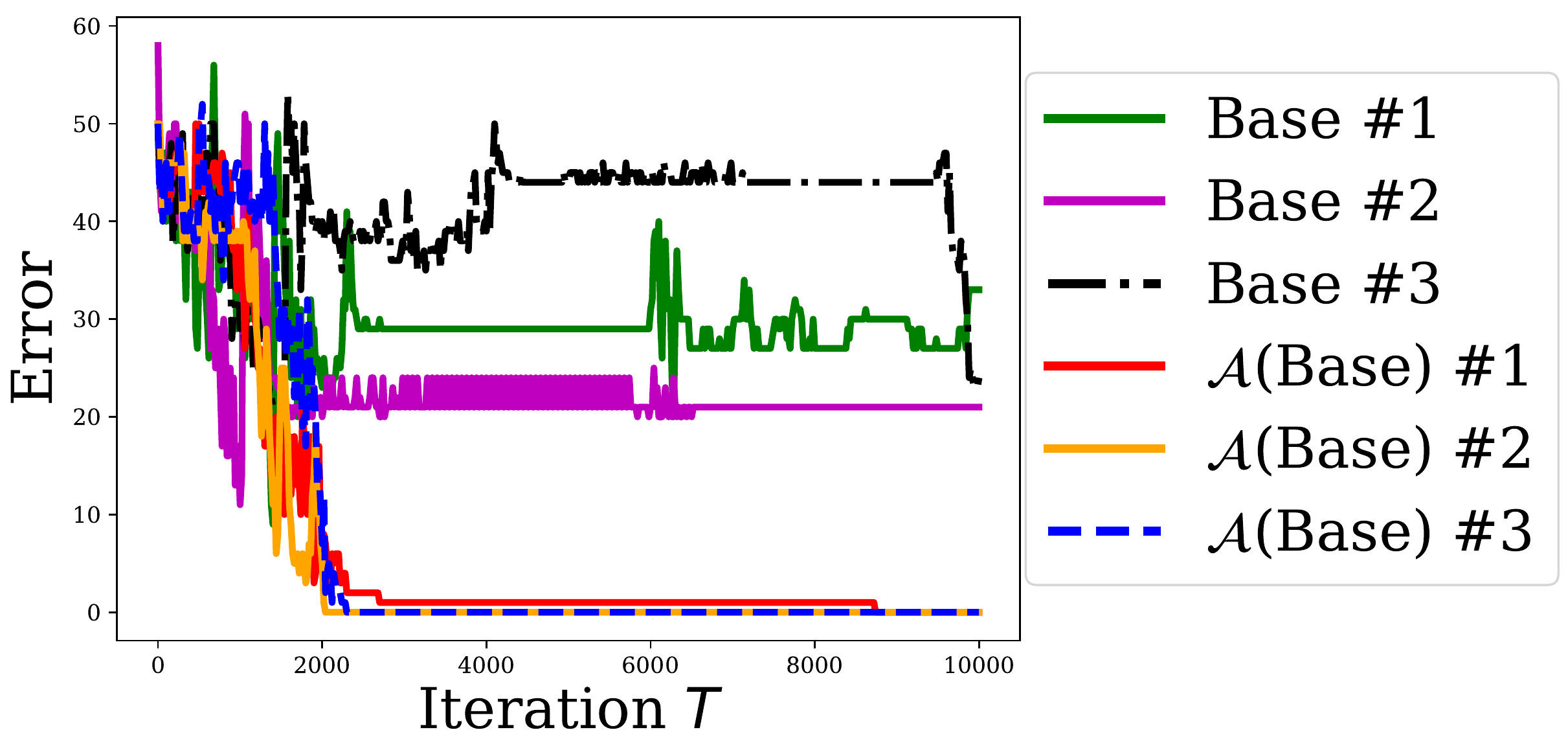}
        \caption{Training errors for three random trials.} 
        \label{fig:3}
    \end{minipage}%
    \hspace{10pt}
    \begin{minipage}{0.42\textwidth}
        \centering
        \includegraphics[width=0.9\textwidth, height=0.6\textwidth]{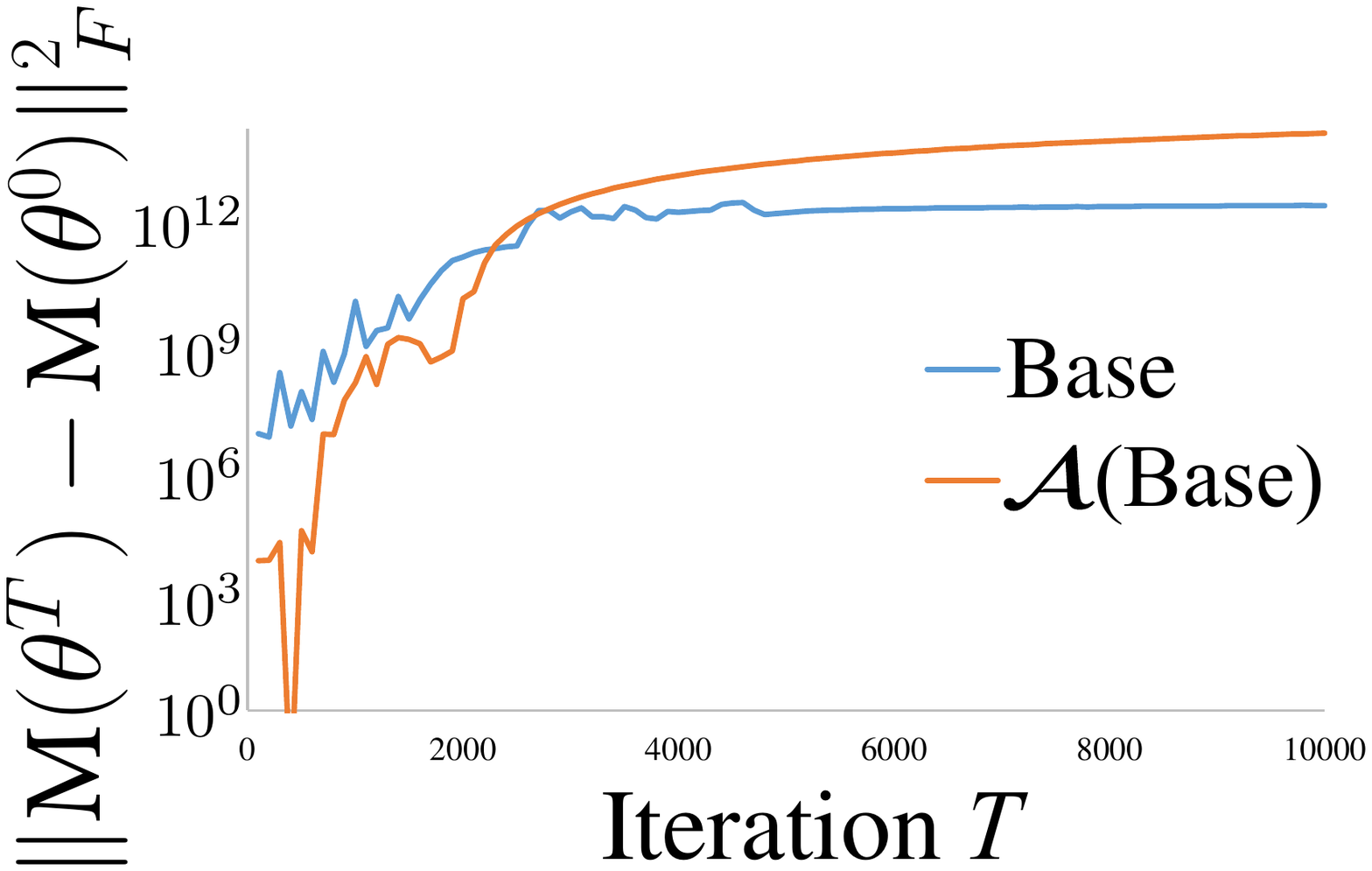}
        \caption{Changes of the representations  where $\mathbf{M}(\theta):=\frac{\partial \vect(f_{X}(\theta))}{\partial \theta} (\frac{\partial \vect(f_{X}(\theta))}{\partial\theta})\T$.} 
        \label{fig:4}
    \end{minipage}
\end{figure}

We have seen in Subsection \ref{sec:3} that gradient descent gets stuck at sub-optimal points for the sine wave dataset. Using the  same setting as that in Subsection \ref{sec:3} with $\varepsilon=0.01, \tau=2000$, and $\tGcal=\Gcal$, we confirm in Figure \ref{fig:3} that the EE wrapper $\Acal$ can modify gradient descent to avoid sub-optimal points and converge to global minima as predicted by our theory. Figure \ref{fig:4} shows the value of the change of the gradient representation, $\|\mathbf{M}(\theta^{T})-\mathbf{M}(\theta^{0})\|_{F}^2$, for each time step $T$. As it can be seen, the values of  $\|\mathbf{M}(\theta^{T})-\mathbf{M}(\theta^{0})\|_{F}^2$ are large for both methods. Notably, the\ EE wrapper $\Acal$ of the base case significantly increases the value of  $\|\mathbf{M}(\theta^{T})-\mathbf{M}(\theta^{0})\|_{F}^2$ even in the exploitation phase after $\tau=2000$ as we are optimizing the hidden layer.  See Appendix \ref{sec:app:2} in the Supplementary Information for more details of the experiments for the sine wave dataset.     

\subsubsection{Image datasets}
The standard convolutional ResNet with $18$ layers \citep{he2016identity} is used as the base model $\baf$.
We use ResNet-18 for the illustration of our theory because it is used in practice and it has convolution, skip connections, and batch normalization without any width larger  than the number of  data points. This setting  is not covered by any of the previous theories for global convergence.   We set the  activation  to be the softplus function $q\mapsto\ln(1+\exp(\varsigma q))/\varsigma$ with $\varsigma=100$ for all layers of the base ResNet. This  approximates the ReLU activation well, as shown in Appendix \ref{sec:app:3} in the Supplementary Information. We employ the cross-entropy loss and        $\tsigma(q)= \frac{1}{1+e^{-q}}$. We use a standard algorithm, SGD, with its standard hyper-parameter setting for the training algorithm $\Gcal$  with $\tGcal_{}=\Gcal$: i.e.,  we let the mini-batch size be 64, the weight decay rate be $10^{-5}$, the momentum coefficient be $0.9$, the learning rate  be   $\alpha^{t}=0.1$, the last epoch $\hat T$ be 200 (with data augmentation) and 100  (without data augmentation). The hyper-parameters $\varepsilon$ and $\tau=\tau_0 \hat T$ were selected from $\varepsilon\in \{10^{-3},10^{-5}\}$  and $\tau_0 \in \{0.4,0.6,0.8\}$ by only using  training data. That is,  we randomly divided  each training data (100\%)\ into a smaller training data (80\%)\ and a validation data (20\%) for a grid search over the hyper-parameters.
 See Appendix \ref{sec:app:2} in the Supplementary Information for the results of the grid search and  details of the experimental setting. This standard setting satisfies Assumptions \ref{assump:3}--\ref{assump:2}, leaving Assumption \ref{assump:1} to be verified.

\begin{table}[b!] 
\centering 
\captionof{table}{  Verification of the safe-exploration condition (Assumption \ref{assump:1})  with $m_H = \ceil{2( n/m_{H-1})}$ where $n$ is the number of training data,  $m_{H}$ is the width of the last hidden layer, and $m_{H-1}$ is the   width of the penultimate hidden layer. }  \label{tbl:1}
\begin{tabular}{lccccc}
\toprule
Dataset &  $n$ & $m_{H-1}$ & $m_{H}$ & Assumption \ref{assump:1}  \\
\midrule
MNIST    &  60000  & 513 & 234 &Verified   \\
\midrule
CIFAR-10  &  50000& 513 & 195 &  Verified   \\
\midrule
CIFAR-100  &  50000 & 513 & 195 & Verified \\
\midrule
Semeion    &  1000  & 513 & 4 & Verified   \\
\midrule
KMNIST     & 60000 & 513 & 234 &Verified   \\
\midrule
SVHN      &  73257 & 513 & 286 &Verified   \\
\bottomrule
\end{tabular} 
\end{table}

\begin{table}[b!] 
\begin{minipage}{.5\textwidth}
\centering 

\caption{\fontsize{11.pt}{11.pt}\selectfont Test error (\%): data augmentation. } \label{tbl:2} 
\begin{tabular}{lcc}
\toprule
 Dataset &  Standard & $\Acal(\text{Standard})$   \\

\midrule
 MNIST     & 0.40 (0.05)   & 0.30 (0.05)  \\
\midrule
CIFAR-10   & 7.80 (0.50) & 7.14 (0.12)  \\
\midrule
CIFAR-100   & 32.26 (0.15)  & 28.38 (0.42) \\
\midrule
Semeion       & 2.59 (0.57) & 2.56 (0.55) \\
\midrule
KMNIST       & 1.48 (0.07)  & 1.36 (0.11)   \\
\midrule
SVHN       & 4.67 (0.05)  & 4.43 (0.11)   \\
\bottomrule
\end{tabular}  
\end{minipage}%
\begin{minipage}{.5\textwidth}
\centering 
\renewcommand{\arraystretch}{0.8} \fontsize{11.pt}{11.pt}\selectfont
\caption{\fontsize{11.pt}{11.pt}\selectfont Test error (\%): no data augmentation. } \label{tbl:4} 
\begin{tabular}{lcc}
\toprule
Dataset &  Standard & $\Acal(\text{Standard})$   \\
\midrule
 MNIST    & 0.52 (0.16)   & 0.49 (0.02) \\
\midrule
 CIFAR-10   & 15.15 (0.87) & 14.56 (0.38)   \\
\midrule
 CIFAR-100   & 54.99 (2.29)  & 46.13 (1.80) \\
\bottomrule
\end{tabular} 
\end{minipage}
\end{table}

\paragraph{Verification of  Assumption \ref{assump:1}.} Table \ref{tbl:1} summarizes the verification results of the safe-exploration condition. Because the  condition only requires an existence of a pair $(\theta, q)$ satisfying the  condition, we  verified it by using a randomly sampled  $q$ from the standard normal distribution and  a $\theta$ returned by a common initialization scheme    \cite{he2015delving}. As $m_{H-1}=513$ (512 + the constant neuron for the bias term) for  the standard ResNet, we set $m_H = \ceil{2( n/m_{H-1})}$ throughout all the experiments with the ResNet. For each dataset,  the rank condition was verified twice by the two standard methods: one  from \citep{press2007numerical} and another from \citep{golub1996matrix}.

\paragraph{Test Performance.}
\rev{One well-known hypothesis is that the success of deep-learning methods partially comes from its ability to automatically learn   deep nonlinear representations suitable for making accurate predictions from data (e.g., \citealp{lecun2015deep}). As the EE wraper $\Acal$   keeps this  ability of representation learning, the hypothesis suggests that the  test performance of the EE wrapper $\Acal$ of a  standard method is approximately comparable  with that of the   standard method.} Unlike typical experimental studies, our objective here is to confirm this prediction, instead of showing  improvements over a previous method.  We empirically confirmed the prediction in Tables \ref{tbl:2} and \ref{tbl:4} where the numbers indicate the mean test errors (and standard deviations are in parentheses) over five random trials. As expected, the values of $\|\mathbf{M}(\theta^{\hat T})-\mathbf{M}(\theta^{0})\|_{2}^2$ were also large: e.g.,   $4.64 \times 10^{12}$ for the standard method and $3.43 \times 10^{12}$ for the wrapper  $\Acal$ of the  method with Semeion dataset.

\paragraph{Training Behavior.}
Figure \ref{fig:2} shows that the EE wrapper $\Acal$  can  improve training loss values of the standard SGD algorithm in the exploitation phase without changing its hyper-parameters because $\tGcal=\Gcal$ in these experiments. 
 In the figure, the plotted lines indicate the mean values over five random trials and the shaded regions show error bars with one standard deviation.

\paragraph{Computational Time.} The EE wrapper $\Acal$     runs the standard SGD $\Gcal$  in the exploration phase and  the  SGD $\tGcal=\Gcal$ only on the subset of the weights $\theta_{(H-1)}$  in the exploitation phase. Thus, the computational time of the EE wrapper $\Acal$  is similar to that of the  SGD in  the exploration phase, and it tends to be faster than the   SGD in the exploitation phase. To confirm this, we measure computational time with Semeion and CIFAR-10 datasets under the same computational resources  (e.g., without running other jobs in parallel) in a local workstation for each method. The mean wall-clock time    (in seconds) over five random trials is summarised in Table \ref{tbl:6}, where  the  numbers in parentheses are standard deviations. It shows that the EE wrapper $\Acal$ is slightly faster than the standard method, as expected.

\begin{figure}[t!]
\center
\begin{subfigure}[b]{0.32\columnwidth}
  \includegraphics[width=\textwidth, height=0.7\textwidth]{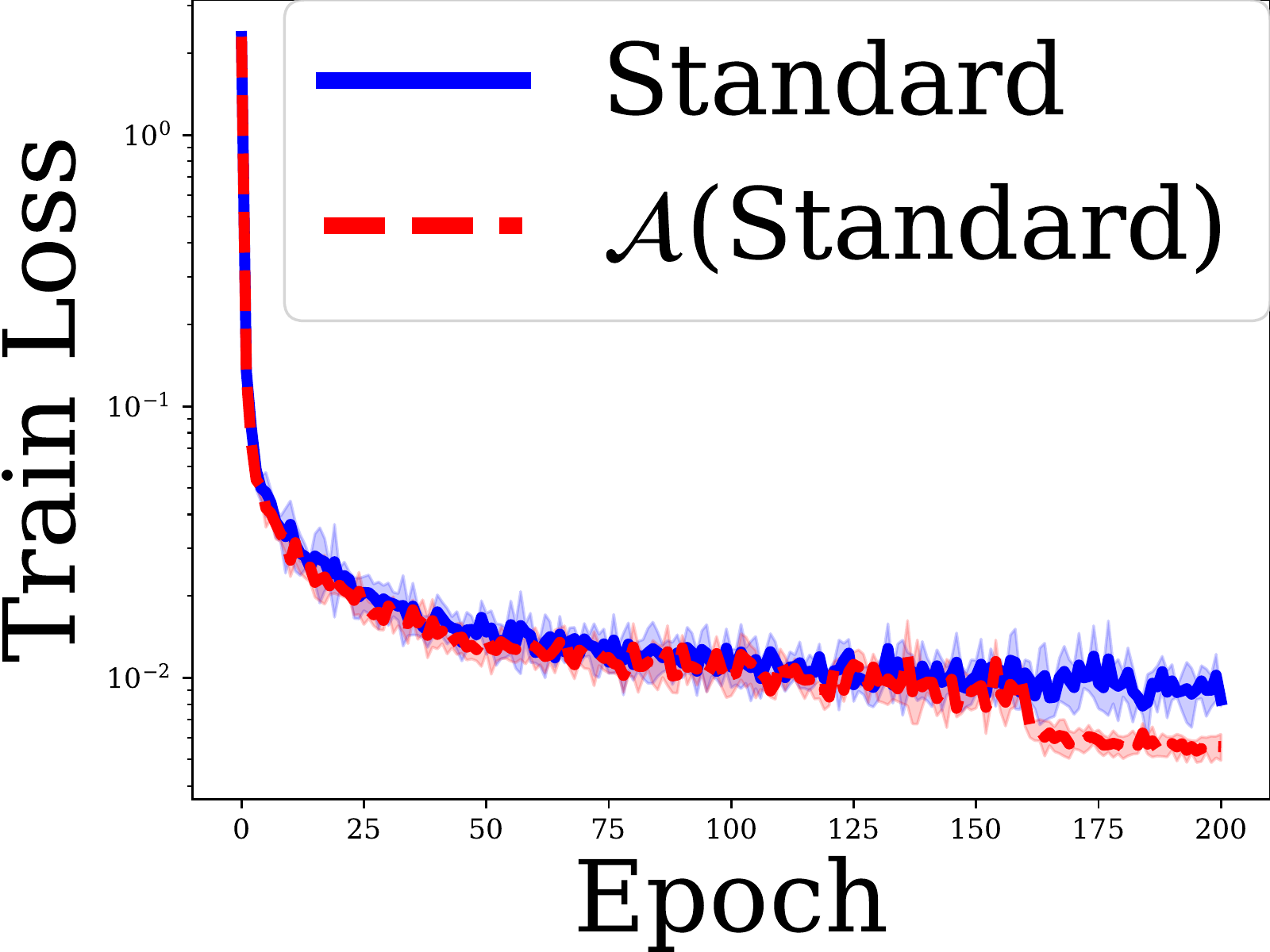}
  \caption{MNIST}
\end{subfigure}
\begin{subfigure}[b]{0.32\columnwidth}
  \includegraphics[width=\textwidth, height=0.7\textwidth]{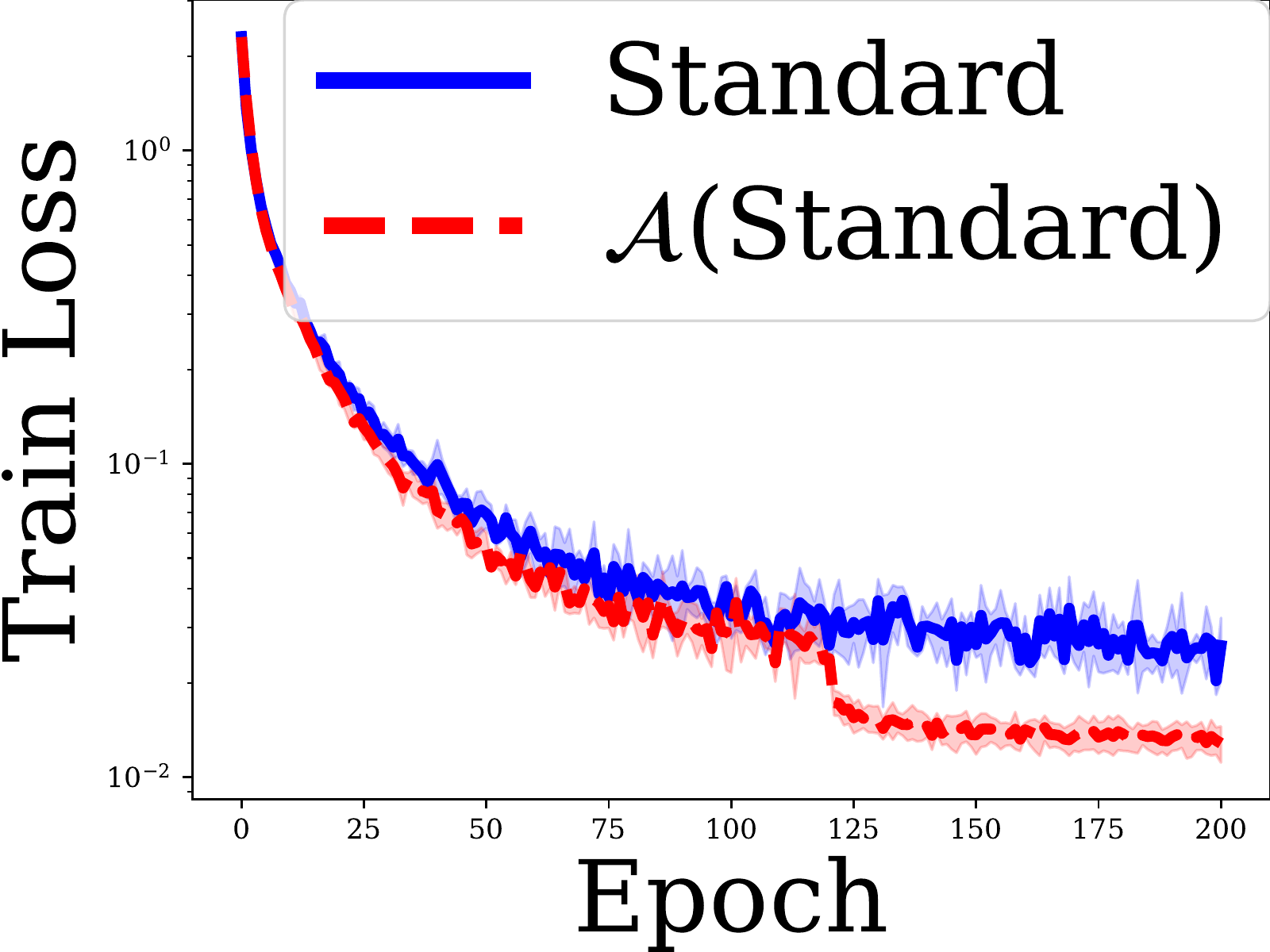}
  \caption{CIFAR-10} 
\end{subfigure}
\begin{subfigure}[b]{0.32\columnwidth}
  \includegraphics[width=\textwidth, height=0.7\textwidth]{fig/train/cifar10.pdf}
  \caption{CIFAR-100} 
\end{subfigure}
\caption{Training loss with data augmentation   } 
\label{fig:2}
\end{figure}

\begin{table}[t!]
\centering 
\caption{Total wall-clock time in a local GPU\ workstation } \label{tbl:6}
\begin{tabular}{lcc}
\toprule
Dataset &  Standard & $\Acal(\text{Standard})$   \\
\midrule
 Semeion    & 364.60 (0.94)   & 356.82 (0.67)\\
\midrule
 CIFAR-10   & 3616.92 (10.57) & 3604.5 (6.80)   \\
\bottomrule
\end{tabular} 
\end{table} 

\begin{table}[t!]
\centering 
\renewcommand{\arraystretch}{0.8} \fontsize{11.pt}{11.pt}\selectfont
\caption{Test errors (\%) of   $\Acal(\text{Standard})$ with $\tGcal$ = L-BFGS.} \label{tbl:5} \begin{subtable}[t]{0.38\columnwidth}
\caption{with data augmentation}
\centering 
\begin{tabular}{|l||*{4}{c|}}
\hline
\backslashbox{$\varepsilon$ \kern-1em}{$\tau_0$} 
&\makebox[1em]{0.4}
&\makebox[1em]{0.6}
&\makebox[1em]{0.8}
\\\hline\hline
${10^{-3}}^{\mathstrut}$ 
& 
0.26
& 
0.38 & 0.37
\\\hline
${10^{-5}}^{\mathstrut}$ 
&
0.37
& 0.32 & 0.37 \\\hline
\end{tabular}
\end{subtable} \hspace{5pt}
\begin{subtable}[t]{0.38\columnwidth}
\caption{without data augmentation}
\centering 
\begin{tabular}{|l||*{4}{c|}}
\hline
\backslashbox{$\varepsilon$ \kern-1em}{$\tau_0$}
&\makebox[1em]{0.4}
&\makebox[1em]{0.6}
&\makebox[1em]{0.8}
\\\hline\hline
${10^{-3}}^{\mathstrut}$  & 
0.36
& 
0.43 & 0.42
\\\hline
${10^{-5}}^{\mathstrut}$  & 
0.42
& 0.35 & 0.35 \\\hline
\end{tabular}
\end{subtable} 
\end{table}

\paragraph{Effect of  Learning Rate and Optimizer.}
We  also  conducted  experiments on the effects of learning rates and optimizers using MNIST dataset with data-augmentation. Using the best learning rate from $\{0.2, 0.1, 0.01, 0.001\}$ for each method  (with $\tGcal=\Gcal$ = SGD), the mean test errors  (\%)  over five random trials were     0.33 (0.03)  for the standard base method, and 0.27 (0.03) for the $\Acal$ wrapper of  the standard base method (the  numbers in parentheses are standard deviations). Moreover, Table \ref{tbl:5} reports the preliminary results on the effect of optimizers with    $\tGcal$ being set to Limited-memory Broyden--Fletcher--Goldfarb--Shanno algorithm (L-BFGS) (with $\Gcal$ = the standard SGD). By comparing Tables \ref{tbl:2} and  \ref{tbl:5}, we can see that using a different optimizer in the exploitation phase can potentially lead to   performance improvements. A  comprehensive study of this phenomena is left to future work.

\section{Conclusion} \label{sec:16}

Despite the nonlinearity of the dynamics and the non-invexity of the objective,  we have rigorously proved convergence of training dynamics to global minima for nonlinear representation learning. Our results apply to a wide range of machine learning models, allowing both under-parameterization and over-parameterization. For example, our results are applicable to the case where the minimum eigenvalue of the matrix $\frac{\partial \vect(f_{X}(\theta^{t}))}{\partial \theta^{t}} (\frac{\partial \vect(f_{X}(\theta^{t}))}{\partial\theta^{t}})\T $ is zero for all $t \ge 0$. Under the common model structure assumption, models that cannot achieve zero error for all datasets (except some  `good' datasets)  are shown to achieve  global optimality with zero error  exactly when the  dynamics satisfy the data-architecture alignment condition. Our results provide  guidance for choosing and designing model structure and algorithms via the common model structure assumption and data-architecture alignment condition.

The key limitation in our analysis is the differentiability of the function $f $. For multilayer neural networks,  this  is satisfied by  using standard activation functions, such as softplus, sigmoid, and hyperbolic tangents. Whereas   softplus  can approximate  ReLU arbitrarily well,  the direct treatment of  ReLU  in nonlinear representation learning is left to future work.

Our theoretical results  and numerical observations  uncover  novel  mathematical properties and provide a basis for  future work. For example, we have  shown global convergence under the data-architecture alignment condition     $\vect(Y_{\ell}) \in \Col(\frac{\partial \vect(f_{X}(\theta^{t}))}{\partial \theta^{t}})$. The EE wrapper $\Acal$ is only one way to ensure this condition. There are many other ways to ensure the data-architecture alignment condition  and each  way can  result in a new algorithm with  guarantees.


\newcommand{\noopsort}[1]{} \newcommand{\printfirst}[2]{#1}
  \newcommand{\singleletter}[1]{#1} \newcommand{\switchargs}[2]{#2#1}


\clearpage
\appendices

\begin{center}
\textbf{\LARGE 
 Supplementary Material\vspace{5pt}}
\end{center}
\allowdisplaybreaks

\section{Proofs} \label{sec:app:1}
In Appendix \ref{sec:app:1}, we present the   proofs of all the theoretical results. We show the proofs of Theorem \ref{thm:3} in Appendix \ref{sec:app:1:1},  Theorem \ref{thm:p:1} in Appendix \ref{sec:app:1:2}, Proposition \ref{prop:1} in Appendix \ref{sec:app:1:3},  Theorem \ref{thm:1} in Appendix \ref{sec:app:1:4}, and Theorem \ref{thm:2} in Appendix \ref{sec:app:1:5}. We also provide a proof idea of Theorem \ref{thm:3} in the beginning of Appendix \ref{sec:app:1:1}. Before starting our  proofs, we  introduce  the additional notations used in the proofs. We define $f_{i}(\theta)= f(x_i,\theta)$,  $\ell_i(q)=\ell_{i}(q,y_{i})$, and $
[k', k] = \{k',k'+1,\dots, k\}$ for any $k,k' \in \NN^+$ with $k>k'$. 
Given a scalar-valued  variable  $a \in \RR$ and a matrix $M \in \RR^{m\times m'}$,
we define
\begin{align}
\frac{\partial a}{\partial M}= \begin{bmatrix}\frac{\partial a}{\partial M_{11}} & \cdots & \frac{\partial a}{\partial M_{1m'}} \\
\vdots & \ddots & \vdots \\
\frac{\partial a}{\partial M_{m1}} & \cdots & \frac{\partial a}{\partial M_{mm'}} \\
\end{bmatrix} \in \RR^{m \times m'},
\end{align}
where $M_{ij}$ represents the $(i,j)$-th entry of the matrix $M$.
Similarly, given a vector-valued variable $a \in \RR^m$ and a column vector $b \in \RR^{m'}$,
we let
\begin{align}
\frac{\partial a}{\partial b}= \begin{bmatrix}\frac{\partial a_{1}}{\partial b_{1}} & \cdots & \frac{\partial a_{1}}{\partial  b_{m'}} \\
\vdots & \ddots & \vdots \\
\frac{\partial a_{m}}{\partial  b_{1}} & \cdots & \frac{\partial a_{m}}{\partial   b_{m'}} \\
\end{bmatrix} \in \RR^{m \times m'},
\end{align}
where $b_{i}$ represents   the $i$-th entry of the column vector  $b$. Given a vector-valued variable $a \in \RR^m$ and a row vector $b \in \RR^{1 \times m'}$,
we write
\begin{align}
\frac{\partial a}{\partial b}= \begin{bmatrix}\frac{\partial a_{1}}{\partial b_{11}} & \cdots & \frac{\partial a_{1}}{\partial  b_{1m'}} \\
\vdots & \ddots & \vdots \\
\frac{\partial a_{m}}{\partial  b_{11}} & \cdots & \frac{\partial a_{m}}{\partial   b_{1m'}} \\
\end{bmatrix} \in \RR^{m \times m'},
\end{align}
where $b_{1i}$ represents   the $i$-th entry of the row vector  $b$.
Given a   function $a :v\mapsto a(v)$,  we define 
$$
\partial a(\bar v)=\left.\frac{\partial a(v)}{\partial v} \right\vert_{v=\bar v}
$$
For completeness, we also  recall the  standard definition of the Kronecker  product of two matrices: for matrices $M \in \RR^{m_M\times m_M'}$ and $\bar M \in \RR^{m_{\bar M} \times m'_{\bar M}}$, 
\begin{align}
M \otimes \bar M= \begin{bmatrix}M_{11}\bar M  & \cdots & M_{1m_M'}\bar M \\
\vdots & \ddots\ & \vdots  \\
M_{m_M1}\bar M & \cdots & M_{m_Mm_M'}\bar M \\
\end{bmatrix} \in \RR^{m_M m_{\bar M} \times m_M' m_{\bar M}'}.
\end{align}

\subsection{Proof of Theorem \ref{thm:new:1}} 
The formal version of Theorem \ref{thm:new:1} is the combination of Theorem \ref{thm:3}, Theorem \ref{thm:p:1}, and Proposition \ref{prop:1}, which are proved in Appendix \ref{sec:app:1:1},   Appendix \ref{sec:app:1:2}, and  Appendix \ref{sec:app:1:3}.

\subsection{Proof of Theorem \ref{thm:3}} \label{sec:app:1:1}
 We begin  with  a proof idea of Theorem \ref{thm:3}.  We first relate the value of $\Lcal(\theta)$ and that of $\|\nabla\Lcal(\theta)\|$ in the way that if $\|\nabla\Lcal(\theta)\| = 0$, then the value of $\Lcal(\theta)$ is  bounded from above by $\Lcal_\theta(\beta) $ for any $\theta, \beta \in \RR^d$. This is a key lemma --- Lemma \ref{lemma:pgd_approx}. This lemma implies that since  every limit point achieves $\|\nabla\Lcal(\theta)\| = 0$, the value of  $\Lcal(\theta)$ has the upper bound of $\Lcal_\theta(\beta)$  at every limit point  $\theta \in \RR^d$. Then, another key observation is that the  upper bound $\Lcal_\theta(\beta)$ on  $\Lcal(\theta)$ can be replaced by $\Lcal^*(\eta Y^{*})$ for any $\eta \in \RR$ if $\vect( Y^{*}) \in \Col(\frac{\partial \vect(f_{X}(\theta^{t}))}{\partial \theta^{t}})$. Here, the condition of $\vect( Y^{*}) \in \Col(\frac{\partial \vect(f_{X}(\theta^{t}))}{\partial \theta^{t}})$ is crucial and used  to ensure the quality of the upper bound $\Lcal_\theta(\beta)$. 

To prove Theorem \ref{thm:3}, we first prove the following key lemma that relates the value of $\Lcal(\theta)$ and that of $\|\nabla\Lcal(\theta)\|$:

\begin{lemma} \label{lemma:pgd_approx}
Assume that  the function $\ell_i:q\mapsto\ell(q,y_{i}) \in \RR_{\ge 0}$ is differentiable and convex for every $i \in \{1,\dots,n\}$. Let $\theta$  be any differentiable point such that  $f_{i}:\theta \mapsto f(x_i, \theta)$ is differentiable at $\theta$  for every $i \in \{1,\dots,n\}$. Then  for any  $\beta\in \RR^d$,   
\begin{align}
\Lcal(\theta)  \le\Lcal_\theta(\beta) +\|\nu(\theta)-\beta \|_2 \|\nabla\Lcal(\theta)\|_2.
\end{align}
where\begin{align}
\Lcal_\theta(\beta)=\frac{1}{n}\sum_{i=1}^n  \ell\left(f_{\theta}(x_i,
\beta), y_{i}\right),
\end{align}
\begin{align}
f_{\theta}(x,\beta)=\sum_{k=1}^{d} 
\beta_k \frac{\partial f(x,\theta)}{\partial \theta_k},
\end{align}
and  $\nu(\theta)_k=\theta_k$ for all $k\in S$ in Assumption \ref{assump:7} and  $\nu(\theta)_k=0$ for all  $k \notin S$.

\end{lemma}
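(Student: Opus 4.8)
The plan is to exploit the common model structure assumption (Assumption~\ref{assump:7}) to rewrite $f(x_i,\theta)$ as a value of the linearized model $f_\theta(\cdot,\cdot)$, and then to convert the gap between $\Lcal(\theta)$ and $\Lcal_\theta(\beta)$ into a first-order term using convexity. The starting observation is the identity $f_\theta(x_i,\nu(\theta))=f(x_i,\theta)$ for every $i$. Indeed, by the definitions of $\nu$ and of $f_\theta$,
\begin{align}
f_\theta(x_i,\nu(\theta))=\sum_{k=1}^{d}\nu(\theta)_k\frac{\partial f(x_i,\theta)}{\partial\theta_k}=\sum_{k=1}^{d}\one\{k\in S\}\theta_k\frac{\partial f(x_i,\theta)}{\partial\theta_k}=f(x_i,\theta),
\end{align}
where the last equality is exactly Assumption~\ref{assump:7}. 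Consequently $\Lcal(\theta)=\Lcal_\theta(\nu(\theta))$, so it suffices to compare $\Lcal_\theta(\nu(\theta))$ with $\Lcal_\theta(\beta)$.

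Next I would invoke the convexity of each $\ell_i:q\mapsto\ell(q,y_i)$. Writing $q_i=f(x_i,\theta)=f_\theta(x_i,\nu(\theta))$ and $\tilde q_i=f_\theta(x_i,\beta)$, the first-order inequality for the convex, differentiable $\ell_i$ gives
\begin{align}
\ell_i(q_i)\le \ell_i(\tilde q_i)+\left\langle \nabla\ell_i(q_i),\,q_i-\tilde q_i\right\rangle.
\end{align}
Averaging over $i$ yields $\Lcal(\theta)\le \Lcal_\theta(\beta)+\frac1n\sum_{i=1}^n\langle\nabla\ell_i(q_i),q_i-\tilde q_i\rangle$.

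The crux is to identify the remaining linear term with an inner product against $\nabla\Lcal(\theta)$. Since $f_\theta(x,\cdot)$ is linear in its second argument, $q_i-\tilde q_i=f_\theta(x_i,\nu(\theta)-\beta)=\frac{\partial f(x_i,\theta)}{\partial\theta}(\nu(\theta)-\beta)$, so moving the Jacobian across the inner product gives $\langle\nabla\ell_i(q_i),q_i-\tilde q_i\rangle=\langle (\frac{\partial f(x_i,\theta)}{\partial\theta})\T\nabla\ell_i(q_i),\,\nu(\theta)-\beta\rangle$. By the chain rule, $\nabla\Lcal(\theta)=\frac1n\sum_{i=1}^n(\frac{\partial f(x_i,\theta)}{\partial\theta})\T\nabla\ell_i(f(x_i,\theta))=\frac1n\sum_{i=1}^n(\frac{\partial f(x_i,\theta)}{\partial\theta})\T\nabla\ell_i(q_i)$, where the decisive point is that the convexity inequality was linearized exactly at $q_i=f(x_i,\theta)$, the same point at which the chain rule evaluates $\nabla\ell_i$. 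Hence $\frac1n\sum_i\langle\nabla\ell_i(q_i),q_i-\tilde q_i\rangle=\langle\nabla\Lcal(\theta),\nu(\theta)-\beta\rangle$, and a single application of Cauchy--Schwarz bounds this by $\|\nu(\theta)-\beta\|_2\|\nabla\Lcal(\theta)\|_2$, which is the claimed inequality.

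The main obstacle --- really the one conceptual step that makes the lemma work rather than a routine calculation --- is ensuring that the point at which convexity is linearized coincides with the point at which the true gradient $\nabla\Lcal(\theta)$ is taken. This coincidence is not automatic for a general model; it is precisely what the common model structure assumption buys through the identity $f(x_i,\theta)=f_\theta(x_i,\nu(\theta))$. Without Assumption~\ref{assump:7} the gradient of $\Lcal$ at $\theta$ would not line up with the linearized-model gradient, the linear correction term could not be re-expressed through $\nabla\Lcal(\theta)$, and the bound would fail --- which is the same reason the quartic counterexample $f(x,\theta)=\theta^4-10\theta^2+6\theta+100$ of the main text escapes the conclusion. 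I would only take care that $\nabla\ell_i$ denotes the gradient in the $q$-argument (a vector in $\RR^{m_y}$) and that $\frac{\partial f(x_i,\theta)}{\partial\theta}\in\RR^{m_y\times d}$ is used consistently; these are bookkeeping points rather than genuine difficulties.
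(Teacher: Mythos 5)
Your proof is correct and follows essentially the same route as the paper's: both use Assumption \ref{assump:7} to write $f(x_i,\theta)=f_\theta(x_i,\nu(\theta))$, apply the first-order convexity inequality for $\ell_i$ linearized at $f_i(\theta)$, identify the resulting linear correction with $\nabla\Lcal(\theta)$ via the chain rule, and finish with Cauchy--Schwarz. The only difference is presentational (the paper bounds $\Lcal_\theta(\beta)$ from below and then rearranges, while you bound $\Lcal(\theta)$ from above directly), which is the same inequality.
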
 
\begin{proof}[Proof of Lemma \ref{lemma:pgd_approx}]
Let $\theta$  be any differentiable point such that  $f_{i}:\theta \mapsto f(x_i, \theta)$ is differentiable at $\theta$  for every $i \in \{1,\dots,n\}$. Then, for any $\beta\in \RR^d$, 
\begin{align}
 \Lcal_\theta(\beta) &= \frac{1}{n} \sum_{i=1}^n \ell_{i}(f_{\theta}(x_{i},\beta))
 \\  & = \frac{1}{n} \sum_{i=1}^n \ell_{i}(f_{i}(\theta^{})+f_{\theta}(x_{i},\beta)-f_{i}(\theta^{}))
\\ & \ge \frac{1}{n} \sum_{i=1}^n \left[ \ell_{i}(f_{i}(\theta^{}))+ \partial\ell_{i}(f_{i}(\theta^{}))(f_{\theta}(x_{i},\beta)-f_{i}(\theta^{})) \right]
\\ & =\Lcal(\theta) +\frac{1}{n} \sum_{i=1} ^n\partial\ell_{i}(f_{i}(\theta^{}))(f_{\theta}(x_{i},\beta)-f_{i}(\theta^{}))  
\end{align}
where the third line follows from the differentiability and and convexity of   $\ell_{i}$. From Assumption \ref{assump:7}, we have that
\begin{align}
f_{i}(\theta) =\sum_{k=1}^{d} 
\nu(\theta)_k \frac{\partial f(x_{i},\theta)}{\partial \theta_k}. 
\end{align}
 Combining these, \begin{align}
\Lcal_\theta(\beta) & \ge\Lcal(\theta) +\frac{1}{n} \sum_{i=1} ^n\partial\ell_{i}(f_{i}(\theta^{}))(f_{\theta}(x_{i},\beta)-f_{i}(\theta^{}))
\\ & =\Lcal(\theta) +\frac{1}{n} \sum_{i=1} ^n\partial\ell_{i}(f_{i}(\theta))\left(\sum_{k=1}^{d} 
\beta_k \frac{\partial f(x_{i},\theta)}{\partial \theta_k}-\sum_{k=1}^{d} 
\nu(\theta)_k \frac{\partial f(x_{i},\theta)}{\partial \theta_k}\right)
\\ & =\Lcal(\theta) +\frac{1}{n} \sum_{i=1} ^n\partial\ell_{i}(f_{i}(\theta))\sum_{k=1}^{d} 
(\beta_k -\nu(\theta)_k )\frac{\partial f(x_{i},\theta)}{\partial \theta_k}  \\ & =\Lcal(\theta) +\sum_{k=1}^{d} 
(\beta_k -\nu(\theta)_k ) \left(\frac{1}{n}  \sum_{i=1} ^n\partial\ell_{i}(f_{i}(\theta))\frac{\partial f(x_{i},\theta)}{\partial \theta_k} \right)   
\end{align}  
By noticing that $\nabla_{\theta_k}\Lcal(\theta)=\frac{1}{n}  \sum_{i=1} ^n\partial\ell_{i}(f_{i}(\theta))\frac{\partial f(x_{i},\theta)}{\partial \theta_k}$, this implies that
\begin{align}
\Lcal(\theta)   &\le \Lcal_\theta(\beta) -\sum_{k=1}^{d} 
(\beta_k -\nu(\theta)_k )\nabla_{\theta_k}\Lcal(\theta)
\\ & = \Lcal_\theta(\beta)+(\nu(\theta)-\beta  )\T\nabla_{}\Lcal(\theta)
\\ & \le \Lcal_\theta(\beta) +\|\nu(\theta)-\beta \|_2 \|\nabla\Lcal(\theta)\|_2.
\end{align}
\end{proof} 

\

We also utilize the following lemma, which is  a slightly modified version of a well-known fact:

\begin{lemma} \label{lemma:known_1}
For any differentiable function $\varphi: \dom(\varphi) \rightarrow \RR$ with an open  convex domain $\dom(\varphi) \subseteq \RR^{n_\varphi}$, if   $\|\nabla \varphi(z') - \nabla \varphi(z)\| \le L_{ \varphi} \|z'-z\|$ for all $z,z' \in \dom(\varphi)$, then
\begin{align}
\varphi(z') \le \varphi(z) + \nabla \varphi(z)\T (z'-z) + \frac{L_{ \varphi}}{2} \|z'-z\|^2 \quad   \text{for all $z,z' \in \dom(\varphi) $}.
\end{align}
\end{lemma}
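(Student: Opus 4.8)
The plan is to prove this classical descent lemma by reducing it to a one-dimensional computation along the segment joining $z$ and $z'$, and then applying the fundamental theorem of calculus together with the Lipschitz bound on the gradient. First I would fix arbitrary points $z,z' \in \dom(\varphi)$ and use the convexity of $\dom(\varphi)$ to guarantee that the entire segment $z+t(z'-z)$, for $t\in[0,1]$, stays inside the domain, so that $\varphi$ and $\nabla\varphi$ are well-defined along it. Defining the scalar function $g(t)=\varphi(z+t(z'-z))$, the chain rule gives $g'(t)=\nabla\varphi(z+t(z'-z))\T(z'-z)$; and since $\nabla\varphi$ is Lipschitz (hence continuous), $g'$ is continuous on $[0,1]$, so the fundamental theorem of calculus applies.

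The key step is then to write the gap as an integral and bound it. Writing $\varphi(z')-\varphi(z)=\int_0^1 g'(t)\,dt$ and subtracting the linear term $\nabla\varphi(z)\T(z'-z)=\int_0^1 \nabla\varphi(z)\T(z'-z)\,dt$ yields
\[
\varphi(z')-\varphi(z)-\nabla\varphi(z)\T(z'-z)=\int_0^1 \left(\nabla\varphi(z+t(z'-z))-\nabla\varphi(z)\right)\T(z'-z)\,dt.
\]
Applying the Cauchy--Schwarz inequality inside the integral and then the hypothesis $\|\nabla\varphi(z+t(z'-z))-\nabla\varphi(z)\|\le L_{\varphi}\|t(z'-z)\|=L_{\varphi}\,t\,\|z'-z\|$ bounds the integrand by $L_{\varphi}\,t\,\|z'-z\|^2$. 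Since $\int_0^1 t\,dt=\tfrac12$, this delivers exactly the claimed remainder $\frac{L_{\varphi}}{2}\|z'-z\|^2$, completing the argument.

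As for obstacles, there is essentially none of substance here, as this is a textbook result. The only points requiring a little care are justifying that the segment lies in the domain (handled by convexity) and that the fundamental theorem applies (handled by the continuity of $\nabla\varphi$, which follows from the Lipschitz hypothesis, since differentiability alone would not suffice to integrate $g'$). The phrase ``slightly modified version'' presumably refers only to stating the inequality over an open convex domain rather than over all of $\RR^{n_\varphi}$; the argument above is insensitive to this distinction, since it uses values of $\varphi$ solely on the segment, which convexity keeps inside $\dom(\varphi)$.
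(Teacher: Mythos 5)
Your proposal is correct and follows essentially the same route as the paper's proof: parametrize along the segment (kept inside the domain by convexity), apply the fundamental theorem of calculus to $t \mapsto \varphi(z+t(z'-z))$, subtract the linear term, and bound the resulting integrand via Cauchy--Schwarz and the Lipschitz hypothesis to obtain the factor $\frac{L_\varphi}{2}$. The only cosmetic difference is that the paper additionally uses openness of the domain to extend the segment to $[-\epsilon, 1+\epsilon]$ before differentiating, a technicality your argument sidesteps without harm.
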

\begin{proof}[Proof of Lemma \ref{lemma:known_1}]
Fix $z,z'\in \dom(\varphi) \subseteq \RR^{d_{\varphi}}$. Since $\dom(\varphi)$ is a convex set,  $z+t(z'-z)\in\dom(\varphi)$ for all $t \in [0, 1]$.
Since $\dom(\varphi) $ is open, there exists $\epsilon >0$ such that $z+(1+\epsilon')(z'-z)\in\dom(\varphi)$ and $z+(0-\epsilon')(z'-z)\in\dom(\varphi)$ for all $\epsilon'\le \epsilon$. Fix $\epsilon>0$ to be such a number. Combining these, $z+t(z'-z)\in\dom(\varphi)$ for all $t \in [0-\epsilon, 1+\epsilon]$.

Accordingly, we can define  a function $\bar \varphi: [0-\epsilon, 1+\epsilon] \rightarrow \RR$ by $\bar \varphi(t)=\varphi(z+t(z'-z))$. Then, $\bar \varphi(1)=\varphi(z')$, $\bar \varphi(0)=\varphi(z)$, and $\nabla \bar \varphi(t)=\nabla\varphi(z+t(z'-z))\T (z'-z)$ for $t \in [0, 1] \subset (0-\epsilon,1+\epsilon)$. Since $\|\nabla \varphi(z') - \nabla \varphi(z)\| \le L_{ \varphi} \|z'-z\|$, 
\begin{align}
\|\nabla\bar  \varphi(t')-\nabla\bar  \varphi(t) \| &=\|[\nabla\varphi(z+t'(z'-z)) -\nabla\varphi(z+t(z'-z))\T (z'-z) \|
\\ &\le \|z'-z\|\|\nabla\varphi(z+t'(z'-z)) -\nabla\varphi(z+t(z'-z))  \| \\ & \le  L_{ \varphi}\|z'-z\|\|(t'-t)(z'-z)   \|
\\ & \le L_{ \varphi}\|z'-z\|^{2}\|t'-t   \|.
\end{align}
Thus, $\nabla \bar \varphi:[0, 1]\rightarrow \RR$ is Lipschitz continuous with the Lipschitz constant $L_{ \varphi}\|z'-z\|^{2}$, and hence $\nabla\bar  \varphi$ is continuous. 

By using the fundamental theorem of calculus with the continuous function $\nabla\bar  \varphi:[0, 1]  \rightarrow \RR$,
\begin{align}
\varphi(z')&=\varphi(z)+ \int_0^1 \nabla\varphi(z+t(z'-z))\T (z'-z)dt
\\ &=\varphi(z)+\nabla\varphi(z)\T (z'-z)+ \int_0^1 [\nabla\varphi(z+t(z'-z))-\nabla\varphi(z)]\T (z'-z)dt
\\ & \le \varphi(z)+\nabla\varphi(z)\T (z'-z)+ \int_0^1 \|\nabla\varphi(z+t(z'-z))-\nabla\varphi(z)\| \|z'-z \|dt
\\ & \le \varphi(z)+\nabla\varphi(z)\T (z'-z)+ \int_0^1 t L_{ \varphi}\|z'-z\|^{2}dt
\\ & =  \varphi(z)+\nabla\varphi(z)\T (z'-z)+\frac{L_{ \varphi}}{2}\|z'-z\|^{2}. 
\end{align}
 
\end{proof}

 With Lemmas \ref{lemma:pgd_approx} and \ref{lemma:known_1},  we are now ready to complete the proof of Theorem \ref{thm:3}:  

\begin{proof}[Proof of Theorem \ref{thm:3}] 
 The function $\Lcal$ is differentiable since $\ell_{i}$ is differentiable, $\theta_{}\mapsto f(x_{i},\theta)$ is differentiable, and a composition of differentiable functions is differentiable. We will first show that in both cases of (i) and (ii) for the learning rates, we have $\lim_{t \rightarrow \infty}\nabla\Lcal(\theta_{}^{t}) = 0$. If $\nabla\Lcal(\theta_{}^{t})=0$ at any $t\ge 0$, then
Assumption \ref{assump:5} ($\|\bg^t\|_2^2 \le \bc \|\nabla\Lcal(\theta_{}^{t})\|_2^2$) implies  $\bg^t=0$, which implies 
$$
\theta_{}^{t+1}= \theta^{t}_{} \text{ and } \nabla\Lcal(\theta_{}^{t+1})=\nabla\Lcal(\theta_{}^{t})=0.
$$
This means that  if $\nabla\Lcal(\theta_{}^{t})=0$ at any $t\ge0$, we have that  $\bg^t=0$ and $\nabla\Lcal(\theta_{}^{t})=0$ for all $t \ge 0$ and hence
\begin{align} 
\lim_{t \rightarrow \infty}\nabla\Lcal(\theta_{}^{t}) = 0,
\end{align}
as desired. Therefore, we now focus on the remaining scenario where $\nabla\Lcal(\theta_{}^{t})\neq 0$ for all $t \ge 0$. 

By using Lemma \ref{lemma:known_1}, 
$$
\Lcal(\theta_{}^{t+1})\le \Lcal(\theta_{}^{t})-\alpha^t  \nabla\Lcal(\theta_{}^{t})  \T\bg^t  + \frac{L(\alpha^t)^2  }{2} \|\bg^t \|^2.
$$
By rearranging and using Assumption \ref{assump:5},
\begin{align}
\Lcal(\theta_{}^{t})-\Lcal(\theta_{}^{t+1}) &\ge \alpha^t  \nabla\Lcal(\theta_{}^{t})  \T\bg^t  - \frac{L(\alpha^t)^2  }{2} \|\bg^t \|^2
\\ & \ge\alpha^t  \uc \|\nabla\Lcal(\theta_{}^{t})\|^2 -\frac{L(\alpha^t)^2  }{2}\bc \|\nabla\Lcal(\theta_{}^{t})\|^2.
\end{align}
By simplifying the right-hand-side, 
\begin{align} \label{eq:9_2}
 \Lcal(\theta_{}^{t})-\Lcal(\theta_{}^{t+1}) &\ge\alpha^t   \|\nabla\Lcal(\theta_{}^{t})\|^2 (\uc-\frac{L\alpha^t  }{2}\bc). 
\end{align}

Let us now focus on case (i). Then, using $ \alpha^t \le \frac{\uc\ (2-\epsilon)}{L\bc}$, 
$$
\frac{L\alpha^t}{2}\bc\le\frac{L\uc\ (2-\epsilon)}{2L\bc}\bc =\uc-\frac{\epsilon}{2}\uc.
$$
Using this inequality and using $\epsilon\le \alpha^t$ in equation \eqref{eq:9_2},
\begin{align} \label{eq:10_2}
\Lcal(\theta_{}^{t})-\Lcal(\theta_{}^{t+1}) &\ge\frac{\uc\epsilon^{2}}{2}
\|\nabla\Lcal(\theta_{}^{t})\|^2 .
\end{align}
Since $\nabla\Lcal(\theta_{}^{t})\neq 0$ for any $t\ge 0$ (see above) and $\epsilon >0 $, this means that the sequence $(\Lcal(\theta_{}^{t}))_{t}$ is monotonically decreasing. Since $\Lcal(q) \ge 0$ for any $q$ in its domain, this implies that the sequence $(\Lcal(\theta_{}^{t}))_{t}$ converges. Therefore, $\Lcal(\theta_{}^{t})-\Lcal(\theta_{}^{t+1}) \rightarrow 0$  as $t \rightarrow \infty$. Using equation \eqref{eq:10_2}, this implies that $$
\lim_{t \rightarrow \infty}\nabla\Lcal(\theta_{}^{t}) = 0,
$$
which proves the desired result for the case (i). 

We now focus on the case (ii). Then, we still have equation \eqref{eq:9_2}. Since $\lim_{t \rightarrow \infty}\alpha^t =0$ in equation \eqref{eq:9_2}, the first order term in $\alpha^t$ dominates after sufficiently large $t$: i.e., there exists $\bar t \ge 0$ such that for any $t\ge \bar t$, 
\begin{align} \label{eq:11_2}
\Lcal(\theta_{}^{t})-\Lcal(\theta_{}^{t+1}) \ge c \alpha^t   \|\nabla\Lcal(\theta_{}^{t})\|^2.
\end{align}
for some constant $c>0$. Since $\nabla\Lcal(\theta_{}^{t})\neq 0$ for any $t\ge 0$ (see above) and $c \alpha^t>0$, this means that the sequence $(\Lcal(\theta_{}^{t}))_{t}$ is monotonically decreasing.  Since $\Lcal(q) \ge 0$ for any $q$ in its domain, this implies that the sequence $(\Lcal(\theta_{}^{t}))_{t}$ converges  to a finite value. Thus, by adding Eq. \eqref{eq:11_2} both sides over all $t \ge \bar t$, 
        \begin{align} 
              \infty >  \Lcal(\theta_{}^{\bar t})-\lim_{t \rightarrow \infty}\Lcal(\theta_{}^{t}) \ge c  \sum _{t=\bar t}^\infty \alpha^{t}   \|\nabla\Lcal(\theta_{}^{t})\|^2.
        \end{align}        
Since $\sum _{t=0}^\infty \alpha^{t}  = \infty$, this implies that  $\liminf _{t\to \infty }\|\nabla\Lcal(\theta^{t})\|=0$. We now show that by contradiction, $ \limsup_{t\to \infty }\|\nabla\Lcal(\theta^{t})\|=0$. Suppose that $\limsup_{t\to \infty }\|\nabla\Lcal(\theta^{t})\| > 0$. Then, there exists $\delta>0$ such that $\limsup_{t\to \infty }\|\nabla\Lcal(\theta^{t})\|\ge \delta$. Since $\liminf _{t\to \infty }\|\nabla\Lcal(\theta^{t})\|=0$ and $\limsup_{t\to \infty }\|\nabla\Lcal(\theta^{t})\| \allowbreak \ge \delta$, let $(\rho_j)_{j}$ and $(\rho'_j)_j$ be  sequences of indexes such that $\rho_j<\rho'_j<\rho_{j+1}$, $\|\nabla\Lcal(\theta^{t})\|>\frac{\delta}{3}$ for $\rho_j \le t < \rho_j'$, and  $\|\nabla\Lcal(\theta^{t})\|\le \frac{\delta}{3}$ for $\rho_j '\le t < \rho_{j+1}$. Since $\sum _{t=\bar t}^\infty \alpha^{t}   \|\nabla\Lcal(\theta_{}^{t})\|^2< \infty$, let $\bar j$ be sufficiently large such that  $\sum _{t=\rho_{\bar j}}^\infty \alpha^{t}   \|\nabla\Lcal(\theta_{}^{t})\|^2<\frac{\delta^2}{9L\sqrt{\bc}}$. Then, for any $j \ge \bar j$ and any $\rho$ such that $\rho_j \le \rho \le \rho'_j -1$, we have that 
\begin{align}
\|\nabla\Lcal(\theta_{}^{\rho})\|-\|\nabla\Lcal(\theta_{}^{\rho_j'})\|&\le\|\nabla\Lcal(\theta_{}^{\rho_j'})-\nabla\Lcal(\theta_{}^{\rho})\|
\\ &=\left\|\sum_{t=\rho}^{\rho'_j-1}\nabla\Lcal(\theta_{}^{t+1})-\nabla\Lcal(\theta_{}^{t})\right\|
 \\ & \le \sum_{t=\rho}^{\rho'_j-1}\left\|\nabla\Lcal(\theta_{}^{t+1})-\nabla\Lcal(\theta_{}^{t})\right\|  \\ & \le L  \sum_{t=\rho}^{\rho'_j-1}\left\|\theta_{}^{t+1}-\theta_{}^{t}\right\|
  \\ & \le L\sqrt{\bc}  \sum_{t=\rho}^{\rho'_j-1} \alpha^{t}   \left\|\nabla\Lcal(\theta_{}^{t})\right\|
\end{align}
where the first and third lines use the triangle inequality (and symmetry), the forth line uses the assumption that     $\|\nabla\Lcal(\theta)-\nabla\Lcal(\theta')\|\le L \|\theta-\theta'\|$, and the last line follows the  definition of  $\theta_{}^{t+1}-\theta_{}^{t}=-\alpha^t \bg$ and the assumption of $\|\bg^t\|^2 \le \bc \|\nabla\Lcal(\theta_{}^{t})\|^2
$.  Then, by using the definition of the sequences of the indexes, 
\begin{align}
\|\nabla\Lcal(\theta_{}^{\rho})\|-\|\nabla\Lcal(\theta_{}^{\rho_j'})\|&\le \frac{3L\sqrt{\bc}}{\delta}  \sum_{t=\rho}^{\rho'_j-1} \alpha^{t}   \left\|\nabla\Lcal(\theta_{}^{t})\right\|^2 \le \frac{\delta}{3}.
\end{align}
 Here, since  $\|\nabla\Lcal(\theta_{}^{\rho_j'})\|\le \frac{\delta}{3}$, by rearranging the inequality, we have that for any $\rho\ge \rho_{\bar j}$, 
 \begin{align}
\|\nabla\Lcal(\theta_{}^{\rho})\|\le\frac{2\delta}{3}.
\end{align}
This contradicts the inequality of  $\limsup_{t\to \infty }\|\nabla\Lcal(\theta^{t})\|\ge \delta$. Thus, we have 
 \begin{align}
\limsup_{t\to \infty }\|\nabla\Lcal(\theta^{t})\| =  \liminf _{t\to \infty }\|\nabla\Lcal(\theta^{t})\|= 0.
\end{align}
This implies that 
        \begin{equation}
                \lim_{t \rightarrow \infty}\nabla\Lcal(\theta_{}^{t}) = 0,
        \end{equation}
which proves the desired result for the case (ii). 
Therefore, in both cases of (i) and (ii) for the learning rates, we have $\lim_{t \rightarrow \infty}\nabla\Lcal(\theta_{}^{t}) = 0$. 

We now use the fact that $\lim_{t \rightarrow \infty}\nabla\Lcal(\theta_{}^{t}) = 0$, in order to  prove the statement of this theorem. From Lemma \ref{lemma:pgd_approx} and the differentiability assumption, for any  $\theta \in \RR^d$, it holds that  for any $\beta \in \RR^d$,
$$
\Lcal(\theta)  \le\Lcal_\theta(\beta) +\|\nu(\theta)-\beta \|_2 \|\nabla\Lcal(\theta)\|_2,
$$
where
$$
\Lcal_\theta(\beta)=\frac{1}{n}\sum_{i=1}^n  \ell\left( 
 \sum_{k=1}^{d} 
\beta_k \frac{\partial f(x_{i},\theta)}{\partial \theta_k}, y_{i}\right).
$$
Since
\begin{align}
&\eta Y^* \in \left\{\sum_{k=1}^{d} 
\beta_k \frac{\partial f_{X}(\theta)}{\partial \theta_k}:\beta \in \RR^d \right\} \subseteq \RR^{n \times m_y}
\\  &\Longleftrightarrow\vect(\eta Y^*) \in \left\{\sum_{k=1}^{d} 
\beta_k \frac{\partial \vect(f_{X}(\theta))}{\partial \theta_k}:\beta \in \RR^d \right\}\subseteq \RR^{n m_y} 
\\ &\Longleftrightarrow\vect( Y^*) \in\left\{ \frac{\partial \vect(f_{X}(\theta))}{\partial \theta}\beta:\beta \in \RR^d \right\} \subseteq \RR^{n m_y}
\\ &\Longleftrightarrow\vect(Y^*) \in\Col\left(\frac{\partial \vect(f_{X}(\theta))}{\partial \theta}\right)\subseteq \RR^{n m_y},
\end{align}
 we have that if $\vect(Y^*) \in \Col(\frac{\partial \vect(f_{X}(\theta))}{\partial \theta})\subseteq \RR^{n m_y}$, then $\eta Y^* \in \{\sum_{k=1}^{d} 
\beta_k \frac{\partial f_{X}(\theta)}{\partial \theta_k}:\allowbreak\beta \in \RR^d \} \subseteq \RR^{n \times m_y}$ for any $\eta \in \RR$.
Therefore,  if $\vect(Y^*) \in \Col(\frac{\partial \vect(f_{X}(\theta))}{\partial \theta})$, by setting 
$$\hat \beta(\theta,\eta):=\eta \left(\left(\frac{\partial \vect(f_{X}(\theta))}{\partial \theta}\right)\T \frac{\partial \vect(f_{X}(\theta))}{\partial \theta}\right)^{\dagger}\left(\frac{\partial \vect(f_{X}(\theta))}{\partial \theta}\right)\T \vect(Y^*) ,
$$ 
we have  $\eta Y^*=\sum_{k=1}^{d} 
\hat \beta(\theta,\eta)_k \frac{\partial f_{X}(\theta)}{\partial \theta_k}$ and thus
$$
\Lcal_\theta(\hat \beta(\theta,\eta))=\frac{1}{n}\sum_{i=1}^n  \ell\left( 
 \eta y_i, y_{i}\right)=\Lcal^*(\eta Y^{*})  .
$$
Using this $\hat \beta(\theta,\eta)$ in  Lemma \ref{lemma:pgd_approx}, we have that for any  $\theta$ satisfying $\vect(Y^*) \in \Col(\allowbreak\frac{\partial \vect(f_{X}(\theta))}{\partial \theta})$,
\begin{align}
\Lcal(\theta)  &\le\Lcal_\theta(\hat \beta(\theta,\eta)) +\|\nu(\theta)-\hat \beta(\theta,\eta) \|_2 \|\nabla\Lcal(\theta)\|_2,
\\ & =\Lcal^*(\eta Y^{*}) + \|\nu(\theta)-\hat \beta(\theta,\eta) \|_2 \|\nabla\Lcal(\theta)\|_2.
\end{align}
Note that the statement of this theorem vacuously holds if there is not limit point of the sequence $(\theta^t)_{t}$. Thus,  let $\htheta \in \RR^d$ be a limit point of the sequence $(\theta^t)_{t}$. Then, 
$\lim_{t \rightarrow \infty}\nabla\Lcal(\theta_{}^{t}) = 0$ implies $\nabla \Lcal(\hat \theta) =0$  and that if  $\theta^{t}$ satisfies $\vect(Y^*) \in \Col(\allowbreak\frac{\partial \vect(f_{X}(\theta^{t}))}{\partial \theta^{t}})$,
\begin{align}
\Lcal(\hat \theta)  \le L^*_{\eta } +\|\nu(\hat \theta)-\hat \beta(\hat \theta,\eta) \|_2 \|\nabla\Lcal(\hat \theta)\|_2=\Lcal^*(\eta Y^{*}) ,   
\end{align}
since $\htheta \in \RR^d$ and $\nabla \Lcal(\hat \theta) =0$.

\end{proof} 

\subsection{Proof of Theorem \ref{thm:p:1}} \label{sec:app:1:2}

The proof of Theorem \ref{thm:p:1} builds upon the proof of  Theorem \ref{thm:3} in Section \ref{sec:app:1:1}. More concretely, we   rely on the key lemma, Lemma \ref{lemma:pgd_approx}, from  Section \ref{sec:app:1:1}. As Lemma \ref{lemma:pgd_approx}
is not about the training dynamics, we utilize the following lemma to connect Lemma \ref{lemma:pgd_approx} to  the training dynamics: \begin{lemma} \label{lemma:known+alpha_3}
Assume that $\|\nabla\Lcal(\theta)-\nabla\Lcal(\theta')\|\le L \|\theta-\theta'\|$ for all $\theta,\theta'$ in the domain of $\Lcal$ for some $L \ge 0$. Define the sequence $(\theta^{t})_{t=0}^\infty$ by $\theta^{t+1} = \theta^{t}- \frac{2\alpha}{L}\nabla \Lcal(\theta^{t})$ for any $t \ge 0$ with any initial parameter vector $\theta^{0}$ and any $\alpha \in(0,1) $. Then, for any  $\Tcal\subseteq \NN_0$,
$$
\min_{t \in \Tcal} \|\nabla \Lcal(\theta^{t})\|\le\frac{1}{\sqrt{|\Tcal|}}    \sqrt{\frac{L(\Lcal(\theta^{t_0})-\Lcal(\theta^{t_{|\Tcal|-1}+1}))}{2\alpha(1-\alpha)}}, 
$$   
where $\{t_0,t_1\dots,t_{|\Tcal|-1}\}=\Tcal$ with $t_0 < t_1 < \cdots < t_{|\Tcal|-1}$.
\end{lemma}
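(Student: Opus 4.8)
The plan is to combine a single-step descent inequality with a telescoping argument. First I would apply the smoothness-based upper bound of Lemma \ref{lemma:known_1} with $\varphi=\Lcal$, $z=\theta^{t}$, and $z'=\theta^{t+1}=\theta^{t}-\frac{2\alpha}{L}\nabla\Lcal(\theta^{t})$. Since $\theta^{t+1}-\theta^{t}=-\frac{2\alpha}{L}\nabla\Lcal(\theta^{t})$, substituting this into the bound and simplifying the quadratic term $\frac{L}{2}\|\theta^{t+1}-\theta^{t}\|^{2}=\frac{2\alpha^{2}}{L}\|\nabla\Lcal(\theta^{t})\|^{2}$ yields the per-iteration decrease
$$
\Lcal(\theta^{t+1}) \le \Lcal(\theta^{t}) - \frac{2\alpha(1-\alpha)}{L}\|\nabla\Lcal(\theta^{t})\|^{2}
$$
for every $t\ge 0$. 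Because $\alpha\in(0,1)$, the coefficient $\frac{2\alpha(1-\alpha)}{L}$ is strictly positive, so this inequality also shows that the sequence $(\Lcal(\theta^{t}))_{t}$ is monotonically non-increasing, a fact I will use crucially below.

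Next I would rearrange the per-step inequality into $\frac{2\alpha(1-\alpha)}{L}\|\nabla\Lcal(\theta^{t})\|^{2}\le\Lcal(\theta^{t})-\Lcal(\theta^{t+1})$ and sum over $t\in\Tcal$. The subtlety is that $\Tcal$ need not be a block of consecutive integers, so the right-hand side does not telescope directly. Here I would invoke monotonicity: writing $\Tcal=\{t_{0}<t_{1}<\cdots<t_{|\Tcal|-1}\}$, the set $\Tcal$ is contained in the contiguous range $\{t_{0},t_{0}+1,\dots,t_{|\Tcal|-1}\}$, and every summand $\Lcal(\theta^{t})-\Lcal(\theta^{t+1})$ is nonnegative, so discarding the extra nonnegative terms and applying the full telescope gives
$$
\sum_{t\in\Tcal}\bigl(\Lcal(\theta^{t})-\Lcal(\theta^{t+1})\bigr) \le \sum_{t=t_{0}}^{t_{|\Tcal|-1}}\bigl(\Lcal(\theta^{t})-\Lcal(\theta^{t+1})\bigr) = \Lcal(\theta^{t_{0}})-\Lcal(\theta^{t_{|\Tcal|-1}+1}).
$$

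Finally, I would bound the minimum by the average, $\min_{t\in\Tcal}\|\nabla\Lcal(\theta^{t})\|^{2}\le\frac{1}{|\Tcal|}\sum_{t\in\Tcal}\|\nabla\Lcal(\theta^{t})\|^{2}$, combine the two preceding displays to obtain $\min_{t\in\Tcal}\|\nabla\Lcal(\theta^{t})\|^{2}\le\frac{L(\Lcal(\theta^{t_{0}})-\Lcal(\theta^{t_{|\Tcal|-1}+1}))}{2\alpha(1-\alpha)|\Tcal|}$, and take square roots to reach the stated bound. The main obstacle, though a mild one, is precisely the non-contiguity of $\Tcal$: the clean telescoping identity holds only over a full integer interval, and it is the monotonicity of $(\Lcal(\theta^{t}))_{t}$ (itself a consequence of $\alpha<1$) that licenses discarding the surplus nonnegative increments while still upper-bounding the partial sum by $\Lcal(\theta^{t_{0}})-\Lcal(\theta^{t_{|\Tcal|-1}+1})$.
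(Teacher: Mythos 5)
Your proposal is correct and follows essentially the same route as the paper's proof: the same application of Lemma \ref{lemma:known_1} to get the per-step decrease $\Lcal(\theta^{t+1}) \le \Lcal(\theta^{t}) - \frac{2\alpha(1-\alpha)}{L}\|\nabla\Lcal(\theta^{t})\|^{2}$, followed by summation over $\Tcal$, the min-versus-average bound, and a square root. The only (immaterial) difference is bookkeeping for the non-contiguity of $\Tcal$: you embed $\Tcal$ in the contiguous interval $\{t_{0},\dots,t_{|\Tcal|-1}\}$ and discard nonnegative surplus terms before telescoping, whereas the paper telescopes the sum over $\Tcal$ directly and uses monotonicity across the gaps via $\Lcal(\theta^{t_{k}+1})\ge\Lcal(\theta^{t_{k+1}})$; both rest on exactly the same descent inequality.
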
 
\begin{proof}[Proof of Lemma \ref{lemma:known+alpha_3}]
Using Lemma \ref{lemma:known_1} and $\theta^{t+1} - \theta^{t}= - \frac{2\alpha}{L}\nabla \Lcal(\theta^{t})$,
\begin{align} \label{eq:18}
\nonumber \Lcal(\theta^{t+1}) &\le \Lcal(\theta^{t})  + \nabla \Lcal(\theta^{t})  \T (\theta^{t+1}-\theta^{t}) + \frac{L}{2} \|\theta^{t+1}-\theta^{t}\|^2
\\\nonumber & = \Lcal(\theta^{t})-\frac{2\alpha}{L} \|\nabla \Lcal(\theta^{t})\|^{2} +  \frac{2\alpha^{2}}{L}\|\nabla \Lcal(\theta^{t})\|^2    
\\ & = \Lcal(\theta^{t})-\frac{2\alpha}{L} (1-\alpha)\|\nabla \Lcal(\theta^{t})\|^{2},    
\end{align} which implies that the sequence $(\Lcal(\theta^{t}))_t$ is non-increasing and  that
$$
\|\nabla \Lcal(\theta^{t})\|^{2} \le\frac{L}{2\alpha(1-\alpha)}(\Lcal(\theta^{t})-\Lcal(\theta^{t+1})).
$$
By summing up both sides over the time,
with $\Tcal=\{t_0,t_1\dots,t_{|\Tcal|-1}\}$,
\begin{align}
 \sum_{t \in \Tcal} \|\nabla \Lcal(\theta^{t})\|^{2} &\le\frac{L}{2\alpha(1-\alpha)}   \sum_{t \in \Tcal}(\Lcal(\theta^{t})-\Lcal(\theta^{t+1}))
  \\ & = \frac{L}{2\alpha(1-\alpha)}   \sum_{k=0}^{|\Tcal|}(\Lcal(\theta^{t_k})-\Lcal(\theta^{t_{k}+1}))
  \\ & \le \frac{L}{2\alpha(1-\alpha)} (\Lcal(\theta^{t_0})-\Lcal(\theta^{t_{|\Tcal|-1}+1})),
\end{align}
where the last line follows from the fact that $\Lcal(\theta^{t_{k}+1})\ge \Lcal(\theta^{t_{k+1}})$ and thus $-\Lcal(\theta^{t_{k}+1})+\Lcal(\theta^{t_{k+1}}) \allowbreak \le 0$ for all $k$, since the sequence $(\Lcal(\theta^{t}))_t$ is non-increasing from  equation \eqref{eq:18}. Using this inequality, 
$$
|\Tcal| \left(\min_{ t \in \Tcal} \|\nabla \Lcal(\theta^{t})\|^{2} \right) \le \sum_{t \in \Tcal} \|\nabla \Lcal(\theta^{t})\|^{2} \le\frac{L(\Lcal(\theta^{t_0})-\Lcal(\theta^{t_{|\Tcal|-1}+1}))}{2\alpha(1-\alpha)}   .  
$$

Therefore,
$$
\min_{t \in \Tcal} \|\nabla \Lcal(\theta^{t})\|\le \sqrt{\frac{L(\Lcal(\theta^{t_0})-\Lcal(\theta^{t_{|\Tcal|-1}+1}))}{2\alpha(1-\alpha)|\Tcal|}} =\frac{1}{\sqrt{|\Tcal|}}    \sqrt{\frac{L(\Lcal(\theta^{t_0})-\Lcal(\theta^{t_{|\Tcal|-1}+1}))}{2\alpha(1-\alpha)}}.
$$

\end{proof}  

With Lemmas \ref{lemma:pgd_approx} and \ref{lemma:known+alpha_3},  we are now ready to complete the proof of Theorem \ref{thm:p:1}: 

\begin{proof}[Proof of Theorem \ref{thm:p:1}]
From Lemma \ref{lemma:pgd_approx} and the differentiability assumption, for any  $\theta \in \RR^d$, it holds that  for any $\beta \in \RR^d$,
$$
\Lcal(\theta)  \le\Lcal_\theta(\beta) +\|\nu(\theta)-\beta \|_2 \|\nabla\Lcal(\theta)\|_2,
$$
where
$$
\Lcal_\theta(\beta)=\frac{1}{n}\sum_{i=1}^n  \ell\left( 
 \sum_{k=1}^{d} 
\beta_k \frac{\partial f(x_{i},\theta)}{\partial \theta_k}, y_{i}\right).
$$
Furthermore,
\begin{align}
&\eta Y^{*} \in \left\{\sum_{k=1}^{d} 
\beta_k \frac{\partial f_{X}(\theta)}{\partial \theta_k}:\beta \in \RR^d \right\} \subseteq \RR^{n \times m_y} 
\\ &\Longleftrightarrow\vect(\eta Y^{*}) \in \left\{\sum_{k=1}^{d} 
\beta_k \frac{\partial \vect(f_{X}(\theta))}{\partial \theta_k}:\beta \in \RR^d \right\}\subseteq \RR^{n m_y} 
\\ &\Longleftrightarrow\vect( Y^{*}) \in\left\{ \frac{\partial \vect(f_{X}(\theta))}{\partial \theta}\beta:\beta \in \RR^d \right\} \subseteq \RR^{n m_y}
\\ &\Longleftrightarrow\vect(Y^{*}) \in\Col\left(\frac{\partial \vect(f_{X}(\theta))}{\partial \theta}\right)\subseteq \RR^{n m_y}.
\end{align}
 Thus, we have that if $\vect(Y^{*}) \in \Col(\frac{\partial \vect(f_{X}(\theta))}{\partial \theta})\subseteq \RR^{n m_y}$, then $\eta Y^{*} \in \{\sum_{k=1}^{d} 
\beta_k \frac{\partial f_{X}(\theta)}{\partial \theta_k}:\beta \in \RR^d \} \allowbreak \subseteq \RR^{n \times m_y}$ for any $\eta \in \RR$.
Therefore,  if $\vect(Y^{*}) \in \Col(\frac{\partial \vect(f_{X}(\theta))}{\partial \theta})$, by setting 
$$\hat \beta(\theta,\eta):=\eta \left(\left(\frac{\partial \vect(f_{X}(\theta))}{\partial \theta}\right)\T \frac{\partial \vect(f_{X}(\theta))}{\partial \theta}\right)^{\dagger}\left(\frac{\partial \vect(f_{X}(\theta))}{\partial \theta}\right)\T \vect(Y^{*}) ,
$$ 
we have  $\eta Y^*=\sum_{k=1}^{d} 
\hat \beta(\theta,\eta)_k \frac{\partial f_{X}(\theta)}{\partial \theta_k}$ and thus
$$
\Lcal_\theta(\hat \beta(\theta,\eta))=\frac{1}{n}\sum_{i=1}^n  \ell\left( 
 \eta y_i^{*}, y_{i}\right)=\Lcal^*(\eta Y^{*}) .
$$
Using this $\hat \beta(\theta,\eta)$ in  Lemma \ref{lemma:pgd_approx}, we have that for any  $\theta$ satisfying $\vect(Y^*) \in \Col(\allowbreak\frac{\partial \vect(f_{X}(\theta))}{\partial \theta})$,
\begin{align}
\Lcal(\theta)  &\le\Lcal_\theta(\hat \beta(\theta,\eta)) +\|\nu(\theta)-\hat \beta(\theta,\eta) \|_2 \|\nabla\Lcal(\theta)\|_2,
\\ & =\Lcal^*(\eta Y^{*}) + \|\nu(\theta)-\hat \beta(\theta,\eta) \|_2 \|\nabla\Lcal(\theta)\|_2.
\end{align}
Therefore, if  $\theta^{t}$ satisfies $\vect(Y^*) \in \Col(\frac{\partial \vect(f_{X}(\theta^{t}))}{\partial \theta^{t}})$ for all $t \in\Tcal$, then
\begin{align}
\min_{t\in\Tcal}\Lcal(\theta^{t}) & \le\min_{t\in\Tcal}\left(\Lcal^*(\eta Y^{*})  +\|\nu(\theta^{t})-\hat \beta(\theta^{t},\eta)\|_2 \|\nabla\Lcal(\theta^{t})\|_2\right) 
\\ & \le \Lcal^*(\eta Y^{*})   + \sqrt{B_{\eta}'}\min_{t\in\Tcal}\|\nabla\Lcal(\theta^{t})\|_2,
\end{align}
where 
$$
B_{\eta}':= \max_{t\in\Tcal}\|\nu(\theta^{t})-\hat \beta(\theta^{t},\eta) \|_2^{2}.
$$
Using Lemma \ref{lemma:known+alpha_3}, we have that 
\begin{align}
\min_{t\in\Tcal}\Lcal(\theta^{t}) &\le\Lcal^*(\eta Y^{*})   + \frac{1}{\sqrt{|\Tcal|}}    \sqrt{\frac{B_{\eta}'L(\Lcal(\theta^{t_0})-\Lcal(\theta^{t_{|\Tcal|-1}+1}))}{2\alpha(1-\alpha)}}.
  \\ & \le\Lcal^*(\eta Y^{*})   + \frac{1}{\sqrt{|\Tcal|}}    \sqrt{\frac{L\zeta_{\eta}\Lcal(\theta^{t_0})}{2\alpha(1-\alpha)}},
\end{align}
where the last line follows from the fact that 
\begin{align}
\|\nu(\theta^{t})-\hat \beta(\theta^{t},\eta) \|_2^{2} &\le\|\nu(\theta^{t})\|^{2} + \|\hat \beta(\theta^{t},\eta)\|^2 +  2 \|\nu(\theta^{t})\| \|\hat \beta(\theta^{t},\eta) \|
\\ &\le 4\max(\|\nu(\theta^{t})\|^{2} , \|\hat \beta(\theta^{t},\eta)\|^2 )
\end{align}
By noticing that $t_0 =\min\{t:t \in \Tcal\}$, we complete the proof.

\end{proof}

\subsection{Proof of Proposition \ref{prop:1}} \label{sec:app:1:3}
\begin{proof}
From Assumption \ref{assump:7}, we have that
\begin{align}
f(x_i,\theta) =\sum_{k=1}^{d} 
\nu(\theta)_k \frac{\partial f(x_{i},\theta)}{\partial \theta_k}, 
\end{align}
where $\nu(\theta)_k=\theta_k$ for all $k\in S$ in Assumption \ref{assump:7} and  $\nu(\theta)_k=0$ for all  $k \notin S$. This implies that $$
\vect(f_X(\theta^{t})) \in\Col\left(\frac{\partial \vect(f_{X}(\theta^{t}))}{\partial \theta^{t}}\right). 
$$
If  $\vect(Y^{*}) \notin \Col(\frac{\partial \vect(f_{X}(\theta^{t}))}{\partial \theta^{t}})$,
then $\vect(\eta Y^{*}) \notin \Col(\frac{\partial \vect(f_{X}(\theta^{t}))}{\partial \theta^{t}})$ for  any $\eta\in\RR$ by the definition of the column space. Thus, if  $\vect(Y^{*}) \notin \Col(\frac{\partial \vect(f_{X}(\theta^{t}))}{\partial \theta^{t}})$,
we have  $\vect(f_X(\theta^{t}))\neq\vect(\eta Y^{*})$ for  any $\eta\in\RR$, which implies the statement of this proposition.

\end{proof}

\subsection{Proof of Theorem \ref{thm:1}} \label{sec:app:1:4}
The following two lemmas are the key lemmas in the proof of Theorem \ref{thm:1} --- the safe-exploration condition ensures that the Lebesgue measure of the  set of getting stuck to the states of  $\vect( Y^{*}) \notin \Col(\frac{\partial \vect(f_{X}(\theta^{t}))}{\partial \theta^{t}})$  is zero:

\begin{lemma} \label{lemma:2}
 Suppose Assumptions \ref{assump:1}  and \ref{assump:2} hold. Then, for any $j \in \{1,\dots, m_y\}$, the Lebesgue measure of the  set 
$
 \{[\vect(R^{(j)})\T,\theta_{(1:H-2)}\T, W^{(H)}_{j*}]\T \in \RR^{m_{H} m_{H-1} + d_{1:H-2}+ m_H}  : \rank(\phi(R^{(j)},\theta_{(1:H-2)})[\diag(\allowbreak(W^{(H)}_{j*})\T) \otimes I_{m_{H-1}}])\neq n \}
$
is zero.
\end{lemma}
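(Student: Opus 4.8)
The plan is to reduce the statement to the classical fact that the zero set of a real analytic function that is not identically zero has Lebesgue measure zero \citep{mityagin2015zero}. Write $N = m_H m_{H-1} + d_{1:H-2} + m_H$ and abbreviate the free variable as $w = [\vect(R^{(j)})\T, \theta_{(1:H-2)}\T, W^{(H)}_{j*}]\T \in \RR^N$. For each such $w$ I would introduce the $n \times m_H m_{H-1}$ matrix
$$
M(w) = \phi(R^{(j)}, \theta_{(1:H-2)})\left[\diag((W^{(H)}_{j*})\T) \otimes I_{m_{H-1}}\right],
$$
and set $g(w) = \det(M(w) M(w)\T)$. Since $M(w)$ has $n$ rows, $\rank(M(w)\,M(w)\T) = \rank(M(w))$, so $\rank(M(w)) = n$ if and only if $M(w)M(w)\T \in \RR^{n \times n}$ is nonsingular, i.e. if and only if $g(w) \neq 0$. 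Hence the set in the statement equals the zero set $\{w : g(w) = 0\}$, and it suffices to show that $g$ is real analytic on $\RR^N$ and not identically zero.

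First I would verify real analyticity. By Assumption \ref{assump:2}, each $z_i = z(x_i, \theta_{(1:H-2)})$ is real analytic in $\theta_{(1:H-2)}$ and $q \mapsto \tsigma(q)$ is real analytic; since sums, products, and compositions of real analytic functions are real analytic, every entry of $\phi(R^{(j)}, \theta_{(1:H-2)})$ — each of the form $\tsigma(z_i\T q_{*k})\, z_i\T$ — is a real analytic function of $(R^{(j)}, \theta_{(1:H-2)})$. The entries of $\diag((W^{(H)}_{j*})\T) \otimes I_{m_{H-1}}$ are linear in $W^{(H)}_{j*}$, so $M(w)$ has real analytic entries, and $g(w) = \det(M(w)M(w)\T)$ is a polynomial in those entries. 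Thus $g$ is real analytic on all of $\RR^N$, which is connected and open, so that the cited zero-set result applies once nontriviality is established.

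The main step — and where the safe-exploration condition enters — is to exhibit a single point at which $g$ does not vanish. By Assumption \ref{assump:1} there exist $q^\star$ and $\theta^\star_{(1:H-2)}$ with $\rank(\phi(q^\star, \theta^\star_{(1:H-2)})) = n$; in particular $m_H m_{H-1} \ge n$, so full row rank is achievable. I would choose $w^\star$ with $R^{(j)} = q^\star$, $\theta_{(1:H-2)} = \theta^\star_{(1:H-2)}$, and $W^{(H)}_{j*}$ equal to the all-ones row vector. Then $\diag((W^{(H)}_{j*})\T) = I_{m_H}$, so the factor $\diag((W^{(H)}_{j*})\T) \otimes I_{m_{H-1}} = I_{m_H m_{H-1}}$ (more generally, for any $W^{(H)}_{j*}$ with all entries nonzero this factor is a Kronecker product of invertible matrices and hence invertible). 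Since right-multiplication by an invertible matrix preserves rank, $\rank(M(w^\star)) = \rank(\phi(q^\star, \theta^\star_{(1:H-2)})) = n$, giving $g(w^\star) \neq 0$. Therefore $g$ is a nontrivial real analytic function on $\RR^N$, and by \citep{mityagin2015zero} its zero set has Lebesgue measure zero, which is the claim. The only delicate point is the rank bookkeeping involving the Kronecker factor; I expect this to be the main obstacle, but it is handled cleanly by choosing $W^{(H)}_{j*}$ with nonzero entries so that the factor is invertible and the rank of $\phi$ is transferred to $M$ unchanged.
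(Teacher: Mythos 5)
Your proposal is correct and follows essentially the same route as the paper's proof: both reduce the rank condition to the nonvanishing of the Gram determinant $\det(MM\T)$, establish real analyticity of that determinant from Assumption \ref{assump:2}, use Assumption \ref{assump:1} together with an invertible choice of the Kronecker factor (the paper takes strictly positive entries of $W^{(H)}_{j*}$, you take the all-ones vector) to exhibit a point of nonvanishing, and conclude via the zero-set theorem of \citet{mityagin2015zero}. No gaps; the rank-transfer step you flagged as delicate is handled exactly as in the paper.
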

\begin{proof}[Proof of Lemma \ref{lemma:2}] 
Let $v=[\vect(R^{(j)})\T,\theta_{(1:H-2)}\T ,W^{(H)}_{j*}]\T$ and $J=[\diag((W^{(H)}_{j*})\T) \otimes I_{m_{H-1}}]$. Define
\begin{align}
\varphi(v)= \det(\phi(R^{(j)},\theta_{(1:H-2)})JJ\T\phi(R^{(j)},\theta_{(1:H-2)})^{\top}).
\end{align}
 Since the rank of $\allowbreak\phi(R^{(j)},\theta_{(1:H-2)})$ and the rank of the Gram matrix are equal, we have that 
\begin{align} \label{eq:2}
&\{v \in \RR^{m_{H} m_{H-1} + d_{1:H-2}+ m_H}  : \rank(\phi(R^{(j)},\theta_{(1:H-2)})J)\neq n \}
\\ \nonumber&=\{v\in \RR^{m_{H} m_{H-1}+d_{1:H-2}}:\varphi(v)=0\}.
\end{align}
Using Assumption \ref{assump:1}, 
\begin{align} \label{eq:4}
\exists(R^{(j)}, \theta_{(1:H-2)})\in \RR^{m_{H}\times m_{H-1}} \times \RR^{d_{1:H-2}}  \ \ \ \text{ s.t. } \ \ \   \rank(\phi(R^{(j)},\theta_{(1:H-2)}))=n.
\end{align}
Since $J=[\diag((W^{(H)}_{j*})\T) \otimes I_{m_{H-1}}]$ is a diagonal matrix with repeated entries of $W^{(H)}_{j*}$, by setting $W^{(H)}_{jk}>0$ to be strictly positive for all $k =1,\dots,m_H$, equation \eqref{eq:4} implies that
\begin{align} \label{eq:5}
\scalebox{0.94}{$\displaystyle \exists(R^{(j)}, \theta_{(1:H-2)},W^{(H)}_{j*})\in \RR^{m_{H}\times m_{H-1}} \times \RR^{d_{1:H-2}} \times \RR^{m_H} \ \text{s.t.}  \   \rank(\phi(R^{(j)},\theta_{(1:H-2)})J)
=n.$}
\end{align}
Combining  \eqref{eq:2} and \eqref{eq:5} with $v=[\vect(R^{(j)})\T,\theta_{(1:H-2)}\T ,W^{(H)}_{j*}]\T$, we have that
\begin{align} \label{eq:3}
\exists v\in\RR^{m_{H}\times m_{H-1}} \times \RR^{d_{1:H-2}} \times \RR^{m_H}  \ \ \ \text{ s.t. } \ \ \   \varphi(v)\neq 0.
\end{align}

Since the functions $\theta_{(1:H-2)} \mapsto z(x_{i},\theta_{(1:H-2)})$ and $q\mapsto \tsigma(q)$ are real analytic for each  $i \in \{1,\dots, n\}$ (Assumption \ref{assump:2}), the function $\phi$ is real analytic (because a composition of  real analytic functions is real analytic and an affine map is real analytic). This implies that the function $\varphi$ is   real analytic  since $\phi$ is real analytic, the function $\det$ is real analytic, and a  composition of   real analytic functions is real analytic. Since $\varphi$ is real analytic, if $\varphi$ is not identically zero ($\varphi\neq 0$),  the Lebesgue measure of its zero set 
$$
\{v\in \RR^{m_{H}\times m_{H-1}} \times \RR^{d_{1:H-2}} \times \RR^{m_H}:\varphi(v)=0\}
$$
is zero \citep{mityagin2015zero}. Equation \eqref{eq:3} shows that  $\varphi$ is not identically zero ($\varphi\neq 0$), and thus   the  Lebesgue measure of the set 
$$\{v\in \RR^{m_{H}\times m_{H-1}} \times \RR^{d_{1:H-2}} \times \RR^{m_H}:\varphi(v)=0\}
$$ is zero. Using equation \eqref{eq:2}, this implies that  the  Lebesgue measure of the set
$$\{v \in \RR^{m_{H} m_{H-1} + d_{1:H-2}+ m_H}  : \rank(\phi(R^{(j)},\theta_{(1:H-2)})J)\neq n \}
$$
is zero. This implies the statement of this lemma by recalling that $v=[\vect(R^{(j)})\T,\allowbreak \theta_{(1:H-2)}\T ,W^{(H)}_{j*}]\T$ and $J=[\diag((W^{(H)}_{j*})\T) \otimes I_{m_{H-1}}]$.

\end{proof}
\begin{lemma} \label{lemma:4}
Suppose Assumptions \ref{assump:1}  and \ref{assump:2} hold. Therefore, with probability one, for all $j \in \{1,\dots,m_y\}$,
\begin{align}
\rank(\phi(\theta_{(H-1,j)}^\tau,\theta_{(1:H-2)}^{\tau})[\diag(\theta_{(H,j)}^\tau )\otimes I_{m_{H-1}}])=n.
\end{align}
\end{lemma}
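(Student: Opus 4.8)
The plan is to leverage Lemma \ref{lemma:2}, which already shows that for each fixed output coordinate $j$ the set of parameter configurations causing the rank to drop below $n$ is Lebesgue-null, and to combine it with the fact that the random point $\theta^\tau$ is produced by adding an absolutely continuous perturbation. Concretely, at line 9 of Algorithm \ref{algorithm:EEW} we set $\theta^\tau=\theta^{\tau-1}+\varepsilon\delta$ with $\delta\sim\Ncal_d(0,I)$, so the relevant coordinates of $\theta^\tau$ have a Gaussian (hence absolutely continuous) law, and an absolutely continuous random vector lands in a fixed Lebesgue-null set with probability zero. A final union bound over the finitely many $j\in\{1,\dots,m_y\}$ then upgrades this to the ``for all $j$'' statement almost surely.

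First I would fix $j$ and identify the matrix $\phi(\theta_{(H-1,j)}^\tau,\theta_{(1:H-2)}^\tau)[\diag(\theta_{(H,j)}^\tau)\otimes I_{m_{H-1}}]$ appearing in the claim with the object $\phi(R^{(j)},\theta_{(1:H-2)})J$ of Lemma \ref{lemma:2}, where $R^{(j)}=\theta_{(H-1,j)}^\tau$ and $J=[\diag((W^{(H)}_{j*})\T)\otimes I_{m_{H-1}}]$ with $\theta_{(H,j)}^\tau=(W^{(H)}_{j*})\T$. Writing $v^\tau$ for the subvector of $\theta^\tau$ that collects the coordinates $[\vect(R^{(j)})\T,\theta_{(1:H-2)}\T,W^{(H)}_{j*}]\T$, Lemma \ref{lemma:2} says precisely that the ``bad set'' $E_j=\{v:\rank(\phi(R^{(j)},\theta_{(1:H-2)})J)\neq n\}$ has Lebesgue measure zero, so the task reduces to showing $v^\tau\notin E_j$ almost surely.

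Next I would carry out the probabilistic step. Since $\delta$ is independent of $\theta^{\tau-1}$ with independent standard normal entries, conditioning on $\theta^{\tau-1}$ makes $v^\tau$ equal to its $\theta^{\tau-1}$-counterpart plus $\varepsilon$ times the corresponding subvector of $\delta$; the latter is a nondegenerate Gaussian, so the conditional law of $v^\tau$ given $\theta^{\tau-1}$ is $\Ncal(\,\cdot\,,\varepsilon^2 I)$, which is absolutely continuous with respect to Lebesgue measure. Hence the conditional probability of the null event $\{v^\tau\in E_j\}$ is zero for every realization of $\theta^{\tau-1}$, and the tower property gives unconditional probability zero. A union bound over $j=1,\dots,m_y$ then yields the stated result.

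I expect the only genuine subtlety to be the conditioning argument: $\theta^{\tau-1}$ is itself random and depends on the full exploration trajectory, so one cannot simply treat $v^\tau$ as a plain Gaussian. The point to get right is that $E_j$ is a single, \emph{fixed} null set whose definition does not involve $\theta^{\tau-1}$, so absolute continuity of the conditional law of $v^\tau$ given $\theta^{\tau-1}$ suffices to kill the event for every value of $\theta^{\tau-1}$, with no control over the distribution of $\theta^{\tau-1}$ required. All of the real-analytic-geometry content has already been absorbed into Lemma \ref{lemma:2} (real analyticity of $\phi$, the safe-exploration condition forcing the determinant not to vanish identically, and the zero-set theorem for real analytic functions), so beyond this measure-theoretic bookkeeping the argument is routine.
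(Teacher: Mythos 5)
Your proposal is correct and follows essentially the same route as the paper's proof: invoke Lemma \ref{lemma:2} for the Lebesgue-null bad set, use the fact that $\theta^\tau=\theta^{\tau-1}+\varepsilon\delta$ has a non-degenerate Gaussian (hence absolutely continuous) law conditionally on $\theta^{\tau-1}$ so the null set is avoided almost surely, and then intersect the finitely many probability-one events over $j$. Your explicit conditioning/tower-property step is just a more careful spelling-out of what the paper compresses into ``a non-degenerate Gaussian measure with the mean shifted by $\theta^{\tau-1}$ is absolutely continuous with respect to the Lebesgue measure.''
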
 
\begin{proof}[Proof of Lemma \ref{lemma:4}]
From Lemma \ref{lemma:2}, the Lebesgue measure of the  set 
$
 \{[\vect(R^{(j)})\T,\allowbreak\theta_{(1:H-2)}\T, \theta_{(H,j)}\T]\T \in \RR^{m_{H} m_{H-1} + d_{1:H-2}+ m_H}  : \rank(\phi(R^{(j)},\theta_{(1:H-2)})[\diag(\theta_{(H,j)}) \otimes I_{m_{H-1}}]\allowbreak)\neq n \}
$
is zero. Moreover, $\theta^{\tau}=\theta^{\tau-1}+\varepsilon \delta$ defines a non-degenerate Gaussian measure with the mean shifted by $\theta^{\tau-1}$ and a non-degenerate Gaussian measure
with any mean and variance is absolutely continuous with respect to the Lebesgue measure. Therefore,  for each   $j \in \{1,\dots,m_y\}$,
 with probability one,
\begin{align}
\rank(\phi(\theta_{(H-1,j)}^\tau,\theta_{(1:H-2)}^{\tau})[\diag(\theta_{(H,j)}^\tau )\otimes I_{m_{H-1}}])=n.
\end{align}
Since a product of ones is one,  this holds with probability one for all $j \in \{1,\dots,m_y\}$. 
\end{proof}

In the following, we will use  the following additional notation:
$$
\hat f_X(\theta_{(H-1)}):=f_{X}(\theta_{(1:H-2)}^{\tau},\theta_{(H-1)},\theta_{(H)}^{\tau})
$$
$$
\hf_i(\theta_{(H-1)}):=f_{i}(\theta_{(1:H-2)}^{\tau},\theta_{(H-1)},\theta_{(H)}^\tau) $$
The following lemma relates the two functions $f_X$ and $\phi$:
\begin{lemma} \label{lemma:1}
For $\theta_{(H-1)} \in \RR^{d_{H-1}}$ and $j \in \{1,\dots,m_y\}$, the following holds:
\begin{align} 
\hf_{X}(\theta_{(H-1)})_{*j} =\phi_{}(\theta_{(H-1,j)}^\tau,\theta_{(1:H-2)}^{\tau})[\diag(\theta_{(H,j)}^\tau )\otimes I_{m_{H-1}}]\vect(\theta_{(H-1,j)}^{}).
\end{align}
\end{lemma}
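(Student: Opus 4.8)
The plan is to verify the identity by a direct entrywise computation: I would fix $j$ and show that the $i$-th entry of each side agrees for every $i \in [n]$. The conceptual point driving the whole argument is that the substitution $\sigma_j \mapsto \sigma_{R^{(j)}}$ performed at line 10 of Algorithm \ref{algorithm:EEW} \emph{freezes} the argument of $\tsigma$ at its time-$\tau$ value, so that the map $\theta_{(H-1,j)} \mapsto \hf_i(\theta_{(H-1)})_j$ becomes \emph{linear}. The lemma merely records this linear map in the factored form $\phi \cdot [\diag(\cdot)\otimes I] \cdot \vect(\cdot)$, and the proof amounts to checking that the three factors reassemble the correct linear map.

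First I would write $\hf_i(\theta_{(H-1)})_j$ explicitly from the modified model, holding $\theta_{(1:H-2)}=\theta_{(1:H-2)}^{\tau}$ and $\theta_{(H)}=\theta_{(H)}^{\tau}$ and using $\sigma_{R^{(j)}}$ with $R^{(j)}=\theta_{(H-1,j)}^{\tau}$. Expanding the Hadamard product componentwise and writing $z_i = z(x_i,\theta_{(1:H-2)}^{\tau})$ gives
\[
\hf_i(\theta_{(H-1)})_j = \sum_{k=1}^{m_H} W^{(H),\tau}_{jk}\, \tsigma\!\big(W^{(H-1,j),\tau}_{k*} z_i\big)\,\big(W^{(H-1,j)}_{k*} z_i\big),
\]
where only the last factor depends on the optimized variable, confirming the linearity. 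I would then match this against the right-hand side. Setting $q=\theta_{(H-1,j)}^{\tau}=(W^{(H-1,j),\tau})^{\top}$, its $k$-th column is $q_{*k}=(W^{(H-1,j),\tau}_{k*})^{\top}$, so $z_i^{\top}q_{*k}=W^{(H-1,j),\tau}_{k*}z_i$ and the $(i,k)$-block of $\phi(q,\theta_{(1:H-2)}^{\tau})$ is exactly $\tsigma(W^{(H-1,j),\tau}_{k*}z_i)\,z_i^{\top}\in\RR^{1\times m_{H-1}}$.

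Next I would read off the matrix-vector product. The factor $\diag(\theta_{(H,j)}^{\tau})\otimes I_{m_{H-1}}$ is block-diagonal with $k$-th block $W^{(H),\tau}_{jk}I_{m_{H-1}}$ (using $\theta_{(H,j)}=(W^{(H)}_{j*})^{\top}$), while the column-stacking convention makes the $k$-th block of $\vect(\theta_{(H-1,j)})$ equal to $(W^{(H-1,j)}_{k*})^{\top}$. Multiplying these three factors and extracting the $i$-th entry produces $\sum_{k} W^{(H),\tau}_{jk}\,\tsigma(W^{(H-1,j),\tau}_{k*}z_i)\,(W^{(H-1,j)}_{k*}z_i)$, which is precisely the displayed expression for $\hf_i(\theta_{(H-1)})_j$. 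Since this holds for every $i$, the two length-$n$ column vectors coincide, giving $\hf_X(\theta_{(H-1)})_{*j}=\phi(\theta_{(H-1,j)}^{\tau},\theta_{(1:H-2)}^{\tau})[\diag(\theta_{(H,j)}^{\tau})\otimes I_{m_{H-1}}]\vect(\theta_{(H-1,j)})$.

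The main obstacle is purely bookkeeping rather than conceptual: one must keep straight the two transposes (between $W^{(H-1,j)}$ and $\theta_{(H-1,j)}$, and between $W^{(H)}_{j*}$ and $\theta_{(H,j)}$) and ensure that the column-stacking of $\vect$ aligns the $k$-th block of $\vect(\theta_{(H-1,j)})$ with the $k$-th block-column of $\phi$ and the $k$-th diagonal block of the Kronecker factor. Once the indexing is organized so that the block index $k$ consistently labels the $k$-th unit of the last hidden layer, the remaining verification is a routine componentwise calculation.
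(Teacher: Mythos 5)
Your proof is correct and follows essentially the same route as the paper's: a direct algebraic expansion of $\hf_i(\theta_{(H-1)})_j$ into the sum $\sum_{k} W^{(H),\tau}_{jk}\,\tsigma(W^{(H-1,j),\tau}_{k*}z_i)\,(W^{(H-1,j)}_{k*}z_i)$, followed by block-by-block identification with the three factors $\phi(\theta_{(H-1,j)}^\tau,\theta_{(1:H-2)}^{\tau})$, $\diag(\theta_{(H,j)}^\tau)\otimes I_{m_{H-1}}$, and $\vect(\theta_{(H-1,j)})$, exactly as the paper does when it stacks the per-sample row vectors and factors out the Kronecker block-diagonal matrix. If anything, your explicit observation that the line-10 replacement $\sigma_j \mapsto \sigma_{R^{(j)}}$ freezes the argument of $\tsigma$ at $\theta_{(H-1,j)}^{\tau}$ — which is precisely what makes $\theta_{(H-1,j)} \mapsto \hf_X(\theta_{(H-1)})_{*j}$ linear and the stated factorization valid for all $\theta_{(H-1)}$ — is handled more carefully than in the paper's write-up, which carries the unfrozen expression $\tsigma(W^{(H-1,j)}_{k*}z)$ through the computation and only substitutes the $\tau$-values in the final step.
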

\begin{proof}[Proof of Lemma \ref{lemma:1}] 
Using the definition of $f$ and $z= z(x,\theta_{(1:H-2)})$,
we rewrite the inner product in the definition of $f(x,\theta)_{j}$ with a sum of scalar multiplications and then rewrite the sum  to a dot product of two block vectors: \begin{align}
f(x,\theta)_{j}&=W^{(H)}_{j*} \sigma(W^{(H-1,j)},z) 
\\ & = \sum_{k=1}^{m_H}  W^{(H)}_{jk} \sigma(W^{(H-1,j)},z)_{k}
\\ & = \sum_{k=1}^{m_H}  W^{(H)}_{jk} \left(\tsigma(W^{(H-1,j)}z)\circ (W^{(H-1,j)}z)\right)_k
\\ & = \sum_{k=1}^{m_H}  W^{(H)}_{jk}\tsigma(W^{(H-1,j)}_{k*}z) W^{(H-1,j)}_{k*}z
\\ & = \sum_{k=1}^{m_H}  W^{(H)}_{jk}\tsigma(W^{(H-1,j)}_{k*}z)z\T  (W^{(H-1,j)}_{k*})\T
\\ & = \begin{bmatrix}W^{(H)}_{j1}\tsigma(W^{(H-1,j)}_{1*}z)z\T   & \cdots & W^{(H)}_{jm_{H}}\tsigma(W^{(H-1,j)}_{m_{H}*}z)z\T \\
\end{bmatrix}\begin{bmatrix}(W^{(H-1,j)}_{1*})\T \\
\vdots \\
(W^{(H-1,j)}_{m_{H}*})\T \\
\end{bmatrix} 
\end{align}
Here, noticing that 
$$
\begin{bmatrix}(W^{(H-1,j)}_{1*})\T \\
\vdots \\
(W^{(H-1,j)}_{m_{H}*})\T \\
\end{bmatrix}  = \vect((W^{(H-1,j)})\T),
$$
we have that 
\begin{align} 
f(x,\theta)_{j} = \begin{bmatrix}W^{(H)}_{j1}\tsigma(W^{(H-1,j)}_{1*}z)z\T   & \cdots & W^{(H)}_{jm_{H}}\tsigma(W^{(H-1,j)}_{m_{H}*}z)z\T \\
\end{bmatrix}\vect((W^{(H-1,j)})\T). 
\end{align}
By arranging the model output $f(x,\theta)_{j}$ over sample points $(x_{1},\dots,x_n)$, it yields that
\begin{align} \label{eq:1}
\scalebox{0.97}{$\displaystyle f_{X}(\theta)_{*j} = \begin{bmatrix}W^{(H)}_{j1}\tsigma(W^{(H-1,j)}_{1*}z_{1})z_{1}\T & \cdots & W^{(H)}_{jm_{H}}\tsigma(W^{(H-1,j)}_{m_{H}*}z_{1})z_{1}\T \\
\vdots & \ddots & \vdots \\
W^{(H)}_{j1}\tsigma(W^{(H-1,j)}_{1*}z_{n})z_{n}\T & \cdots & W^{(H)}_{jm_{H}}\tsigma(W^{(H-1,j)}_{m_{H}*}z_{n})z_{n}\T \\
\end{bmatrix} \vect((W^{(H-1,j)})\T). $}
\end{align}
We can further simplify the first factor in the right hand side of this  equation as follows:
\begin{align}
& \begin{bmatrix}W^{(H)}_{j1}\tsigma(W^{(H-1,j)}_{1*}z_{1})z_{1}\T & \cdots & W^{(H)}_{jm_{H}}\tsigma(W^{(H-1,j)}_{m_{H}*}z_{1})z_{1}\T \\
\vdots & \ddots & \vdots \\
W^{(H)}_{j1}\tsigma(W^{(H-1,j)}_{1*}z_{n})z_{n}\T & \cdots & W^{(H)}_{jm_{H}}\tsigma(W^{(H-1,j)}_{m_{H}*}z_{n})z_{n}\T \\
\end{bmatrix}
\\ & = \begin{bmatrix}\tsigma(W^{(H-1,j)}_{1*}z_{1})z_{1}\T & \cdots & \tsigma(W^{(H-1,j)}_{m_{H}*}z_{1})z_{1}\T \\
\vdots & \ddots & \vdots \\
\tsigma(W^{(H-1,j)}_{1*}z_{n})z_{n}\T & \cdots & \tsigma(W^{(H-1,j)}_{m_{H}*}z_{n})z_{n}\T \\
\end{bmatrix} [\diag((W^{(H)}_{j*})\T) \otimes I_{m_{H-1}}].
\end{align}
By using the definitions of $\hf_{X}(\theta_{(H-1)}):=f_{X}(\theta_{(1:H-2)}^{\tau},\theta_{(H-1)},\theta_{(H)}^{\tau})$ and $\phi_{}(q,\theta_{(1:H-2)})$,
this implies that 
\begin{align}
\hf_{X}(\theta_{(H-1)})_{*j} =\phi_{}(\theta_{(H-1,j)}^\tau,\theta_{(1:H-2)}^{\tau})[\diag(\theta_{(H,j)}^\tau )\otimes I_{m_{H-1}}]\vect(\theta_{(H-1,j)}). 
\end{align}

\end{proof}

The following lemma relates the two functions $\theta_{(H-1)} \mapsto \frac{\partial \hLcal(\theta_{(H-1)})}{\partial\theta_{(H-1)} }$ and $\phi$:

\begin{lemma} \label{lemma:3}
Suppose Assumption \ref{assump:3} holds. Then, for any $t \ge \tau$,
\begin{align}
\left(\frac{\partial \hLcal(\theta_{(H-1)})}{\partial\theta_{(H-1)} }\right)\T=\begin{bmatrix}[\diag(\theta_{(H,1)}^\tau )\otimes I_{m_{H-1}}]\phi(\theta_{(H-1,1)}^\tau,\theta_{(1:H-2)}^{\tau})\T   r_{1}(\theta_{(H-1)}) \\
\vdots \\
[\diag(\theta_{(H,m_{y})}^\tau )\otimes I_{m_{H-1}}]\phi(\theta_{(H-1,m_{y})}^\tau,\theta_{(1:H-2)}^{\tau})\T   r_{m_{y}}(\theta_{(H-1)}) \\
\end{bmatrix} ,
\end{align}
where $\left(\frac{\partial \hLcal(\theta_{(H-1)})}{\partial\theta_{(H-1)} }\right)\T\in \RR^{m_{y}m_{H}m_{H-1} }$ and
$$
r_{j}(\theta_{(H-1)})=\frac{1}{n}\begin{bmatrix}\partial_{j}\ell_{1}(\hf_1(\theta_{(H-1)})) \\
\vdots \\
\partial_{j}\ell_{n}(\hf_n(\theta_{(H-1)})) \\
\end{bmatrix} \in \RR^{n}.
$$
\end{lemma}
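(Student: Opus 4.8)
The plan is to obtain $\frac{\partial \hLcal(\theta_{(H-1)})}{\partial\theta_{(H-1)}}$ by a direct chain-rule computation, leveraging the linearity already extracted in Lemma \ref{lemma:1}. First I would write $\hLcal(\theta_{(H-1)}) = \frac{1}{n}\sum_{i=1}^n \ell_i(\hf_i(\theta_{(H-1)}))$ and partition the parameter vector into its per-output blocks, $\theta_{(H-1)} = [\vect(\theta_{(H-1,1)})\T,\dots,\vect(\theta_{(H-1,m_y)})\T]\T$, so that the target gradient (transposed) is just the vertical stack over $j\in[m_y]$ of the blocks $\big(\partial \hLcal/\partial \vect(\theta_{(H-1,j)})\big)\T$. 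Matching these blocks to the right-hand side of the claimed identity is then the whole task.

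The crucial structural observation is that, after the model modification at line 10 of Algorithm \ref{algorithm:EEW}, the $\tsigma$-argument is frozen at $\theta^{\tau}$, so Lemma \ref{lemma:1} states that the map $\vect(\theta_{(H-1,j)}) \mapsto \hf_{X}(\theta_{(H-1)})_{*j}$ is \emph{linear}, with constant coefficient matrix $A_j := \phi(\theta_{(H-1,j)}^\tau,\theta_{(1:H-2)}^{\tau})[\diag(\theta_{(H,j)}^\tau)\otimes I_{m_{H-1}}]$. Moreover the $j$-th output column $\hf_{X}(\theta_{(H-1)})_{*j}$ depends only on its own block $\theta_{(H-1,j)}$ and not on $\theta_{(H-1,j')}$ for $j'\neq j$. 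Hence the Jacobian of $\hf_{X}(\theta_{(H-1)})_{*j}$ with respect to $\vect(\theta_{(H-1,j)})$ equals $A_j$ exactly (a constant matrix, independent of the current iterate), while with respect to any other block it vanishes.

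Then I would apply the chain rule to each summand $\ell_i(\hf_i(\theta_{(H-1)}))$. Writing $\hf_i(\theta_{(H-1)})_{j'} = (\hf_{X}(\theta_{(H-1)})_{*j'})_i$, the derivative of $\hLcal$ with respect to $\vect(\theta_{(H-1,j)})$ is $\frac{1}{n}\sum_{i=1}^n \sum_{j'=1}^{m_y} \partial_{j'}\ell_i(\hf_i(\theta_{(H-1)}))\,\frac{\partial \hf_i(\theta_{(H-1)})_{j'}}{\partial \vect(\theta_{(H-1,j)})}$. By the block-independence above only the $j'=j$ term survives, and $\frac{\partial \hf_i(\theta_{(H-1)})_j}{\partial \vect(\theta_{(H-1,j)})}$ is precisely the $i$-th row of $A_j$. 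Collecting the scalar factors $\frac{1}{n}\partial_{j}\ell_i(\hf_i(\theta_{(H-1)}))$ into the vector $r_j(\theta_{(H-1)})$ and transposing, the $j$-th block becomes $A_j\T\, r_j(\theta_{(H-1)})$, which, using symmetry of $\diag(\cdot)$ and $(\diag(v)\otimes I)\T = \diag(v)\otimes I$, equals $[\diag(\theta_{(H,j)}^\tau)\otimes I_{m_{H-1}}]\,\phi(\theta_{(H-1,j)}^\tau,\theta_{(1:H-2)}^{\tau})\T\, r_j(\theta_{(H-1)})$. Stacking over $j$ yields the asserted formula; the differentiability required to invoke the chain rule is furnished by Assumption \ref{assump:3}.

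I do not expect a genuine obstacle here, since the content is a routine chain-rule calculation. The only delicate point is tracking the block structure and the transpose conventions (row- versus column-Jacobians) carefully enough that the Kronecker factor ends up on the left of $\phi(\cdots)\T$. The step that does the real work is Lemma \ref{lemma:1}, which converts the a priori nonlinear dependence of $\hf_{X}$ on $\theta_{(H-1,j)}$ into an honest linear map; once that linearity and the block-diagonal independence across $j$ are in hand, the gradient formula follows immediately.
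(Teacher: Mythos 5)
Your proposal is correct and follows essentially the same route as the paper's proof: both invoke Lemma \ref{lemma:1} to read off that $\vect(\theta_{(H-1,j)}) \mapsto \hf_{X}(\theta_{(H-1)})_{*j}$ is linear with constant Jacobian $\phi(\theta_{(H-1,j)}^\tau,\theta_{(1:H-2)}^{\tau})[\diag(\theta_{(H,j)}^\tau)\otimes I_{m_{H-1}}]$, use the block-independence $\partial_{\vect(\theta_{(H-1,j)})}\hf_i(\theta_{(H-1)})_{k}=0$ for $k\neq j$ so that only the $j$-th term of the chain rule survives, and then collect the scalars $\frac{1}{n}\partial_j\ell_i$ into $r_j$ before transposing and stacking over $j$. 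The only cosmetic difference is that you make the symmetry of $\diag(\theta_{(H,j)}^\tau)\otimes I_{m_{H-1}}$ explicit when moving the Kronecker factor to the left of $\phi(\cdots)\T$, a step the paper performs implicitly.
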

\begin{proof}[Proof of Lemma \ref{lemma:3}]
The function $\hLcal$ is differentiable since $\ell_{i}$ is differentiable  (Assumption \ref{assump:3}), $\theta_{(H-1)}\mapsto \hf_{X}(\theta_{(H-1)})_{ij}$ is differentiable, and a composition of differentiable functions is differentiable.  From Lemma \ref{lemma:1}, \begin{align} \label{eq:6} 
\hf_{X}(\theta_{(H-1)})_{*j} =\phi_{}(\theta_{(H-1,j)}^\tau,\theta_{(1:H-2)}^{\tau})[\diag(\theta_{(H,j)}^\tau )\otimes I_{m_{H-1}}]\vect(\theta_{(H-1,j)}^{})\in \RR^{n }.
\end{align}
This implies that
\begin{align} \label{eq:7}
\partial_{\vect(\theta_{(H-1,j)}^{t})} \hf_{X}(\theta_{(H-1)})_{*j} =\phi_{}(\theta_{(H-1,j)}^\tau,\theta_{(1:H-2)}^{\tau})[\diag(\theta_{(H,j)}^\tau )\otimes I_{m_{H-1}}]\in \RR^{n \times  m_{H}m_{H-1}}.
\end{align}
Since $\partial_{\vect(\theta_{(H-1,j)})}f(x_{i},\theta)_{k}=0$ when $k\neq j$, 
\begin{align} \label{eq:8}
\nonumber \partial_{\vect(\theta_{(H-1,j)})}\hLcal(\theta_{(H-1)}) &=\frac{1}{n} \sum_{i=1}^n \partial\ell_{i}(\hf_i(\theta_{(H-1)}))\T\partial_{\vect(\theta_{(H-1,j)})}\hf_{i}(\theta_{(H-1)})
\\ \nonumber & =\frac{1}{n} \sum_{i=1}^n  \sum_{k=1}^{m_y} \partial_{k}\ell_{i}(\hf_i(\theta_{(H-1)}))\partial_{\vect(\theta_{(H-1,j)})}\hf_{i}(\theta_{(H-1)})_{k} \\\nonumber & =\frac{1}{n}  \sum_{i=1}^n \partial_{j}\ell_{i}(\hf_i(\theta_{(H-1)}))\partial_{\vect(\theta_{(H-1,j)})}\hf_{i}(\theta_{(H-1)})_{j} \\ & =r_{j}(\theta_{(H-1)})\T\partial_{\vect(\theta_{(H-1,j)})} f_{X}(\theta)_{*j}. 
\end{align}
Combining equations \eqref{eq:7} and \eqref{eq:8}, 
\begin{align}
\scalebox{0.94}{$\displaystyle \partial_{\vect(\theta_{(H-1,j)})}\Lcal(\theta) =r_{j}(\theta_{(H-1)})\T\phi_{}(\theta_{(H-1,j)}^\tau,\theta_{(1:H-2)}^{\tau})[\diag(\theta_{(H,j)}^\tau )\otimes I_{m_{H-1}}]\in \RR^{1 \times  m_{H}m_{H-1} }. $}
\end{align} 
This implies that 
\begin{align}
\left(\frac{\partial \hLcal(\theta_{(H-1)})}{\partial\theta_{(H-1)} }\right)\T=\begin{bmatrix}[\diag(\theta_{(H,1)}^\tau )\otimes I_{m_{H-1}}]\phi(\theta_{(H-1,1)}^\tau,\theta_{(1:H-2)}^{\tau})\T   r_{1}(\theta_{(H-1)}) \\
\vdots \\
[\diag(\theta_{(H,m_{y})}^\tau )\otimes I_{m_{H-1}}]\phi(\theta_{(H-1,m_{y})}^\tau,\theta_{(1:H-2)}^{\tau})\T   r_{m_{y}}(\theta_{(H-1)}) \\
\end{bmatrix}. 
\end{align}

\end{proof}

The following lemma proves the Lipschitz continuity of $\hLcal$ and derives the Lipschitz constant $\hL$:

\begin{lemma} \label{lemma:5}
Suppose Assumption \ref{assump:3} holds. Then, for any $\theta_{(H-1)}',\theta_{(H-1)} \in \RR^{d_{H-1}}$,
$$
\|\nabla \hLcal(\theta_{(H-1)}')-\nabla \hLcal(\theta_{(H-1)})\| \le \hL\|\theta'_{(H-1)}-\theta_{(H-1)}\|,
$$
where $\hL := \frac{L_\ell}{n^{}}  \|Z\|^2$, and $Z \in \RR^n$ with  
$$
Z_i := \max_j\|[\diag(\theta_{(H,j)}^\tau )\otimes I_{m_{H-1}}](\phi_{}(\theta_{(H-1,j)}^\tau,\theta_{(1:H-2)}^{\tau})_{i*})\T\|.
$$
\end{lemma}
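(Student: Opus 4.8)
The plan is to exploit the structural consequence of the model modification at line 10 of Algorithm \ref{algorithm:EEW}: once $\sigma_j$ is replaced by $\sigma_{R^{(j)}}$ with the \emph{frozen} $R^{(j)}=\theta_{(H-1,j)}^{\tau}$, the nonlinearity $\tsigma$ depends only on fixed parameters, so the map $\theta_{(H-1)}\mapsto \hf_X(\theta_{(H-1)})$ is affine in the optimization variable. Indeed, Lemma \ref{lemma:1} already supplies $\hf_X(\theta_{(H-1)})_{*j}=A_j\vect(\theta_{(H-1,j)})$ with the \emph{constant} matrix $A_j:=\phi(\theta_{(H-1,j)}^\tau,\theta_{(1:H-2)}^{\tau})[\diag(\theta_{(H,j)}^\tau)\otimes I_{m_{H-1}}]$, which does not depend on $\theta_{(H-1)}$. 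First I would package the $m_y$ linear maps into one per-sample map: since $\hf_i(\theta_{(H-1)})_j$ depends only on the block $\vect(\theta_{(H-1,j)})$, I can write $\hf_i(\theta_{(H-1)})=C_i\,\theta_{(H-1)}$, where $C_i$ is the block-diagonal matrix whose $j$-th block row is $(A_j)_{i*}$.

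The key estimate is that the spectral norm of this block-diagonal $C_i$ equals $Z_i$. Because $C_i$ is block diagonal and each block is a single row vector acting on a disjoint coordinate block, its spectral norm collapses to $\|C_i\|_2=\max_j\|(A_j)_{i*}\|$. Using that $\diag(\theta_{(H,j)}^\tau)\otimes I_{m_{H-1}}$ is symmetric, one checks $(A_j)_{i*}^\T=[\diag(\theta_{(H,j)}^\tau)\otimes I_{m_{H-1}}](\phi(\theta_{(H-1,j)}^\tau,\theta_{(1:H-2)}^{\tau})_{i*})^\T$, so $\max_j\|(A_j)_{i*}\|$ is precisely the $Z_i$ of the statement; hence $\|C_i\|_2=Z_i$. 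This is exactly where the maximum (rather than a sum) over $j$ in the definition of $Z_i$ is used, and it is what yields a clean constant with no spurious factor of $m_y$.

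With these identifications the gradient is $\nabla\hLcal(\theta_{(H-1)})=\tfrac1n\sum_{i=1}^n C_i^\T\nabla\ell_i(C_i\theta_{(H-1)})$, which is just the block-stacked form already derived in Lemma \ref{lemma:3}. I would then bound the gradient difference directly: for any $\theta_{(H-1)}',\theta_{(H-1)}$,
\begin{align*}
\|\nabla\hLcal(\theta_{(H-1)}')-\nabla\hLcal(\theta_{(H-1)})\|
&\le\frac1n\sum_{i=1}^n\|C_i\|_2\,\|\nabla\ell_i(C_i\theta_{(H-1)}')-\nabla\ell_i(C_i\theta_{(H-1)})\|\\
&\le\frac{L_\ell}{n}\sum_{i=1}^n\|C_i\|_2\,\|C_i(\theta_{(H-1)}'-\theta_{(H-1)})\|\\
&\le\frac{L_\ell}{n}\Big(\sum_{i=1}^n\|C_i\|_2^2\Big)\|\theta_{(H-1)}'-\theta_{(H-1)}\|,
\end{align*}
where the first line uses the triangle inequality and submultiplicativity of the spectral norm, the second uses Assumption \ref{assump:3} (the $L_\ell$-Lipschitzness of $\nabla\ell_i$), and the third uses submultiplicativity again. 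Substituting $\|C_i\|_2=Z_i$ gives $\sum_i\|C_i\|_2^2=\|Z\|^2$ and hence the claimed $\hL=\frac{L_\ell}{n}\|Z\|^2$.

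I expect the only genuinely delicate step to be the spectral-norm identity $\|C_i\|_2=Z_i$: one must argue that the block-diagonal structure reduces the operator norm to a maximum of row norms, and track the transposes through the $\diag(\cdot)\otimes I_{m_{H-1}}$ factor so that the result matches the $Z_i$ in the statement. Everything else is a routine chain-rule-plus-triangle-inequality computation once the affine representation of Lemma \ref{lemma:1} is in place.
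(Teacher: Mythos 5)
Your proposal is correct, and it reaches the stated constant $\hL=\frac{L_\ell}{n}\|Z\|^2$ by a route that is organized differently from the paper's. The paper works \emph{output-blockwise}: starting from Lemma \ref{lemma:3} it writes $\|\nabla\hLcal(\theta_{(H-1)}')-\nabla\hLcal(\theta_{(H-1)})\|^2=\sum_{j=1}^{m_y}\|A_j\T(r_j(\theta_{(H-1)}')-r_j(\theta_{(H-1)}))\|^2$ with $A_j:=\phi(\theta_{(H-1,j)}^\tau,\theta_{(1:H-2)}^{\tau})[\diag(\theta_{(H,j)}^\tau)\otimes I_{m_{H-1}}]$, bounds each block by the \emph{global} spectral norm $\bar M:=\max_j\|A_j\T\|_2$, converts the difference of the $r_j$'s into $\sum_i\|\hf_i(\theta_{(H-1)}')-\hf_i(\theta_{(H-1)})\|^2$ via Assumption \ref{assump:3}, bounds that by the row norms $Z_i$, and only at the very end replaces $\bar M$ using the comparison $\bar M^2\le\sum_{i=1}^n Z_i^2$ (spectral norm dominated by the sum of squared row norms). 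You instead work \emph{samplewise}: you package the affine maps of Lemma \ref{lemma:1} into per-sample block-diagonal Jacobians $C_i$, prove the exact identity $\|C_i\|_2=\max_j\|(A_j)_{i*}\|=Z_i$ (which is right: $C_iC_i\T$ is diagonal with entries $\|(A_j)_{i*}\|^2$ because the row supports are disjoint, and the transpose bookkeeping through the symmetric matrix $\diag(\theta_{(H,j)}^\tau)\otimes I_{m_{H-1}}$ is handled correctly), and then a single triangle-inequality-plus-submultiplicativity pass gives $\frac{L_\ell}{n}\sum_i\|C_i\|_2^2=\frac{L_\ell}{n}\|Z\|^2$ directly. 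What each approach buys: yours never introduces $\bar M$ and needs no spectral-versus-row-norm comparison, so it is shorter and the constant appears in one step; the paper's route, on the other hand, produces the sharper intermediate Lipschitz constant $\frac{L_\ell}{n}\bar M\|Z\|\le\frac{L_\ell}{n}\|Z\|^2$ before relaxing it to match the statement, so its bookkeeping, while heavier, is not wasted. Both proofs rest on the same two pillars — the affine representation of $\hf_X$ after the exploitation-phase freeze (Lemma \ref{lemma:1}) and the $L_\ell$-Lipschitzness of $\nabla\ell_i$ (Assumption \ref{assump:3}) — so the difference is genuinely one of decomposition (per-sample versus per-output-block), not of substance.
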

\begin{proof}[Proof of Lemma \ref{lemma:5}] 
Using Lemma \ref{lemma:3}, since $\nabla \hLcal(\theta_{(H-1)})=\left(\frac{\partial \hLcal(\theta_{(H-1)})}{\partial\theta_{(H-1)} }\right)\T$,
$$
 \nabla \hLcal(\theta_{(H-1)})=\begin{bmatrix}[\diag(\theta_{(H,1)}^\tau )\otimes I_{m_{H-1}}]\phi(\theta_{(H-1,1)}^\tau,\theta_{(1:H-2)}^{\tau})\T   r_{1}(\theta_{(H-1)}) \\
\vdots \\
[\diag(\theta_{(H,m_{y})}^\tau )\otimes I_{m_{H-1}}]\phi(\theta_{(H-1,m_{y})}^\tau,\theta_{(1:H-2)}^{\tau})\T   r_{m_{y}}(\theta_{(H-1)}) \\
\end{bmatrix}.  
$$
This implies that
for any $\theta_{(H-1)}',\theta_{(H-1)}$,
\begin{align}
&\|\nabla \hLcal(\theta_{(H-1)}')-\nabla \hLcal_{}^{}(\theta_{(H-1)})\|^{2}
\\ & = \left\|\begin{bmatrix}[\diag(\theta_{(H,1)}^\tau )\otimes I_{m_{H-1}}]\phi(\theta_{(H-1,1)}^\tau,\theta_{(1:H-2)}^{\tau})\T(   r_{1}(\theta_{(H-1)}')-   r_{1}(\theta_{(H-1)})) \\
\vdots \\
[\diag(\theta_{(H,m_{y})}^\tau )\otimes I_{m_{H-1}}]\phi(\theta_{(H-1,m_{y})}^\tau,\theta_{(1:H-2)}^{\tau})\T   (r_{m_{y}}(\theta_{(H-1)}')-r_{m_{y}}(\theta_{(H-1)})) \\
\end{bmatrix} \right\|^2
\\ & = \sum_{j=1}^{m_y}  \| [\diag(\theta_{(H,j)}^\tau )\otimes I_{m_{H-1}}]\phi(\theta_{(H-1,j)}^\tau,\theta_{(1:H-2)}^{\tau})\T(   r_{j}(\theta_{(H-1)}')-   r_{j}(\theta_{(H-1)})) \|^2
\\ & \le\sum_{j=1}^{m_y}  \| [\diag(\theta_{(H,j)}^\tau )\otimes I_{m_{H-1}}]\phi(\theta_{(H-1,j)}^\tau,\theta_{(1:H-2)}^{\tau})\T\|^2 _{2}\|   r_{j}(\theta_{(H-1)}')-   r_{j}(\theta_{(H-1)}) \|^2   
\\ & \scalebox{0.87}{$\displaystyle =\frac{1}{n^{2}} \sum_{j=1}^{m_y}  \| [\diag(\theta_{(H,j)}^\tau )\otimes I_{m_{H-1}}]\phi(\theta_{(H-1,j)}^\tau,\theta_{(1:H-2)}^{\tau})\T\|^2_2  \sum_{i=1}^n (\partial_{j}\ell_{i}(\hf_i(\theta_{(H-1)}'))-   \partial_{j}\ell_{i}(\hf_i(\theta_{(H-1)})))^2 $} 
\end{align} 
From Assumption \ref{assump:3}, we also have that for all $q,q' \in \RR$, 
$$
\sum_{j=1}^{m_y}  (\partial_{j} \ell_{i}(q) - \partial_{j} \ell_{i}(q'))^2 =\|\nabla \ell_{i}(q) - \nabla \ell_{i}(q')\|^{2} \le L_\ell ^{2}\|q - q'\|^{2}.
$$ 
Using this and $\bar M:=\max_j\| [\diag(\theta_{(H,j)}^\tau )\otimes I_{m_{H-1}}]\phi(\theta_{(H-1,j)}^\tau,\theta_{(1:H-2)}^{\tau})\T\|_2$, 
\begin{align}
&\|\nabla \hLcal(\theta_{(H-1)}')-\nabla \hLcal_{}^{}(\theta_{(H-1)})\|^{2}
\\ & \le \scalebox{0.85}{$\displaystyle \frac{1}{n^{2}} \sum_{j=1}^{m_y}  \| [\diag(\theta_{(H,j)}^\tau )\otimes I_{m_{H-1}}]\phi(\theta_{(H-1,j)}^\tau,\theta_{(1:H-2)}^{\tau})\T\|^2_2  \sum_{i=1}^n (\partial_{j}\ell_{i}(\hf_i(\theta_{(H-1)}'))-   \partial_{j}\ell_{i}(\hf_i(\theta_{(H-1)})))^2 $}
\\ & \le \frac{1}{n^{2}}\bar M^{2} \sum_{i=1}^n  \sum_{j=1}^{m_y} (\partial_{j}\ell_{i}(\hf_i(\theta_{(H-1)}'))-   \partial_{j}\ell_{i}(\hf_i(\theta_{(H-1)})))^2
\\ & \le \frac{1}{n^{2}}L_\ell ^{2}\bar M^{2} \sum_{i=1}^n     \|\hf_i(\theta_{(H-1)}')- \hf_i(\theta_{(H-1)})\|^{2}.
\end{align}
Since $\hf_i(\theta_{(H-1)})_{j} =\phi_{}(\theta_{(H-1,j)}^\tau,\theta_{(1:H-2)}^{\tau})_{i*}[\diag(\theta_{(H,j)}^\tau )\otimes I_{m_{H-1}}]\vect(\theta_{(H-1,j)}^{})$ from Lemma \ref{lemma:1}, \begin{align}
&\|\nabla \hLcal(\theta_{(H-1)}')-\nabla \hLcal_{}^{}(\theta_{(H-1)})\|^{2}
\\ & \le\frac{1}{n^{2}} L_\ell ^{2}\bar M^{2} \sum_{i=1}^n     \|\hf_i(\theta_{(H-1)}')- \hf_i(\theta_{(H-1)})\|^{2}
\\ & \le \scalebox{0.8}{$\displaystyle \frac{1}{n^{2}}L_\ell ^{2}\bar M^{2} \sum_{i=1}^n \sum_{j=1}^{m_y} \|[\diag(\theta_{(H,j)}^\tau )\otimes I_{m_{H-1}}](\phi_{}(\theta_{(H-1,j)}^\tau,\theta_{(1:H-2)}^{\tau})_{i*})\T\|^{2}   \|\vect(\theta'_{(H-1,j)})-\vect(\theta_{(H-1,j)})\|^{2}.$}
\end{align}
Using $Z_i=\max_j\|[\diag(\theta_{(H,j)}^\tau )\otimes I_{m_{H-1}}](\phi_{}(\theta_{(H-1,j)}^\tau,\theta_{(1:H-2)}^{\tau})_{i*})\T\|$,
\begin{align}
&\|\nabla \hLcal(\theta_{(H-1)}')-\nabla \hLcal_{}^{}(\theta_{(H-1)})\|^{2}
\\ & \le \frac{1}{n^{2}}L_\ell ^{2}\bar M^{2} \sum_{i=1}^n Z_i^{2}\sum_{j=1}^{m_y} \|\vect(\theta'_{(H-1,j)})-\vect(\theta_{(H-1,j)})\|^{2}. \\ & = \left(L_\ell ^{2}\bar M^{2} \frac{1}{n^{2}}\sum_{i=1}^n Z_i^{2}\right) \|\theta'_{(H-1)}-\theta_{(H-1)}\|^{2}. 
\end{align}
This implies that 
$$
\|\nabla \hLcal(\theta_{(H-1)}')-\nabla \hLcal(\theta_{(H-1)})\| \le \left(L_\ell \bar M\frac{1}{n^{}} \sqrt{\sum_{i=1}^n Z_i^{2}}\right) \|\theta'_{(H-1)}-\theta_{(H-1)}\| 
$$
Since
\begin{align}
\bar M^{2}=\max_j\| [\diag(\theta_{(H,j)}^\tau )\otimes I_{m_{H-1}}]\phi(\theta_{(H-1,j)}^\tau,\theta_{(1:H-2)}^{\tau})\T\|_2^{2} \le\sum_{i=1}^n Z_i^{2}, 
\end{align} 
we have that 
$$
\|\nabla \hLcal(\theta_{(H-1)}')-\nabla \hLcal(\theta_{(H-1)})\| \le \left( \frac{L_\ell}{n^{}} \sum_{i=1}^n Z_i^{2}\right) \|\theta'_{(H-1)}-\theta_{(H-1)}\|.
$$
By defining $Z \in \RR^n$ with $Z_{i} = \max_j\|[\diag(\theta_{(H,j)}^\tau )\otimes I_{m_{H-1}}](\phi_{}(\theta_{(H-1,j)}^\tau,\theta_{(1:H-2)}^{\tau})_{i*})\T\|$,
we have 
$$
\|\nabla \hLcal(\theta_{(H-1)}')-\nabla \hLcal(\theta_{(H-1)})\| \le \frac{L_\ell}{n^{}}  \|Z\|^2 \|\theta'_{(H-1)}-\theta_{(H-1)}\|.
$$

\end{proof}

For the  convenient referencing during our proofs, we label the assumption  on $\bg^t$  in Theorem \ref{thm:1} as follows:

\begin{assumption} \label{assump:4}
There exist $\bc>0$ and $\uc>0$ such that for any $t \ge \tau$,
$$
\uc \|\nabla\hLcal(\theta_{(H-1)}^{t})\|_2^2 \le\nabla\hLcal(\theta_{(H-1)}^{t})\T \bg^t, \text{ and }  
$$
$$
 \|\bg^t\|_2^2 \le \bc \|\nabla\hLcal(\theta_{(H-1)}^{t})\|_2^2.
$$
\end{assumption}
Note that this is not an additional assumption. This is already assumed in Theorem \ref{thm:1}.
For example, Assumption \ref{assump:4} is satisfied by using any optimizer $\tGcal$ such that for all $t\ge \tau$, 
$$
\bg^t =  D^t\nabla\hLcal(\theta_{(H-1)}^{t}), 
$$
where $D$ is a positive definite symmetric matrix with eigenvalues in the interval $[\uc, \sqrt{\bc}]$. If we set $D^{t}=I$, then we satisfy  Assumption \ref{assump:4} with $\uc=\bc =1$. 

With Lemmas \ref{lemma:2}--\ref{lemma:5} along with the labeling of the assumption, we are now ready to provide the proof of Theorem \ref{thm:1}:

\begin{proof}[Proof of Theorem \ref{thm:1}] 
 The function $\hLcal$ is differentiable since $\ell_{i}$ is differentiable  (Assumption \ref{assump:3}), $\theta_{(H-1)}\mapsto \hf_{X}(\theta_{(H-1)})_{ij}$ is differentiable, and a composition of differentiable functions is differentiable. We will first show that in both cases of (i) and (ii) for the learning rates, we have $\lim_{t \rightarrow \infty}\nabla\hLcal(\theta_{(H-1)}^{t}) = 0$. If $\nabla\hLcal(\theta_{(H-1)}^{t})=0$ at any $t\ge \tau$, then
Assumption \ref{assump:4} ($\|\bg^t\|_2^2 \le \bc \|\nabla\hLcal(\theta_{(H-1)}^{t})\|_2^2$) implies  $\bg^t=0$, which implies 
$$
\theta_{(H-1)}^{t+1}= \theta^{t}_{(H-1)} \text{ and } \nabla\hLcal(\theta_{(H-1)}^{t+1})=\nabla\hLcal(\theta_{(H-1)}^{t})=0.
$$
This means that  if $\nabla\hLcal(\theta_{(H-1)}^{t})=0$ at any $t\ge \tau$, we have that  $\bg^t=0$ and $\nabla\hLcal(\theta_{(H-1)}^{t})=0$ for all $t \ge \tau$ and hence
\begin{align} 
\lim_{t \rightarrow \infty}\nabla\hLcal(\theta_{(H-1)}^{t}) = 0,
\end{align}
as desired. Therefore, we now focus on the remaining scenario where $\nabla\hLcal(\theta_{(H-1)}^{t})\allowbreak \neq 0$ for all $t \ge \tau$. 

By using Lemma \ref{lemma:known_1} and Lemma \ref{lemma:5}, 
$$
\hLcal(\theta_{(H-1)}^{t+1})\le \hLcal(\theta_{(H-1)}^{t})-\alpha^t  \nabla\hLcal(\theta_{(H-1)}^{t})  \T\bg^t  + \frac{\hL(\alpha^t)^2  }{2} \|\bg^t \|^2.
$$
By rearranging and using Assumption \ref{assump:4},
\begin{align}
\hLcal(\theta_{(H-1)}^{t})-\hLcal(\theta_{(H-1)}^{t+1}) &\ge \alpha^t  \nabla\hLcal(\theta_{(H-1)}^{t})  \T\bg^t  - \frac{\hL(\alpha^t)^2  }{2} \|\bg^t \|^2
\\ & \ge\alpha^t  \uc \|\nabla\hLcal(\theta_{(H-1)}^{t})\|^2 -\frac{\hL(\alpha^t)^2  }{2}\bc \|\nabla\hLcal(\theta_{(H-1)}^{t})\|^2.
\end{align}
By simplifying the right-hand-side, 
\begin{align} \label{eq:9}
 \hLcal(\theta_{(H-1)}^{t})-\hLcal(\theta_{(H-1)}^{t+1}) &\ge\alpha^t   \|\nabla\hLcal(\theta_{(H-1)}^{t})\|^2 (\uc-\frac{\hL\alpha^t  }{2}\bc). 
\end{align}

Let us now focus on case (i). Then, using $ \alpha^t \le \frac{\uc\ (2-\epsilon)}{\hL\bc}$, 
$$
\frac{\hL\alpha^t}{2}\bc\le\frac{\hL\uc\ (2-\epsilon)}{2\hL\bc}\bc =\uc-\frac{\epsilon}{2}\uc.
$$
Using this inequality and using $\epsilon\le \alpha^t$ in equation \eqref{eq:9},
\begin{align} \label{eq:10}
\hLcal(\theta_{(H-1)}^{t})-\hLcal(\theta_{(H-1)}^{t+1}) &\ge\frac{\uc\epsilon^{2}}{2}
\|\nabla\hLcal(\theta_{(H-1)}^{t})\|^2 .
\end{align}
Since $\nabla\hLcal(\theta_{(H-1)}^{t})\neq 0$ for any $t\ge \tau$ (see above) and $\epsilon >0 $, this means that the sequence $(\hLcal(\theta_{(H-1)}^{t}\allowbreak ))_{t\ge \tau}$ is monotonically decreasing. Since $\hLcal(q) \ge 0$ for any $q$ in its domain, this implies that the sequence $(\hLcal(\theta_{(H-1)}^{t}))_{t\ge \tau}$ converges. Therefore, $\hLcal(\theta_{(H-1)}^{t})-\hLcal(\theta_{(H-1)}^{t+1}) \rightarrow 0$  as $t \rightarrow \infty$. Using equation \eqref{eq:10}, this implies that $$
\lim_{t \rightarrow \infty}\nabla\hLcal(\theta_{(H-1)}^{t}) = 0,
$$
which proves the desired result for the case (i). 

We now focus on the case (ii). Then,  equation \eqref{eq:9} still holds. Since $\lim_{t \rightarrow \infty}\alpha^t =0$ in equation \eqref{eq:9}, the first order term in $\alpha^t$ dominates after sufficiently large $t$: i.e., there exists $\bar t \ge \tau$ such that for any $t\ge \bar t$, 
\begin{align} \label{eq:11}
\hLcal(\theta_{(H-1)}^{t})-\hLcal(\theta_{(H-1)}^{t+1}) \ge c \alpha^t   \|\nabla\hLcal(\theta_{(H-1)}^{t})\|^2.
\end{align}
for some constant $c>0$. Since $\nabla\hLcal(\theta_{(H-1)}^{t})\neq 0$ for any $t\ge \tau$ (see above) and $c \alpha^t>0$, this means that the sequence $(\hLcal(\theta_{(H-1)}^{t}))_{t\ge \tau}$ is monotonically decreasing.  Since $\hLcal(q) \ge 0$ for any $q$ in its domain, this implies that the sequence $(\hLcal(\theta_{(H-1)}^{t}))_{t\ge \tau}$ converges to a finite value. Thus, by adding Eq. \eqref{eq:11} both sides over all $t \ge \bar t$, 
        \begin{align} 
              \infty >  \hLcal(\theta_{(H-1)}^{\bar t})-\lim_{t \rightarrow \infty}\hLcal(\theta_{(H-1)}^{t}) \ge c  \sum _{t=\bar t}^\infty \alpha^{t}   \|\nabla\hLcal(\theta_{(H-1)}^{t})\|^2.
        \end{align}        
Since $\sum _{t=0}^\infty \alpha^{t}  = \infty$, this implies that  $\liminf _{t\to \infty }\|\nabla\hLcal(\theta^{t}_{(H-1)})\|=0$. We now show that by contradiction, $ \limsup_{t\to \infty }\|\nabla\hLcal(\theta^{t}_{(H-1)})\|=0$. Suppose that $\limsup_{t\to \infty }\|\allowbreak \nabla\hLcal(\theta^{t}_{(H-1)})\| > 0$. Then, there exists $\delta>0$ such that $\limsup_{t\to \infty }\|\nabla\hLcal(\theta^{t}_{(H-1)})\|\ge \delta$. Since $\liminf _{t\to \infty }\|\nabla\hLcal(\theta^{t}_{(H-1)})\|=0$ and $\limsup_{t\to \infty }\|\nabla\hLcal(\theta^{t}_{(H-1)})\| \allowbreak \ge \delta$, let $(\rho_j)_{j}$ and $(\rho'_j)_j$ be  sequences of indexes such that $\rho_j<\rho'_j<\rho_{j+1}$, $\|\nabla\hLcal(\theta^{t}_{(H-1)})\|>\frac{\delta}{3}$ for $\rho_j \le t < \rho_j'$, and  $\|\nabla\hLcal(\theta^{t}_{(H-1)})\|\le \frac{\delta}{3}$ for $\rho_j '\le t < \rho_{j+1}$. Since $\sum _{t=\bar t}^\infty \alpha^{t}   \|\nabla\hLcal(\theta_{(H-1)}^{t})\|^2< \infty$, let $\bar j$ be sufficiently large such that  $\sum _{t=\rho_{\bar j}}^\infty \alpha^{t}   \|\nabla\hLcal(\theta_{(H-1)}^{t})\|^2<\frac{\delta^2}{9 \hL\sqrt{\bc}}$. Then, for any $j \ge \bar j$ and any $\rho$ such that $\rho_j \le \rho \le \rho'_j -1$, we have that 
\begin{align}
\|\nabla\hLcal(\theta_{(H-1)}^{\rho})\|-\|\nabla\hLcal(\theta_{(H-1)}^{\rho_j'})\|&\le\|\nabla\hLcal(\theta_{(H-1)}^{\rho_j'})-\nabla\hLcal(\theta_{(H-1)}^{\rho})\|
\\ &=\left\|\sum_{t=\rho}^{\rho'_j-1}\nabla\hLcal(\theta_{(H-1)}^{t+1})-\nabla\hLcal(\theta_{(H-1)}^{t})\right\|
 \\ & \le \sum_{t=\rho}^{\rho'_j-1}\left\|\nabla\hLcal(\theta_{(H-1)}^{t+1})-\nabla\hLcal(\theta_{(H-1)}^{t})\right\|  \\ & \le  \hL  \sum_{t=\rho}^{\rho'_j-1}\left\|\theta_{(H-1)}^{t+1}-\theta_{(H-1)}^{t}\right\|
  \\ & \le  \hL\sqrt{\bc}  \sum_{t=\rho}^{\rho'_j-1} \alpha^{t}   \left\|\nabla\hLcal(\theta_{(H-1)}^{t})\right\|
\end{align}
where the first and third lines use the triangle inequality (and symmetry), the forth line uses the fact that     $\|\nabla\hLcal(\theta_{(H-1)})-\nabla\hLcal(\theta'_{(H-1)})\|\le \hL \|\theta_{(H-1)}-\theta'_{(H-1)}\|$, and the last line follows the  facts of  $\theta_{(H-1)}^{t+1}-\theta_{(H-1)}^{t}=-\alpha^t \bg$ and  $\|\bg^t\|^2 \le \bc \|\nabla\hLcal(\theta_{(H-1)}^{t})\|^2
$.  Then, by using the definition of the sequences of the indexes, 
\begin{align}
\|\nabla\hLcal(\theta_{(H-1)}^{\rho})\|-\|\nabla\hLcal(\theta_{(H-1)}^{\rho_j'})\|&\le \frac{3 \hL\sqrt{\bc}}{\delta}  \sum_{t=\rho}^{\rho'_j-1} \alpha^{t}   \left\|\nabla\hLcal(\theta_{(H-1)}^{t})\right\|^2 \le \frac{\delta}{3}.
\end{align}
 Here, since  $\|\nabla\hLcal(\theta_{(H-1)}^{\rho_j'})\|\le \frac{\delta}{3}$, by rearranging the inequality, we have that for any $\rho\ge \rho_{\bar j}$, 
 \begin{align}
\|\nabla\hLcal(\theta_{(H-1)}^{\rho})\|\le\frac{2\delta}{3}.
\end{align}
This contradicts the inequality of  $\limsup_{t\to \infty }\|\nabla\hLcal(\theta^{t}_{(H-1)})\|\ge \delta$. Thus, we have 
 \begin{align}
\limsup_{t\to \infty }\|\nabla\hLcal(\theta^{t}_{(H-1)})\| =  \liminf _{t\to \infty }\|\nabla\hLcal(\theta^{t}_{(H-1)})\|= 0.
\end{align}
This implies that 
        \begin{equation}
                \lim_{t \rightarrow \infty}\nabla\hLcal(\theta_{(H-1)}^{t}) = 0,
\end{equation}
which proves the desired result for the case (ii). 
Therefore, in both cases of (i) and (ii) for the learning rates, we have $\lim_{t \rightarrow \infty}\nabla\hLcal(\theta_{(H-1)}^{t}) = 0$. 

We now use the fact that $\lim_{t \rightarrow \infty}\nabla\hLcal(\theta_{(H-1)}^{t}) = 0$, in order to  prove the statement of this theorem. Using Lemma \ref{lemma:3}, $\lim_{t \rightarrow \infty}\nabla\hLcal(\theta_{(H-1)}^{t}) = 0$ implies that for all $j\in\{1,\dots,m_y\}$,
\begin{align} \label{eq:12}
[\diag(\theta_{(H,j)}^\tau )\otimes I_{m_{H-1}}]\phi(\theta_{(H-1,j)}^\tau,\theta_{(1:H-2)}^{\tau})\T   \left(\lim_{t \rightarrow \infty}r_{j}(\theta_{(H-1)}^{t})\right) = 0.
\end{align}
Here, using Lemma \ref{lemma:4}, with probability one, for any $j \in \{1,\dots, m_y\}$,
\begin{align} \label{eq:13}
\rank([\diag(\theta_{(H,j)}^\tau )\otimes I_{m_{H-1}}]\phi(\theta_{(H-1,j)}^\tau,\theta_{(1:H-2)}^{\tau})\T)=n.
\end{align}
Because equation \eqref{eq:12} implies that $\lim_{t \rightarrow \infty}r_{1}(\theta_{(H-1)}^{t}) $ is in the null space of $[\diag(\theta_{(H,1)}^\tau \allowbreak )\otimes \allowbreak I_{m_{H-1}}]\phi(\allowbreak \theta_{(H-1,1)}^\tau,\theta_{(1:H-2)}^{\tau})\T   $ and because equation \eqref{eq:13} implies that the null space of $[\diag(\theta_{(H,1)}^\tau )\otimes I_{m_{H-1}}]\phi(\allowbreak \theta_{(H-1,1)}^\tau,\theta_{(1:H-2)}^{\tau})\T$ contains only zero, we have that with probability one, for any $j \in \{1,\dots, m_y\}$,
\begin{align} \label{eq:15}
\lim_{t \rightarrow \infty}r_{j}(\theta_{(H-1)}^{t}) = 0.
\end{align}
Note that the statement of this theorem vacuously holds if there is not limit point of the sequence $(\theta^t)_{t}$. Thus, let $\htheta$ be a limit point of  of the sequence $(\theta^t)_{t}$, and  $\htheta_{(H-1)}$ be the corresponding limit point for the $H-1$ layer. Then, equation \eqref{eq:15} implies that 
$$
r_{j}(\htheta_{(H-1)}) = 0.
$$
Using the definition of $r_{j}$, this implies that with probability one, 
\begin{align} \label{eq:14}
\forall j \in \{1,\dots, m_y\}, \forall i \in \{1,\dots,n\}, \ \
 \partial_{j}\ell_{i}(\hf_i(\htheta_{(H-1)}^{})) = 0.
\end{align}
Since  the function $\ell_i$ is invex, there exists a function $\varphi_{i}: \RR^{m_y}  \times \RR^{m_y} \rightarrow  \RR^{m_y}$ such that  for any $q_{i}\in \RR^{m_y}$,
 $$
\ell_{i}(q_{i}) \ge   \ell_{i}(\hf_i(\htheta_{(H-1)}^{})) +\partial\ell_{i}(\hf_i(\htheta_{(H-1)}^{}))  \varphi_{i}(q,\hf_i(\htheta_{(H-1)}^{})) 
$$ 
Combining this with equation \eqref{eq:14} yields that $$
 \forall i \in \{1,\dots,n\}, \forall q_{i}\in \RR^{m_y}, \ \ \ell_{i}(q_{i}) \ge      \ell_{i}(\hf_i(\htheta_{(H-1)}^{})). 
$$
This implies that
$$
\forall q\in\RR^{m_y \times n},  \ \ \frac{1}{n} \sum_{i=1}^n \ell_{i}(q_{*i}) \ge\frac{1}{n}\sum_{i=1}^{n}\ell_{i}(\hf_i(\htheta_{(H-1)}^{}))=\Lcal(\htheta). 
$$
This proves that with probability one, $\Lcal(\htheta) \ge \Lcal(\theta^{*})$ for all $\theta^{*} \in \RR^d$, since for all $\theta^{*} \in \RR^d$, there exists a $q\in\RR^{m_y \times n}$ such that $\Lcal(\theta^{*} ) \ge \frac{1}{n} \sum_{i=1}^n \ell_{i}(q_{*i}).$ Since $\htheta$ was chosen to be  an arbitrary limit point of   the sequence $(\theta^t)_{t}$, this completes the proof of this theorem.

\end{proof}

\subsection{Proof of Theorem \ref{thm:2}} \label{sec:app:1:5}
We  build up on the proof of  Theorem \ref{thm:1} from the previous subsection in order to let the proof of Theorem \ref{thm:2} concise. The function $\hLcal$ is differentiable since $\ell_{i}$ is differentiable  (Assumption \ref{assump:3}), $\theta_{(H-1)}\mapsto \hf_{X}(\theta_{(H-1)})_{ij}$ is differentiable, and a composition of differentiable functions is differentiable. From Lemma \ref{lemma:1} from the   proof of  Theorem \ref{thm:1}, \begin{align} \label{eq:16}
\hf_{X}(\theta_{(H-1)})_{*j} =\phi(\theta_{(H-1,j)}^\tau,\theta_{(1:H-2)}^{\tau})[\diag(\theta_{(H,j)}^\tau )\otimes I_{m_{H-1}}]\vect(\theta_{(H-1,j)}^{}).
\end{align}
Using Lemma \ref{lemma:4}, with probability one, for any $j \in \{1,\dots, m_y\}$,
\begin{align} \label{eq:17}
\rank([\diag(\theta_{(H,j)}^\tau )\otimes I_{m_{H-1}}]\phi(\theta_{(H-1,j)}^\tau,\theta_{(1:H-2)}^{\tau})\T)=n.
\end{align}
Since  $f_X(w)_{*j} \in \RR^{n}$, equations \eqref{eq:16}--\eqref{eq:17} imply that 
with probability one, 
$$
\{f_X(\theta) \in \RR^{n \times m_y}:\theta\in \RR^d \} = \{\hf_{X}(\theta_{(H-1)})_{} \in \RR^{n \times m_y}:\theta_{(H-1)}\in \RR^{d_{H-1}} \}.
$$
Thus,  for any     $\theta^{*} \in \RR^d$, there exists $\theta_{(H-1)}'$ such that $\hf_{X}(\theta_{(H-1)}')=f_X(\theta^{*})$ and   
$\hLcal(\allowbreak \theta_{(H-1)}')=\Lcal(\theta^*)$. This implies that for any $\bepsilon \ge 0$ and $\theta^{*} \in \RR^d$, there exists $\theta_{(H-1)}'$ such that\begin{align}
  \max(\hLcal(\theta_{(H-1)}'), \bepsilon)=\max(\Lcal(\theta^*), \bepsilon).
\end{align}
 This further implies that  for any $ \bepsilon \ge 0$ and  $\theta_{(H-1)} \in\Theta_{\bepsilon}$,
\begin{align}
 \max(\hLcal(\theta_{(H-1)}), \bepsilon)\le \max(\Lcal(\theta^*), \bepsilon), \ \ \ \forall \theta^{*} \in \RR^d.
\end{align}
Therefore, for any $ \bepsilon \ge 0$, by defining $\theta_{(H-1)}^* = \argmin_{\theta_{(H-1)} \in\Theta_{\bepsilon} }\|\theta_{(H-1)}- \theta_{(H-1)}^{\tau}\|$, 
\begin{align} \label{eq:step_2_theorem_0_2}
\hLcal(\theta_{(H-1)}^{*}) \le \max(\hLcal(\theta_{(H-1)}^{*}), \bepsilon)\le \max(\Lcal(\theta^*), \bepsilon), \ \ \ \forall \theta^{*} \in \RR^d.
\end{align}

The proof of Theorem \ref{thm:2} utilizes \eqref{eq:step_2_theorem_0_2} along with  the following well-known fact:

\begin{lemma} \label{lemma:known_2}
 For any differentiable function $\varphi: \RR^{d_{\varphi}} \rightarrow \RR$ and  any $q^{t+1},q^t\in \RR^{d_{\varphi}}$ such that  $q^{t+1}=q^t- \frac{1}{\hL}\nabla \varphi(q^t)$, the following holds:
\text{for all $q\in \RR^{d_{\varphi}}$},
$$
\nabla \varphi(q^{t})\T (q-q^{t}) + \frac{\hL}{2} \|q-q^{t}\|^2 - \frac{\hL}{2} \|q-q^{t+1}\|^2 =\nabla \varphi(q^{t})\T (q^{t+1}-q^{t})+\frac{\hL}{2} \|q^{t+1}-q^{t}\|^2. $$
\end{lemma}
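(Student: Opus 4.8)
The plan is to treat this as a purely algebraic identity, since neither convexity nor the Lipschitz property of $\varphi$ is invoked anywhere in the statement; the only structural input is the gradient-descent update $q^{t+1}=q^t-\frac{1}{\hL}\nabla\varphi(q^t)$. First I would record the immediate consequence of this update rule, namely $\nabla\varphi(q^t)=-\hL(q^{t+1}-q^t)$, and abbreviate $\delta=q^{t+1}-q^t$ so that $\nabla\varphi(q^t)=-\hL\delta$. Everything then reduces to bookkeeping in terms of $\delta$ and $q-q^t$.

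Next I would handle the difference of the two squared norms on the left-hand side using the elementary polarization identity $\|a\|^2-\|b\|^2=(a-b)\T(a+b)$ with $a=q-q^t$ and $b=q-q^{t+1}$. This gives $a-b=q^{t+1}-q^t=\delta$ and $a+b=2(q-q^t)-\delta$, so that $\frac{\hL}{2}\bigl(\|q-q^t\|^2-\|q-q^{t+1}\|^2\bigr)=\frac{\hL}{2}\delta\T\bigl[2(q-q^t)-\delta\bigr]=\hL\,\delta\T(q-q^t)-\frac{\hL}{2}\|\delta\|^2$. Substituting $\hL\delta=-\nabla\varphi(q^t)$ shows that the cross term $\hL\,\delta\T(q-q^t)=-\nabla\varphi(q^t)\T(q-q^t)$ cancels exactly the leading term $\nabla\varphi(q^t)\T(q-q^t)$ of the left-hand side, leaving the left-hand side equal to $-\frac{\hL}{2}\|\delta\|^2$.

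Finally I would compute the right-hand side independently: using $\nabla\varphi(q^t)\T\delta=(-\hL\delta)\T\delta=-\hL\|\delta\|^2$, the expression $\nabla\varphi(q^t)\T(q^{t+1}-q^t)+\frac{\hL}{2}\|q^{t+1}-q^t\|^2$ collapses to $-\hL\|\delta\|^2+\frac{\hL}{2}\|\delta\|^2=-\frac{\hL}{2}\|\delta\|^2$. Since both sides equal $-\frac{\hL}{2}\|q^{t+1}-q^t\|^2$, the claimed equality follows for every $q\in\RR^{d_\varphi}$.

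There is no genuine obstacle here, as the result is a one-line identity once the update rule is inserted; the only thing to guard against is a sign error in the norm-difference expansion or in the substitution $\nabla\varphi(q^t)=-\hL\delta$. For that reason I would verify the cancellation by checking that each side independently reduces to $-\frac{\hL}{2}\|q^{t+1}-q^t\|^2$, which makes the computation self-checking.
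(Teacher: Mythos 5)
Your proof is correct and takes essentially the same route as the paper's: both substitute the update rule $\nabla\varphi(q^t)=-\hL(q^{t+1}-q^t)$ and verify the identity by pure algebra, passing through the same intermediate value $-\frac{\hL}{2}\|q^{t+1}-q^t\|^2$ for both sides. The only difference is organizational --- you use the polarization identity $\|a\|^2-\|b\|^2=(a-b)\T(a+b)$ to shortcut the norm expansion, whereas the paper expands all inner products in full; the substance is identical.
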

\begin{proof}[Proof of Lemma \ref{lemma:known_2}]
Using $q^{t+1}=q^t- \frac{1}{\hL}\nabla \varphi(q^t)$, which implies $\nabla \varphi(q^t)=\hL(q^t- q^{t+1})$, 
\begin{align}
&\nabla \varphi(q^{t})\T (q-q^{t}) + \frac{\hL}{2} \|q-q^{t}\|^2 - \frac{\hL}{2} \|q-q^{t+1}\|^2 
\\ &=\hL(q^t- q^{t+1})\T(q-q^{t})   + \frac{\hL}{2} \|q-q^{t}\|^2 - \frac{\hL}{2} \|q-q^{t+1}\|^2 
\\ & = \scalebox{0.82}{$\displaystyle \hL\left( q\T\ q^t - \|q^t\|^2-q\T q^{t+1}+ (q^{t})\T q^{t+1}+ \frac{1}{2} \|q\|^2+ \frac{1}{2} \|q^{t}\|^2 -q\T q^t- \frac{1}{2} \|q\|^2-\frac{1}{2} \|q^{t+1}\|^2+q\T q^{t+1} \right) $}
\\ &=\hL\left(  (q^{t})\T q^{t+1}- \frac{1}{2} \|q^{t}\|^2 -\frac{1}{2} \|q^{t+1}\|^2 \right)
\\ & = - \frac{\hL}{2} \|q^{t+1} - q^{t}\|^{2}
\\ & = -\hL(q^{t+1} - q^{t})\T (q^{t+1} - q^{t})+ \frac{\hL}{2}   \|q^{t+1} - q^{t}\|^{2}
\\ & = \nabla \varphi(q^t)(q^{t+1} - q^{t})+ \frac{\hL}{2}   \|q^{t+1} - q^{t}\|^{2}.
\end{align}
\end{proof}

With equation \eqref{eq:step_2_theorem_0_2} and Lemma  \ref{lemma:known_2} along with lemmas from the previous subsection, we are ready to complete  the proof of Theorem \ref{thm:2}:

\begin{proof}[Proof of Theorem \ref{thm:2} (i)]
 Let $t > \tau$.  Using Lemma \ref{lemma:known_1} and Lemma \ref{lemma:5},\begin{align} \label{eq:step_2_theorem_1_2}
\hLcal(\theta_{(H-1)}^{t+1})\le \hLcal(\theta_{(H-1)}^{t})+ \nabla  \hLcal(\theta_{(H-1)}^{t})\T (\theta_{(H-1)}^{t+1}-\theta_{(H-1)}^{t}) + \frac{\hL}{2} \|\theta_{(H-1)}^{t+1}-\theta_{(H-1)}^{t}\|^2 .
\end{align}
Using \eqref{eq:step_2_theorem_1_2} and $\nabla\hLcal(\theta_{(H-1)}^{t})=\hL_{}(\theta_{(H-1)}^t-\theta_{(H-1)}^{t+1}) $,
\begin{align} \label{eq:step_2_theorem_2_2}
\nonumber \hLcal^{}(\theta_{(H-1)}^{t+1}) & \le\hLcal^{}(\theta_{(H-1)}^{t})- \hL_{} \|\theta_{(H-1)}^{t+1}-\theta_{(H-1)}^{t}\|^2 +\frac{\hL}{2} \|\theta_{(H-1)}^{t+1}-\theta_{(H-1)}^{t}\|^2
\\\nonumber  & =\hLcal^{}(\theta_{(H-1)}^{t})- \frac{\hL}{2} \|\theta_{(H-1)}^{t+1}-\theta_{(H-1)}^{t}\|^2 \\ & \le\hLcal(\theta_{(H-1)}^{t}),
 \end{align}
which shows that $\hLcal(\theta_{(H-1)}^{t})$ is non-increasing in $t$.  
Using \eqref{eq:step_2_theorem_1_2} and Lemma \ref{lemma:known_2}, for any $z\in \RR^{d_{H-1}}$, 
\begin{align}  \label{eq:step_2_theorem_3_2}
&\hLcal^{}(\theta_{(H-1)}^{t+1})
\\ \nonumber &\le \hLcal^{}(\theta_{(H-1)}^{t})+ \nabla\hLcal^{}(\theta_{(H-1)}^{t})\T (z^{}-\theta_{(H-1)}^{t}) + \frac{\hL}{2} \|z^{}-\theta_{(H-1)}^{t}\|^2 -\frac{\hL}{2} \|z-\theta_{(H-1)}^{t+1}\|^2 .  
\end{align}
Using \eqref{eq:step_2_theorem_3_2} and the facts that    $\ell_{i}$ is convex (from the condition of this theorem) and that $\hLcal^{}$ is a nonnegative sum of the compositions of  $\ell_{i}$ and the affine map (Lemma \ref{lemma:1}), we have that for any $z\in \RR^{d_{H-1}}$, 
\begin{align} \label{eq:step_2_theorem_4_2}
\hLcal(\theta_{(H-1)}^{t+1}) \le\hLcal^{}(z)+ \frac{\hL}{2} \|z^{}-\theta_{(H-1)}^{t}\|^2 -\frac{\hL}{2} \|z-\theta_{(H-1)}^{t+1}\|^2 . 
\end{align}
Summing up both sides of \eqref{eq:step_2_theorem_4_2} and using \eqref{eq:step_2_theorem_2_2}, \begin{align} 
 &(t-\tau)\hLcal^{}(\theta_{(H-1)}^{t})
 \\& \le\sum_{k=\tau}^{t-1} \hLcal(\theta_{(H-1)}^{t+1}) \le (t-\tau)\hLcal^{}(z)+ \frac{\hL}{2} \|z^{}-\theta_{(H-1)}^{\tau}\|^2 -\frac{\hL}{2} \|z-\theta_{(H-1)}^{t}\|^2,   
\end{align}
which implies that for any $z\in \RR^{d_{H-1}}$, 
\begin{align} 
\hLcal^{}(\theta_{(H-1)}^{t})\le\hLcal(z)+ \frac{\hL\|z^{}-\theta_{(H-1)}^{\tau}\|^2}{2(t-\tau)}.
\end{align}
By setting $z=\theta_{(H-1)}^*$ and using \eqref{eq:step_2_theorem_0_2}, we have that for any $\bepsilon\ge 0$ and for any $\theta^{*} \in \RR^d$,
$$
\Lcal(\theta^t) = \hLcal(\theta_{(H-1)}^{t})\le\hLcal(\theta_{(H-1)}^{*})+ \frac{\hL\|\theta_{(H-1)}^{*}-\theta_{(H-1)}^{\tau}\|^2}{2(t-\tau)}\le \max(\Lcal(\theta^*), \bepsilon)+ \frac{\hL B^2_{\bepsilon}}{2(t-\tau)}.
$$
This implies the statement of  Theorem \ref{thm:2} \textit{(i)}.
\end{proof}

\begin{proof}[Proof of Theorem \ref{thm:2} (ii)]
Let $t > \tau$. Using the conditions of Theorem \ref{thm:2} (ii), with probability one, 
\begin{align}
&\EE[\|\theta_{(H-1)}^{t+1}-\theta_{(H-1)}^{*}\| ^{2}\mid \theta^{t}]
\\ &=\EE[\|\theta_{(H-1)}^{t}-\theta_{(H-1)}^{*}-\alpha^t \bg^t \| ^{2}\mid \theta^{t}]
\\ & =\|\theta_{(H-1)}^{t}-\theta_{(H-1)}^{*}\|^{2}  -2\alpha^t \EE[ \bg^t \mid \theta^{t}]\T (\theta_{(H-1)}^{t}-\theta_{(H-1)}^{*})+(\alpha^t)^{2}\EE[ \|\bg^t\|^2 \mid \theta^{t}] 
\\ & \le\|\theta_{(H-1)}^{t}-\theta_{(H-1)}^{*}\|^{2}  -2\alpha^t (\hLcal^{}(\theta_{(H-1)}^{t})-\hLcal^{}(\theta_{(H-1)}^{*}))+(\alpha^t)^{2}\EE[ \|\bg^t\|^2 \mid \theta^{t}], 
\end{align}
where the last line follows from the facts that    $\ell_{i}$ is convex (from the condition of this theorem) and that $\hLcal$ is a nonnegative sum of the compositions of  $\ell_{i}$ and the affine map (Lemma \ref{lemma:1}). Taking expectation over $\theta^{t}$,
\begin{align}
&\EE[\|\theta_{(H-1)}^{t+1}-\theta_{(H-1)}^{*}\| ^{2}] 
\\ &\le \EE[\|\theta_{(H-1)}^{t}-\theta_{(H-1)}^{*}\|^{2}]-2(\alpha^t) (\EE[\hLcal^{}(\theta_{(H-1)}^{t})]-\hLcal^{}(\theta_{(H-1)}^{*}))+(\alpha^t)^{2}G^{2}.  
\end{align}
By recursively applying this inequality over $t$,
\begin{align}
&\EE[\|\theta_{(H-1)}^{t+1}-\theta_{(H-1)}^{*}\| ^{2}] 
\\ & \le\|\theta_{(H-1)}^{\tau}-\theta_{(H-1)}^{*}\| ^{2}-2 \sum_{k=\tau}^t \alpha_{k} (\EE[\hLcal^{}(\theta_{(H-1)}^{k})]-\hLcal^{}(\theta_{(H-1)}^{*}))+G^{2}  \sum_{k=\tau}^t\alpha_{k}^{2}.  
\end{align}
Since $\|\theta_{(H-1)}^{t+1}-\theta_{(H-1)}^{*}\| ^{2} \ge 0$,
 \begin{align}
 2 \sum_{k=\tau}^t \alpha_{k}  \EE[\hLcal^{}(\theta_{(H-1)}^{k})] \le \left(2 \sum_{k=\tau}^t  \alpha_{k} \right) \hLcal^{}(\theta_{(H-1)}^{*})+ B^{2}_{\bepsilon}+G^{2}  \sum_{k=\tau}^t\alpha_{k}^{2}.
\end{align} 
 Using equation \eqref{eq:step_2_theorem_0_2}, we have that for any $\bepsilon\ge 0$, 
\begin{align}
 \min_{k=\tau,\tau+1,\dots, t}\EE[\Lcal_{}^{}(\theta_{}^{k})]   \le  \max(\Lcal(\theta^*), \bepsilon)+ \frac{B^{2}_{\bepsilon}+G^{2}  \sum_{k=\tau}^t\alpha_{k}^{2}}{ 2 \sum_{k=\tau}^t \alpha_{k}}, \ \ \ \forall \theta^{*} \in \RR^d.
\end{align} 
Using Jensen's inequality and the concavity of the minimum function, it holds that for any $\bepsilon\ge 0$,  
\begin{align}
 \EE\left[\min_{k=\tau,\tau+1,\dots, t}\Lcal_{}(\theta_{}^{k})] \right]  \le\max(\Lcal(\theta^*), \bepsilon)+ \frac{B^{2}_{\bepsilon}+G^{2}  \sum_{k=\tau}^t\alpha_{k}^{2}}{ 2 \sum_{k=\tau}^t \alpha_{k}}, \ \ \ \forall \theta^{*} \in \RR^d.
\end{align} 
This implies the statement of  Theorem \ref{thm:2} \textit{(ii)}.

\end{proof}

\section{Experiments} \label{sec:app:2}
The purpose of experiments in this paper differs from those of  empirical studies. In empirical studies,   experiments are often the main source of the arguments. However, the main claims in this paper are mathematically proved. Accordingly,  the purpose of the experiments in this  paper is to deepen our theoretical understanding (with experiments in  Section \ref{sec:3}) and provide  evidence to additionally support our theory and what is already predicted by a well-known hypothesis (with experiments in  Section \ref{sec:9}). The experiments were implemented in PyTorch \citep{paszke2019pytorch}.
We additionally provide experimental details in the following.

\subsection{Experimental Setup for Section \ref{sec:3}}

\paragraph{Model.} 
We used a fully-connected   deep neural network and a convolutional deep neural network.
For the fully-connected    network,
we used four layers and 300 neurons per hidden layer. The input dimensions were two for the two-moons dataset, and one for the
sine wave dataset. The output dimension of the fully-connected network was one for both datasets. Each  entry of the weight matrices in the fully-connected layers was  initialized independently by the normal distribution with the standard deviation of $1/\sqrt{m}$  with $m=300$ for all layers. We set the  activation functions of all layers  to be softplus $\sigma(z)=\ln(1+\exp(\varsigma z))/\varsigma$ with $\varsigma=100$, which  approximately behaves as the ReLU activation   as shown in Appendix \ref{sec:app:3}. For the convolutional   deep neural network, we used  the following standard variant of LeNet \citep{lecun1998gradient} with five layers: (1) input layer, (2) convolutional hidden layer  with $8m_c  $ filters of size $5$-by-$5$ with softplus activation $\sigma(z)=\ln(1+\exp(\varsigma z))/\varsigma$ with $\varsigma=100$, (3) convolutional hidden layer  with $8m_c  $ filters of size $5$-by-$5$ with softplus activation $\sigma(z)=\ln(1+\exp(\varsigma z))/\varsigma$ with $\varsigma=100$,
(4)
fully-connected hidden layer with $128m_c$ output units with softplus activation $\sigma(z)=\ln(1+\exp(\varsigma z))/\varsigma$ with $\varsigma=100$, (5) fully-connected output layer.
All the model parameters  were initialized by the default initialization of PyTorch version 1.4.0 \cite{NEURIPS2019_9015}, which is based on the implementation of \cite{he2015delving}.

\paragraph{Data.}  For the fully-connected   deep neural network, we used the standard two-moons dataset \citep{scikitlearn} and a sine wave dataset. Whereas the two-moons dataset  is known to be easily learnable, a sine wave dataset with a high frequency is known to be challenging for fitting.
For the two-moons dataset, we generated 100 training samples using the scikit-learn command,  sklearn.datasets.make\_moons \citep{scikitlearn}. To generate the  sine wave dataset, we  randomly generated the input $x_i $  from the uniform distribution on  the interval $[-1,1]$ and set $y_i=\one\{\sin(20x_i)<0\}\in \RR$ for all $i \in [n]$ with $n=100$. For the convolutional   deep neural network, we used the Semeion dataset \citep{srl1994semeion} and a random dataset. The Semeion dataset has 1593 data points and we randomly selected  1000 data points  as training data points. The random dataset was created  by randomly generating each pixel of the input image  $x_i \in \RR^{16\times 16 \times 1}$  from the standard normal distribution  and by  sampling $y_i$ uniformly from $\{0,1\}$ for all $i \in [n]$ with $n=1000$.  

\paragraph{Training.} 
For each dataset, we used (mini-batch) stochastic gradient descent (SGD) with mini-batch size of 100. This effectively results in (full-batch) gradient decent for the two-moons dataset and sine wave dataset (because they only have 100 data points), whereas it behaves as SGD  for the Semeion dataset and the random dataset (because they have 1000 data points). We used the cross-entropy loss as the training loss. We fixed the momentum coefficient to be 0.9 for the convolutional network and 0.98 for the fully-connected network. Under this setting, deep neural networks have been empirically observed to have implicit regularization effects (e.g., see  \citep{poggio2017theory}).
We set the learning rate to be 0.1 for all the experiments. We fixed the end epoch to be 1000 for the fully-connected networks and 400 for the convolutional network.

\subsection{Experimental Setup and Additional Results for Section \ref{sec:9}} 
We provide  additional details and  results for the experiments with the sine wave dataset in Appendix \ref{sec:new:1} and image datasets  in Appendix \ref{sec:new:2}. 

\subsubsection{Sine Wave Dataset}  \label{sec:new:1}
 For the sine wave dataset, we also verified that the safe-exploration condition (Assumption \ref{assump:1}) is satisfied in the setting of the experiment in  Section \ref{sec:9}.
We set  $\varepsilon=0.01, \tau=2000$, and $\tGcal=\Gcal$ for the EE wrapper $\Acal$ for the sine wave dataset. We fixed the last epoch to be 10000. All other settings were the same as those for Subsection \ref{sec:3}. That is, to generate the  sine wave dataset, we  randomly generated the input $x_i $  from the uniform distribution on  the interval $[-1,1]$ and set $y_i=\one\{\sin(20x_i)<0\}\in \RR$ for all $i \in [n]$ with $n=100$. We used (mini-batch) stochastic gradient descent (SGD) with a mini-batch size of 100. We fixed the momentum coefficient to be 0.98 and the learning rate to be 0.1.
We used a fully-connected   deep neural network with  four layers and 300 neurons per hidden layer. The input dimension and the output dimension were one.  We set the  activation functions of all layers  to be softplus $\sigma(z)=\ln(1+\exp(\varsigma z))/\varsigma$ with $\varsigma=100$.  
We set $\tsigma(q)= \frac{1}{1+e^{-\varsigma'  q}}$ with $\varsigma' =1000$ for the EE wrapper $\Acal$. Each  entry of the weight matrices  was  initialized independently by the normal distribution with the standard deviation of $1/\sqrt{m}$  with $m=300$ for all layers. For the purpose of the random trials, we repeated this random initialization for three independent random trials with different random seeds in Figure \ref{fig:3}. In Figure \ref{fig:4}, we reported the result of the first trial.  

\subsubsection{Image Datasets} \label{sec:new:2}

In the following, we provide the details of experiments for image datasets. We used  the exactly same setting across the experiments for test performances, training behaviors and computational time. For the experiments on the effect of learning rates and optimizers, we changed the learning rates and optimizers as explained in Section \ref{sec:9}.

\paragraph{Model.} 
We used the standard (convolutional) pre-activation ResNet with $18$ layers \citep{he2016identity}. We set the  activation  to be the softplus function $q\mapsto\ln(1+\exp(\varsigma q))/\varsigma$ with $\varsigma=100$ for all layers of the base ResNet. This  approximates the ReLU activation  well as shown in Appendix \ref{sec:app:3}. We set $\tsigma(q)= \frac{1}{1+e^{-q}}$ for the EE wrapper $\Acal$. All the model parameters  were initialized by the default initialization of PyTorch version 1.4.0 \cite{NEURIPS2019_9015}, which is based on the implementation of \cite{he2015delving}. As $m_{H-1}=513$ (512 + the constant neuron for the bias term) for  the standard ResNet, we set $m_H = \ceil{2( n/m_{H-1})}$ throughout all the experiments with the ResNet.

\paragraph{Data.}

We adopted the standard benchmark datasets  --- MNIST \citep{lecun1998gradient}, CIFAR-10 \allowbreak \citep{krizhevsky2009learning},  CIFAR-100  \citep{krizhevsky2009learning},  Semeion \citep{srl1994semeion}, Kuzushiji-MNIST (KMNIST) \citep{clanuwat2019deep} and SVHN \citep{netzer2011reading}.  We used  all the training and testing data points exactly as provided
by those datasets, except for the Semeion dataset. For the Semeion dataset, the default split of training and testing data points is not provided and  thus we randomly selected  1000 data   points as training data points from  1593 data points.
The remaining 593 points were used as testing data points. In the case of  data-augmentation, we used the standard data-augmentation of images for each datasets by using \texttt{torchvision.transforms}:  random crop (with \texttt{RandomCrop(28, padding=2})) and random affine transformation (with \texttt{RandomAffine(15, scale =(0.85, 1.15})) for MNIST; random crop (with \texttt{RandomCrop(32, padding =4})) and horizontal flip (with \texttt{RandomHorizontalFlip}) for CIFAR-10, CIFAR-100 and SVHN; random crop (with \texttt{RandomCrop(16, padding=1})) and random affine transformation (with \texttt{RandomAffine(4, scale=(1.05, 1.05})) for Semeion;
random crop (with \texttt{RandomCrop(28, padding=2})) for Kuzushiji-MNIST (KMNIST).

\paragraph{Training.}

We employed the cross-entropy loss as the training loss and a standard algorithm, SGD, with its standard hyper-parameter setting for the  algorithm $\Gcal$  with $\tGcal_{}=\Gcal$: i.e.,  we let the mini-batch size be 64, the weight decay rate be $10^{-5}$, the momentum coefficient be $0.9$, the learning rate  be   $\alpha^{t}=0.1$, the last epoch $\hat T$ be 200 (with data augmentation) and 100  (without data augmentation). The hyper-parameters $\varepsilon$ and $\tau=\tau_0 \hat T$ were selected from $\varepsilon\in \{10^{-3},10^{-5}\}$  and $\tau_0 \in \{0.4,0.6,0.8\}$ by only using  training data. That is,  we randomly divided  each training data (100\%)\ into a smaller training data (80\%)\ and a validation data (20\%) for a grid search over the hyper-parameters. \ In other words,  this grid search only used  a smaller training data and a small validation data without using any testing data.
Tables \ref{tbl:7}--\ref{tbl:8} show the results of this grid search for each dataset with data augmentation, whereas Tables \ref{tbl:9}--\ref{tbl:10} summarise the results  for each dataset without  data augmentation. 
Based on these results, we fixed the hyper-parameters to be the values with the  underline in each table: e.g.,  $\varepsilon=10^{-3}$ and $\tau_0=0.8$ for MNIST with data augmentation based on Table \ref{tbl:7}. \begin{table}[!ht]
\begin{minipage}[b]{0.5\linewidth}\centering
\caption{Validation error (\%): MNIST} \label{tbl:7}
\centering 
\renewcommand{\arraystretch}{0.8} \fontsize{11.pt}{11.pt}\selectfont
\begin{tabular}{|l||*{4}{c|}}
\hline
\backslashbox{$\varepsilon$ }{$\tau_0$} 
&\makebox[1em]{0.4}
&\makebox[1em]{0.6}
&\makebox[1em]{0.8}
\\\hline\hline
$10^{-3}$ 
& 
0.65
& 
0.55& \uline{0.50}
\\\hline
$10^{-5}$ 
&
0.55
& 0.65 & 0.59\\\hline
\end{tabular}
\end{minipage}
\begin{minipage}[b]{0.5\linewidth}\centering
\centering 
\renewcommand{\arraystretch}{0.8} \fontsize{11.pt}{11.pt}\selectfont
\caption{Validation error (\%): CIFAR-10 }
\begin{tabular}{|l||*{4}{c|}}
\hline
\backslashbox{$\varepsilon$ }{$\tau_0$} 
&\makebox[1em]{0.4}
&\makebox[1em]{0.6}
&\makebox[1em]{0.8}
\\\hline\hline
$10^{-3}$ 
& 
8.51
& 
8.39& 8.52\\\hline
$10^{-5}$ 
&
8.68
& \uline{8.03} & 8.22 \\\hline
\end{tabular}
\end{minipage}
\end{table}

\begin{table}[!ht]
\begin{minipage}[b]{0.5\linewidth}\centering
\caption{Validation error (\%): CIFAR-100 }
\centering \renewcommand{\arraystretch}{0.8} \fontsize{11.pt}{11.pt}\selectfont
\begin{tabular}{|l||*{4}{c|}}
\hline
\backslashbox{$\varepsilon$ }{$\tau_0$} 
&\makebox[1em]{0.4}
&\makebox[1em]{0.6}
&\makebox[1em]{0.8}
\\\hline\hline
$10^{-3}$ 
& 
31.18
& 
\uline{30.92}& 31.50
\\\hline
$10^{-5}$ 
&
31.18
& 31.43 & 31.21 \\\hline
\end{tabular}
\end{minipage}
\begin{minipage}[b]{0.5\linewidth}\centering
\centering \renewcommand{\arraystretch}{0.8} \fontsize{11.pt}{11.pt}\selectfont
\caption{Validation error (\%): Semeion } 
\begin{tabular}{|l||*{4}{c|}}
\hline
\backslashbox{$\varepsilon$ }{$\tau_0$} 
&\makebox[1em]{0.4}
&\makebox[1em]{0.6}
&\makebox[1em]{0.8}
\\\hline\hline
$10^{-3}$ 
& 
6.00
& 
4.00& 4.00
\\\hline
$10^{-5}$ 
&
\uline{2.50}
& 4.00 & 4.50 \\\hline
\end{tabular}
\end{minipage}
\end{table}

\begin{table}[!ht]
\begin{minipage}[b]{0.5\linewidth}\centering
\caption{Validation error (\%): KMNIST}  
\centering \renewcommand{\arraystretch}{0.8} \fontsize{11.pt}{11.pt}\selectfont
\begin{tabular}{|l||*{4}{c|}}
\hline
\backslashbox{$\varepsilon$ }{$\tau_0$} 
&\makebox[1em]{0.4}
&\makebox[1em]{0.6}
&\makebox[1em]{0.8}
\\\hline\hline
$10^{-3}$ 
& 
0.51
& 
\uline{0.50}& 0.59
\\\hline
$10^{-5}$ 
&
0.60& 0.53 & 0.55 \\\hline
\end{tabular}
\end{minipage}
\begin{minipage}[b]{0.5\linewidth}\centering
\centering \renewcommand{\arraystretch}{0.8} \fontsize{11.pt}{11.pt}\selectfont
\caption{Validation error (\%): SVHN} \label{tbl:8}
\begin{tabular}{|l||*{4}{c|}}
\hline
\backslashbox{$\varepsilon$ }{$\tau_0$} 
&\makebox[1em]{0.4}
&\makebox[1em]{0.6}
&\makebox[1em]{0.8}
\\\hline\hline
$10^{-3}$ 
& 
5.08
& 
\uline{4.91}& 5.26
\\\hline
$10^{-5}$ 
&
5.20
&  5.12 & 5.20 \\\hline
\end{tabular}
\end{minipage}
\end{table}

\begin{table}[!ht]
\begin{minipage}[b]{1.0\linewidth}\centering
\caption{Validation error (\%) for MNIST without data augmentation}  \label{tbl:9}
\centering \renewcommand{\arraystretch}{0.8} \fontsize{11.pt}{11.pt}\selectfont
\begin{tabular}{|l||*{4}{c|}}
\hline
\backslashbox{$\varepsilon$ }{$\tau_0$} 
&\makebox[1em]{0.4}
&\makebox[1em]{0.6}
&\makebox[1em]{0.8}
\\\hline\hline
$10^{-3}$ 
& 
0.55
& 
0.51& \uline{0.50}
\\\hline
$10^{-5}$ 
&
0.54
& 0.62& 0.62\\\hline
\end{tabular}
\end{minipage}
\end{table}
\begin{table}[!ht]
\begin{minipage}[b]{1.0\linewidth}\centering
\centering \renewcommand{\arraystretch}{0.8} \fontsize{11.pt}{11.pt}\selectfont
\caption{Validation error (\%) for CIFAR-10   without data augmentation}
\begin{tabular}{|l||*{4}{c|}}
\hline
\backslashbox{$\varepsilon$ }{$\tau_0$} 
&\makebox[1em]{0.4}
&\makebox[1em]{0.6}
&\makebox[1em]{0.8}
\\\hline\hline
$10^{-3}$ 
& 
16.00
& 
16.02& 16.47\\\hline
$10^{-5}$ 
&
\uline{15.65}
& 16.18 & 15.86 \\\hline
\end{tabular}
\end{minipage}
\end{table}

\begin{table}[H]
\begin{minipage}[b]{1.0\linewidth}\centering
\caption{Validation error (\%) for CIFAR-100  without data augmentation}  \label{tbl:10}
\centering \renewcommand{\arraystretch}{0.8} \fontsize{11.pt}{11.pt}\selectfont
\begin{tabular}{|l||*{4}{c|}}
\hline
\backslashbox{$\varepsilon$ }{$\tau_0$} 
&\makebox[1em]{0.4}
&\makebox[1em]{0.6}
&\makebox[1em]{0.8}
\\\hline\hline
$10^{-3}$ 
& 
\uline{48.65}
& 
51.98 & 51.37
\\\hline
$10^{-5}$ 
&
50.83
& 50.48& 51.74 \\\hline
\end{tabular}
\end{minipage}
\begin{minipage}[b]{0.5\linewidth}\centering
\centering 

\end{minipage}
\end{table}

\section{Additional Discussion} \label{sec:app:3}
\paragraph{On softplus activation.} Throughout the experiments in this paper, we use the softplus function $q\mapsto\ln(1+\exp(\varsigma q))/\varsigma$ with $\varsigma=100$, as a real analytic (and differentiable) function that approximates the ReLU activation. This approximation is of high accuracy  as shown in Figure \ref{fig:softplus_vs_relu}.

\begin{figure}[!ht]
\centering
\begin{subfigure}[b]{0.35\columnwidth}
  \includegraphics[width=1.0\columnwidth,height=0.65\columnwidth]{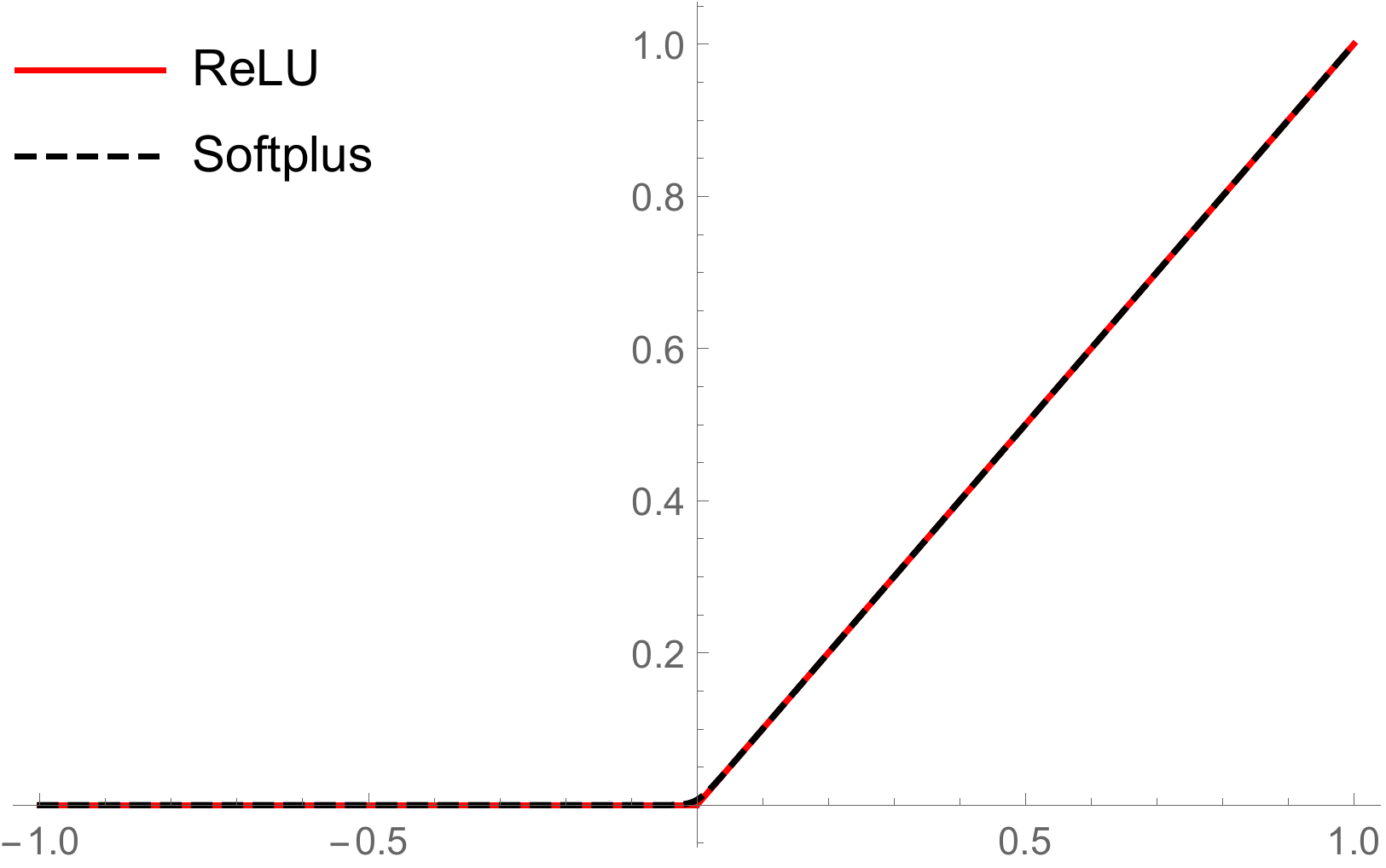}
  \caption{$x$-axis scale $[-1,1]$}
\end{subfigure}
\hspace{20pt}
\begin{subfigure}[b]{0.35\columnwidth}
  \includegraphics[width=1.0\columnwidth,height=0.65\columnwidth]{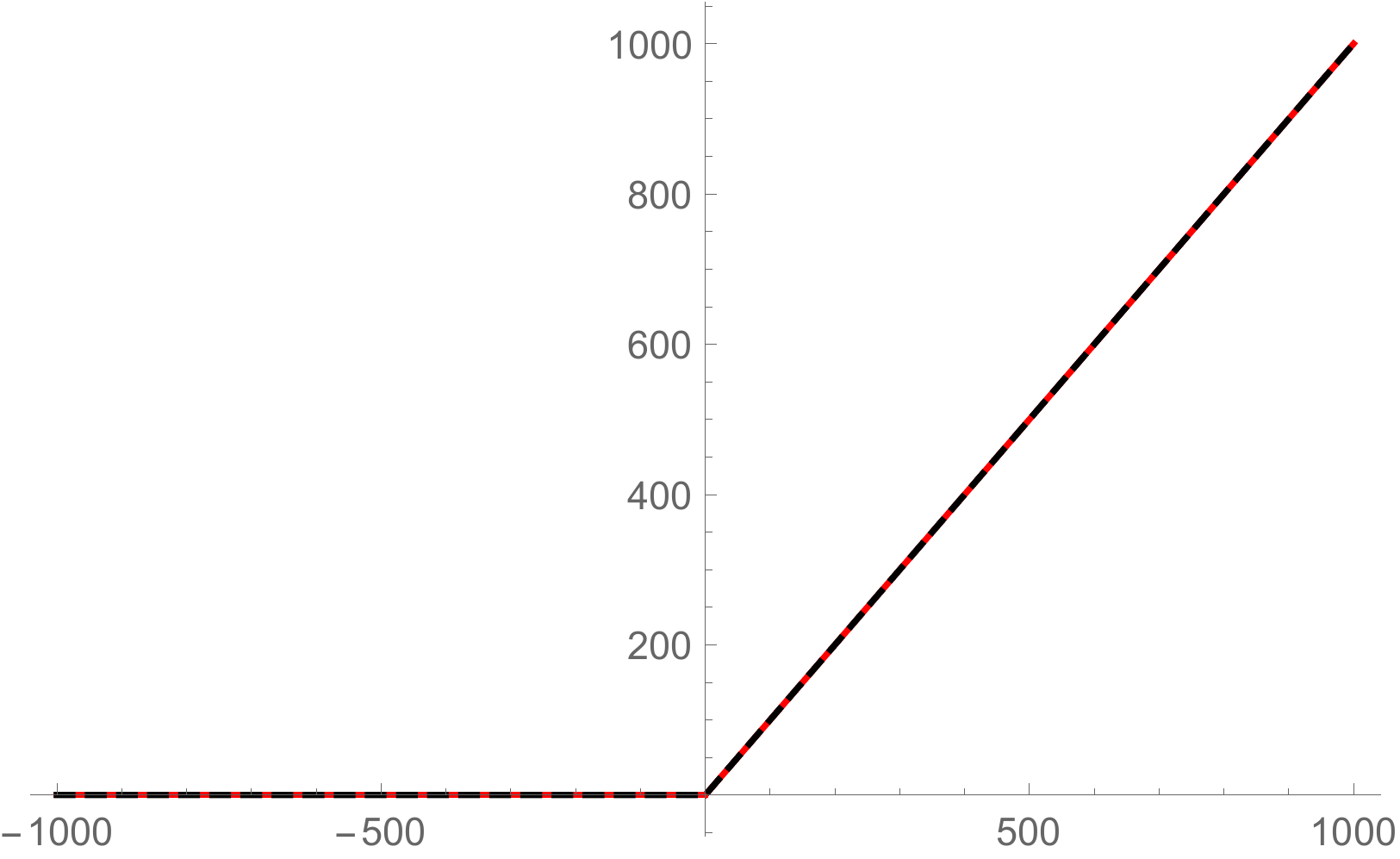}
  \caption{$x$-axis scale $[-1000,1000]$}
\end{subfigure}
\captionof{figure}{ReLU   versus Softplus   $q\mapsto\ln(1+\exp(\varsigma q))/\varsigma$ with $\varsigma=100$. The plot lines  of ReLU  (orange line) and Softplus (black dashed line) coincide in the figure.} 
\label{fig:softplus_vs_relu} 
\end{figure}

\paragraph{On Theorem \ref{thm:p:1}.}
Theorem \ref{thm:p:1} predicts that the upper bound on the global optimality gap ($\min_{t \in [T]}\Lcal(\theta^{t}) -\inf_{\theta \in \RR^d} \Lcal(\theta)$) decreases towards zero whenever $\vect(Y_{\ell}) \allowbreak \in \Col(\frac{\partial \vect(f_{X}(\theta^{t}))}{\partial \theta^{t}})$ at each iteration. This follows from equations \eqref{eq:19} and \eqref{eq:p:5}. Moreover, Theorem \ref{thm:p:1} implies that $\epsilon$-near global minimum  values are achieved for any $\epsilon>0$, if $\vect(Y_{\ell}) \in \Col(\frac{\partial \vect(f_{X}(\theta^{t}))}{\partial \theta^{t}})$ at sufficiently  many iterations $t$ for both regression and classification losses.

\paragraph{On cross-entropy losses with Theorem \ref{thm:p:1}.} Note that even with $\eta=1$, the value of  $\Lcal^*(\eta Y_\ell)$ in Theorem \ref{thm:p:1} is that  of the correct classification with zero classification error. Moreover, given any desired accuracy $\epsilon>0$, there exists  $\eta \in \RR$ to obtain $\epsilon$-near global optimality for the binary and multi-class cross-entropy losses with Theorem \ref{thm:p:1}. However, unlike the squared loss case, we cannot set $\epsilon=0$ as it requires $\eta= \infty$ for which the right-hand-side of equation \eqref{eq:p:5} yields infinity. This is consistent with the properties of the logistic loss and multi-class cross-entropy loss. This  shows that our theory is consistent with true behaviors of the logistic loss and multi-class cross-entropy loss as desired

\paragraph{On the EE wrapper $\Acal$.} The EE wrapper $\Acal$  is designed to provide prior guarantees while \textit{not hurting} practical performances,  instead of improving them.
We can use any  initialization for the parameter vector $\theta^0$  at line 4 of Algorithm \ref{algorithm:EEW}: i.e., a non-standard poor initialization can  degrade the value of the factor $\hL$ in our convergence rate. At   line  2 of Algorithm \ref{algorithm:EEW},  one fully-connected last layer is added if the original network had one fully-connected last layer, and  two fully-connected last layers are added if the original had no fully-connected last layer. The model modifications can also be  automated.

\end{document}